\theoremstyle{thmstyleone}%
\theoremstyle{thmstyletwo}%
\theoremstyle{thmstylethree}%
\DeclareMathAlphabet\mathbfcal{OMS}{cmsy}{b}{n}
\begin{document}


\title{Out-of-Distribution Generalization for Neural Physics Solvers}








\author[1,2]{\fnm{Zhao} \sur{Wei}}\email{weiz@a-star.edu.sg}

\author*[1,2]{\fnm{Chin Chun} \sur{Ooi}}\email{ooicc@a-star.edu.sg}

\author[2]{\fnm{Jian Cheng} \sur{Wong}}\email{wongj@a-star.edu.sg}

\author[3]{\fnm{Abhishek} \sur{Gupta}}\email{abhishekgupta@iitgoa.ac.in}

\author[2]{\fnm{Pao-Hsiung} \sur{Chiu}}\email{chiuph@a-star.edu.sg}

\author[1,2,4]{\fnm{Yew-Soon} \sur{Ong}}\email{asysong@ntu.edu.sg}

\affil*[1]{\orgdiv{Centre for Frontier AI Research (CFAR)}, \orgname{Agency for Science, Technology and Research (A*STAR)}, \orgaddress{\street{1 Fusionopolis Way, \#16-16 Connexis}, \city{Singapore}, \postcode{138632}, 
\country{Republic of Singapore}}}

\affil*[2]{\orgdiv{Institute of High Performance Computing (IHPC)}, \orgname{Agency for Science, Technology and Research (A*STAR)}, \orgaddress{\street{1 Fusionopolis Way, \#16-16 Connexis}, \city{Singapore}, \postcode{138632},
\country{Republic of Singapore}}}

\affil[3]{
\orgdiv{School of Mechanical Sciences}, 
\orgname{Indian Institute of Technology Goa}, 
\orgaddress{
\street{Farmagudi}, 
\city{Ponda}, 
\postcode{403401}, 
\state{Goa}, 
\country{India}}}

\affil[4]{
\orgdiv{College of Computing and Data Science}, 
\orgname{Nanyang Technological University}, \orgaddress{
\street{50 Nanyang Avenue}, 
\city{Singapore}, 
\postcode{639798}, 
\country{Republic of Singapore}}}


\abstract{

Neural physics solvers are increasingly used in scientific discovery, given their potential for rapid \textit{in silico} insights into physical, materials, or biological systems and their long-time evolution. However, poor generalization beyond their training support limits exploration of novel designs and long-time horizon predictions. We introduce NOVA, a route to generalizable neural physics solvers that can provide rapid, accurate solutions to scenarios even under distributional shifts in partial differential equation parameters, geometries and initial conditions. By learning physics-aligned representations from an initial sparse set of scenarios, NOVA consistently achieves 1-2 orders of magnitude lower out-of-distribution errors than data-driven baselines across complex, nonlinear problems including heat transfer, diffusion-reaction and fluid flow. We further showcase NOVA's dual impact on stabilizing long-time dynamical rollouts and improving generative design through application to the simulation of nonlinear Turing systems and fluidic chip optimization. Unlike neural physics solvers that are constrained to retrieval and/or emulation within an \textit{a priori} space, NOVA enables reliable extrapolation beyond known regimes, a key capability given the need for exploration of novel hypothesis spaces in scientific discovery.} 

\keywords{scientific machine learning, generalizable neural physics solvers, out-of-distribution, physics-informed learning, AI for scientific discovery}



\maketitle

\section{Introduction}\label{sec1}

AI for Science has achieved multiple breakthroughs in recent years, but there is burgeoning recognition of limitations in current methodologies \cite{szymanski2023autonomous,Claudio2025}. Firstly, scientific discovery inherently prizes novelty, as researchers aim to uncover and understand phenomena, materials, or designs that lie beyond existing knowledge. However, many recent critiques suggest that recent discoveries are extremely similar to materials already within the original dataset, effectively making the AI a ``retrieval" engine as opposed to a true discovery engine \cite{peplow2023robot, cheetham2024artificial}. AI can be rendered ineffective once the search for scientific and/or design novelty brings one to the limits of (or even outside) the training distribution and requires out-of-distribution prediction \cite{karpatne2025ai,zhang2024blending}. Separately, scientific insight in many real-world settings require emulation of long-time transient dynamics, e.g., chemical self-assembly and developmental biology in both natural and/or artificial Turing systems, but this remains a major challenge for AI models given the impact of covariate shift due to error accumulation over long-time rollouts \cite{brunton2024promising,Katiana2024,goswami2022deep}. In both cases, deviations from the training distribution cause severe degradation in their predictive ability, with detrimental consequences in applications like novel design where it can be self-limiting to constrain oneself within the bounds of an \textit{a priori-defined} training distribution, or dynamical simulations where real-time error accumulation is difficult to preempt. Hence, strategies that enable robust zero-shot generalization, i.e., out-of-distribution scenarios where new scenarios may deviate significantly from those observed during training, remain a central challenge in scientific machine learning and a key need in AI for Science \cite{kejriwal2024challenges}.

Two approaches dominate the current paradigm in scientific machine learning but both fail to adequately address the aforementioned challenges. Purely data-driven models (e.g., neural operators) learn well with a large training dataset, but have limited performance when extrapolating beyond the training data distribution \cite{lu2021learning,Azizzadenesheli2024,jiao2025one,cao2024laplace}. Transfer learning and domain adaptation techniques can extend model performance, but their success depends on access to new data and data requirements can still be prohibitive \cite{ding2023parameter,wenzel2022assaying}. Conversely, insights into physical phenomena are often classically derived from solutions to families of partial differential equations (PDEs) spanning different parameters, initial and boundary conditions (ICs/BCs), or geometric domains. Physics-informed machine learning (PIML) circumvents costly data acquisition by embedding physical laws and structure directly into the learning process \cite{RAISSI2019686, brunton2024promising, wei2023select, karniadakis2021physics,liu2025automatic}. However, many standard approaches are typically designed to solve pre-defined scenarios and require substantial retraining when applied to new scenarios \cite{cuomo2022scientific,kadeethum2021framework}. While promising from a data-efficiency perspective, the substantial computational cost associated with training a PIML model for each PDE scenario persists as a significant hurdle due to optimization challenges \cite{krishnapriyan2021characterizing, rathore2024challenges, wong2025evolutionary}. Parameterized or meta-learned PIML models, where one learns over families of PDEs rather than a single PDE, have garnered significant attention, but they remain computationally costly and target learning within the training distribution \cite{JMLR:v25:23-0356, PSAROS2022111121,finn2017model, PENWARDEN2023111912}. 

Hence, there remains a clear gap in the current scientific machine learning paradigm: purely data-driven models perform poorly without sufficient data, while physics-informed models are data-efficient but computationally costly, and neither target out-of-distribution generalization. In this work, we propose Neural architecture discovery with physics for Out-of-distribution, and Versatile Adaptation (NOVA) targeted at data-free and robust prediction beyond (data-limited) training distributions (Fig. \ref{Fig:NOVA}). Rather than optimizing solely for training error under the \textbf{implicit assumption that test scenarios manifest no distributional shift}, NOVA prioritizes physics-guided neural physics solvers whose inductive biases enable versatile, rapid and robust adaptation to new PDE scenarios using the governing physical laws alone. 

The key to NOVA is a paradigm shift in how we define and select for models with physics-aligned inductive biases. NOVA integrates physics-informed zero-shot adaptation directly into the discovery process, identifying inductive biases that are intrinsically aligned with the governing physics. By explicitly enforcing fundamental physical laws, NOVA-derived models also provide physics-compliant and robust predictions for scenarios beyond the available training distribution without requiring any new data. Effective physics-informed adaptation depends critically on models with the right inductive biases, but prior approaches typically focus exclusively on optimizing performance for existing training data or pre-specified tasks \cite{zoph2017neural, pham2018efficient, elsken2019neural, wang2024pinn}. Crucially, the ability to discover models inherently suited for prediction beyond the training distribution remains un-demonstrated. 

Traditional model selection in scientific machine learning is also computationally expensive, as candidate models are typically trained to convergence before evaluation \cite{tack2022meta}. NOVA enables computationally-tractable discovery of models equipped with strong inductive biases for physics and improved capacity for physics-compliant extrapolation through two key innovations. First, we reformulate the physics-informed adaptation as a data-free, convex optimization problem to enable rapid, closed-form solution without iterative and costly retraining. Second, lightweight training is employed in combination with the convexified fast physics-informed adaptation as we exploit the fact that effective feature representations for physics-informed adaptation can emerge early in training (with theoretical analysis in Supplementary Information) \cite{dandi2024two, collins2024provable, ba2022high}. Embedding these two principles reduces computational overhead dramatically, thereby enabling tractable discovery of models with the appropriate inductive biases for physics. 

We validate NOVA across diverse, complex physical systems including nonlinear heat transfer, flow mixing, diffusion-reaction, and Navier-Stokes equations, and spanning tasks with varying PDE parameters, ICs, and geometries. In each instance, NOVA successfully identifies models that can be robustly adapted to new PDE scenarios that fall outside the training distribution without any additional data requirement. These models consistently achieve 5--150$\times$ improvements in predictive accuracy for out-of-distribution tasks compared to purely data-driven methods. 


Given its uniquely favorable trade-off in speed, generalizability, and data requirement, we further demonstrate the potential impact of NOVA on two key problems in AI for Science: 1) Accurate long-time rollouts of complex dynamical systems as exemplified by diffusion-reaction systems characteristic of Turing systems in nano-self-assembly and developmental biology \cite{maini2006turing, newman2007turing, epstein2016reaction, fuseya2021nanoscale}; 2) Contradiction between desire for novel, out-of-distribution design and discovery and brittleness of typical AI models in such scenarios. Specifically, regressor-based guided diffusion has shown great potential in AI for Science, especially in inverse design (e.g., topology optimization and device design) \cite{ho2020denoising, dhariwal2021diffusion, maze2023diffusion, wei2025evolvable}. Flexible conditional generation on pre-trained diffusion models is extremely attractive as foundation models in Science proliferate. However, given the diversity of designs that can be produced through generative models, the data requirements for constructing data-driven regressors can be prohibitive. In contrast, NOVA's robustness to novel, out-of-distribution inputs enable effective integration into guided generative models with minimal high-fidelity data (45 training tasks). Overall, NOVA represents a novel route towards data-efficient, physics-compliant neural physics solvers across diverse physical systems with consistently superior out-of-distribution robustness and predictive accuracy. 

\begin{figure}[htp]
\centering
\vspace*{0mm}
\includegraphics[width=0.98\textwidth]{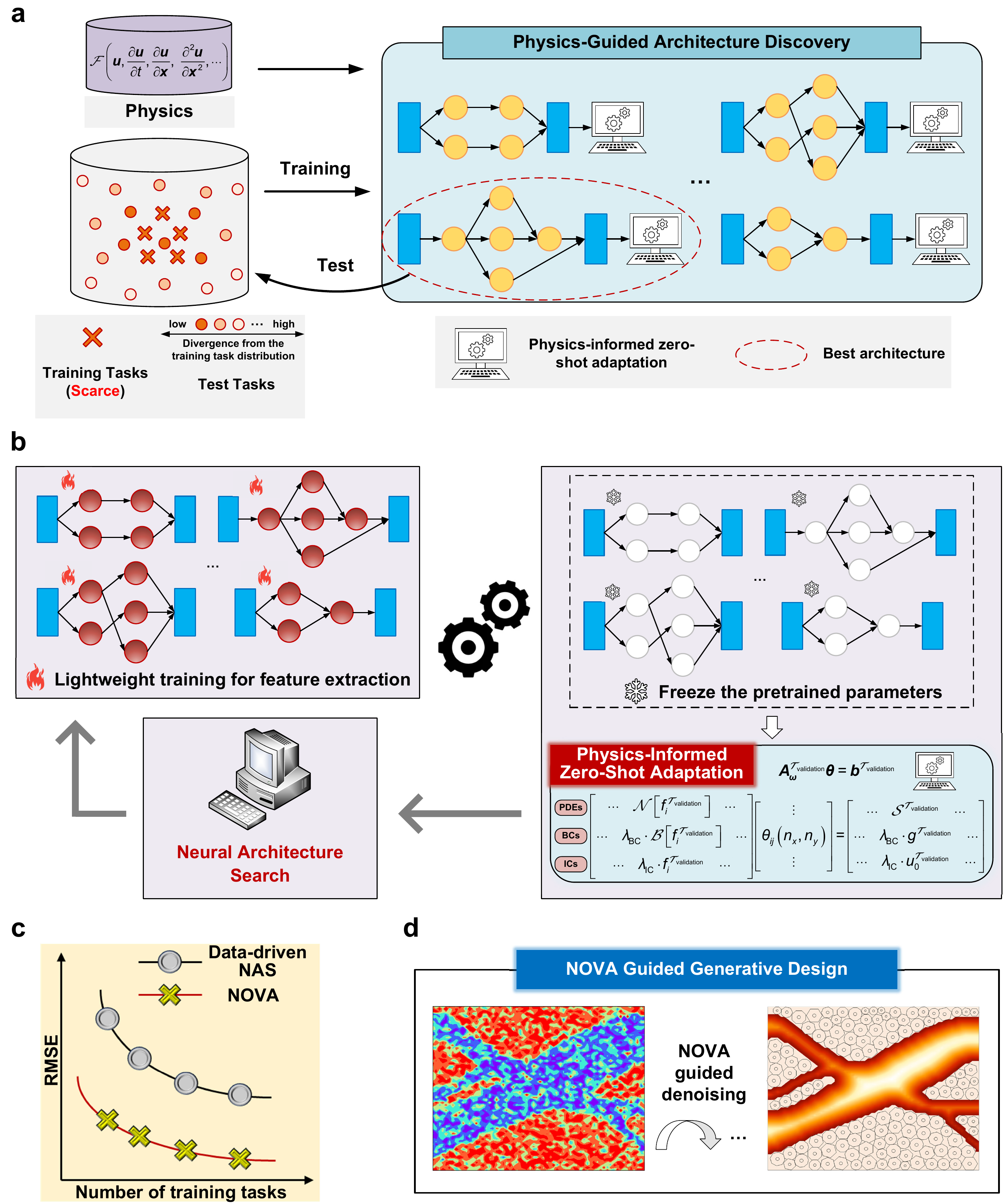}
\vspace*{-0mm}
\caption{Illustration of NOVA. \textbf{a}. Concept of NOVA. In data-scarce settings, NOVA (a physics-guided neural physics solver discovery framework) identifies the optimal neural physics solver by leveraging underlying physics. The selected, more generalizable neural physics solver achieves accurate predictions on test tasks through physics-informed zero-shot adaptation, and provides robust predictions even when the test scenario diverges from the training distribution. \textbf{b}. Algorithmic description of NOVA. During the training and validation process, NOVA automates the design of network architectures (e.g., network operators and connections) on a distribution of PDE scenarios (training data distribution). For each candidate architecture, lightweight training is applied to all kernel parameters except the final layer for effective feature extraction. Subsequently, physics-informed zero-shot adaptation is used for the final layer, ensuring compliance with physical laws during validation. By incorporating physics at this stage, NOVA delivers fast and accurate predictions while being robust to the similarity of the new problem to the (potentially scarce) training distribution. \textbf{c}. Outcome of NOVA. Compared to purely data-driven neural architecture search (NAS) approaches, NOVA demonstrates significantly better generalization performance and is consequently more effective at learning from small datasets. \textbf{d}. NOVA-guided generative design. NOVA can function as a fast, data-light physics-compliant regressor for guided denoising in diffusion models, enabling the generation of designs for fluidic chips with desirable performance characteristics.}
\vspace*{0mm}
\label{Fig:NOVA}
\end{figure}

\section{Results}\label{sec2}

Much of the historical impact from physics originates from our ability to represent diverse physical phenomena mathematically, including as PDEs. Single, problem-specific PDEs are generally categorized into families, including the renowned Navier-Stokes, Poisson and Laplace equations, in recognition of their underlying physics. Generally, these spatio-temporal physical systems are characterized by their PDE parameters, BCs, and ICs as given by

\begin{equation}
\begin{aligned}
& {\rm PDE}: \qquad \quad \mathcal{N}[\boldsymbol{u}(\boldsymbol{x}, t)] = \mathcal{S}(\boldsymbol{x}, t), \qquad & \boldsymbol{x} \in \boldsymbol{\Omega}, t \in (0, t_m] \\
& {\rm BC}: \qquad \quad \quad \mathcal{B}[\boldsymbol{u}(\boldsymbol{x}, t)] = \boldsymbol{g}(\boldsymbol{x}, t), \qquad & \boldsymbol{x} \in \partial{\boldsymbol{\Omega}}, t \in (0, t_m] \\
& {\rm IC}: \qquad \qquad \quad \boldsymbol{u}(\boldsymbol{x}, 0) = \boldsymbol{u}_0(\boldsymbol{x}), \qquad & \boldsymbol{x} \in \boldsymbol{\Omega}, t = 0 
\label{Eq:PINN}
\end{aligned}
\end{equation}

\noindent where $\boldsymbol{x}$ and $t$ are the spatial and temporal coordinates; $\mathcal{N}$ is the differential operator comprising a combination of linear and/or nonlinear derivatives; $\mathcal{S}$ and $\mathcal{B}$ are the source term and boundary operator; $\boldsymbol{u}(\boldsymbol{x}, t)$ is the PDE solution with boundary condition $\boldsymbol{g}(\boldsymbol{x}, t)$ and initial condition $\boldsymbol{u}_0(\boldsymbol{x})$; $\boldsymbol{\Omega}$ and $\partial{\boldsymbol{\Omega}}$ are the respective domain and boundary. There is also usually a simulation time period of interest, $t \in (0, t_m]$, for transient, dynamical physical systems.

In that vein, this work demonstrates the ability of NOVA to discover inductive biases that can efficiently and accurately generalize to new tasks $\mathbfcal{T}_{\rm new}$ (within or outside the original distribution $\mathcal{D}(\mathbfcal{T})$) in data-scarce settings by leveraging known governing physics prescribed as per Eq. (\ref{Eq:PINN}). NOVA's overall framework is detailed in Section \ref{subsec:framework}. 


\subsection{NOVA Performance}\label{subsec:different_parameterizations}

To rigorously evaluate NOVA, we present experiments across representative sources of variation in scientific machine learning scenarios (i.e., differences in governing PDE parameters, ICs, and geometries) spanning multiple complex, nonlinear PDEs:

\begin{itemize}
    \item \textbf{Different ICs (Section \ref{subsec:Initial conditions}).} NOVA is demonstrated on 2D transient diffusion-reaction equations for \textit{different ICs}, i.e., $\boldsymbol{u}_0(\boldsymbol{x})$. Diffusion-reaction equations capture complex processes such as biological pattern formation and are highly sensitive to variations in the initial spatial distribution of activator and inhibitor species. This makes them ideal for testing NOVA’s adaptability to diverse ICs, and ability to emulate long-time horizon system dynamics. 
    \item \textbf{Different geometries (Section \ref{subsec:Geometries}).} NOVA is demonstrated on 2D Navier-Stokes equations with a focus on \textit{changes in geometry}, i.e., different $\boldsymbol{\Omega}$, and their resultant velocity and pressure fields. Geometry-induced variations are critical in many physical systems, especially in the context of generative design, and this provides a rigorous assessment of NOVA’s robustness to different geometric and topological features.
    \item \textbf{Different PDE parameters (Section \ref{subsec:PDEs}).} NOVA is demonstrated for scenarios with \textit{different PDE parameters}, i.e., different $\mathcal{N}$ and $\mathcal{S}$, on the 2D transient nonlinear heat equation and flow mixing equation. The nonlinear heat equation is a parabolic PDE with diffusion-dominated dynamics, while the flow mixing equation is a hyperbolic PDE which models advection-dominated behavior. These two PDEs differ markedly in mathematical structure and physical characteristics, making them well-suited for assessing NOVA’s ability across fundamentally different physical scenarios.

\end{itemize}

Each problem has a defined set of training tasks $\mathbfcal{T}_{\rm training}$, in-distribution test tasks $\mathbfcal{T}_{\rm test\_id}$, and out-of-distribution test tasks $\mathbfcal{T}_{\rm test\_ood}$. $\mathbfcal{T}_{\rm training}$ and $\mathbfcal{T}_{\rm test\_id}$ are chosen from the same distribution while $\mathbfcal{T}_{\rm test\_ood}$ is chosen to be distinct from the training distribution to mimic potential real-world distributional shift. $\mathbfcal{T}_{\rm training}$ is further split into training ($\mathbfcal{T}_{\rm training\_NAS}$) and validation ($\mathbfcal{T}_{\rm validation\_NAS}$) tasks during neural architecture search (NAS). 

Data generation and model training are detailed in Section \ref{subsec:data generation} and Section \ref{subsec:training details}, respectively. For brevity, we refer to the final NOVA-derived model with physics-informed adaptation as NOVA when reporting the results below although NOVA encompasses the complete physics-guided architecture search. While Section \ref{sec2} primarily reports root-mean-squared-error (RMSE), additional metrics are calculated and presented in Supplementary Information, i.e., normalized RMSE (nRMSE), maximum error (max error), RMSE of the conserved value (cRMSE), and RMSE at boundaries (bRMSE) as described in Supplementary Section 8.

\subsection{NOVA robustness to different ICs facilitates long-time horizon prediction for complex dynamical systems} \label{subsec:Initial conditions}

The 2D transient diffusion-reaction equations model a variety of real-world phenomena such as developmental biology and chemical self-assembly \cite{PhysRevE.84.046216}, and are used to validate the effectiveness of NOVA on tasks with varying ICs due to their sensitivity to initialization and highly nonlinear nature. The governing PDEs are given in Eq. (\ref{Eq:2D_DR}) 


\begin{small}
\begin{subequations}
\label{Eq:2D_DR}
\begin{align}
    & \frac{\partial u(x, y, t)}{\partial t} = d_u \Big [\frac{\partial^2 u(x, y, t)}{\partial x^2} + \frac{\partial^2 u(x, y, t)}{\partial y^2}\Big] + u(x, y, t) - u^3(x, y, t) - 0.005 - v(x, y, t), \nonumber \\ 
    & \hspace{7cm} x \in [-1, 1], \; y \in [-1, 1], \; t \in [0, 5] \\[0.5cm]
    & \frac{\partial v(x, y, t)}{\partial t} = d_v \Big[\frac{\partial^2 v(x, y, t)}{\partial x^2} + \frac{\partial^2 v(x, y, t)}{\partial y^2}\Big] + u(x, y, t) - v(x, y, t), \nonumber \\
    & \hspace{7cm} x \in [-1, 1], \; y \in [-1, 1], \; t \in [0, 5]
\end{align}
\end{subequations}
\end{small}

\noindent where $u(x, y, t)$ and $v(x, y, t)$ are the activator and inhibitor and $d_u = 0.001$ and $d_v = 0.005$ are the respective diffusion coefficients for the activator and inhibitor \cite{takamoto2022pdebench}. 

An autoregressive rollout is a common procedure for predicting the transient trajectory of dynamical systems with deep learning models. Under this framework, NOVA predicts the solution at the next time step, $\boldsymbol{u}^{\mathcal{T}_j}(x, y, p\Delta t)$, using the previous time step's predicted solution, $\boldsymbol{u}^{\mathcal{T}_j}(x, y, (p-1)\Delta t)$, and this process is repeated until a specified simulation end-time, $t_m$, is reached. 

\subsubsection{NOVA prediction performance across different ICs}

For this problem, we define in-distribution and out-of-distribution tasks based on the ICs. ICs for $\mathbfcal{T}_{\rm training}$ and $\mathbfcal{T}_{\rm test\_id}$ are generated from standard normal random distribution, i.e., $u(x, y, 0) \sim \mathcal{N}(0,  1)$~\cite{takamoto2022pdebench}. In contrast, ICs for $\mathbfcal{T}_{\rm test\_ood}$ are pre-specified, non-random geometric patterns, thereby introducing a distinct distributional shift. 

We compare NOVA to FNO, U-Net, and MLP-PINN models from \cite{takamoto2022pdebench}, where FNO and U-Net are entirely data-driven models. Fig. \ref{Fig_gp}e and Supplementary Table 2 provide a quantitative comparison of prediction results for NOVA, FNO, U-Net, and MLP-PINN on $\mathbfcal{T}_{\rm test\_id}$ and $\mathbfcal{T}_{\rm test\_ood1}$-$\mathbfcal{T}_{\rm test\_ood3}$.

\textbf{(1) In-distribution results}

Fig. \ref{Fig_gp}a shows that as NAS proceeds, the prediction accuracy of NOVA on $\mathbfcal{T}_{\rm test\_id}$ improves significantly, underscoring the effectiveness of NAS in finding promising architectures. Given the large training dataset ($N = 900$), NOVA and FNO have similar order of magnitude RMSE ($<10^{-2})$, and their predictions are more visually similar. In terms of both global and local error metrics (Supplementary Table 2), NOVA still has a slight advantage over all baseline models. 

\textbf{(2) Out-of-distribution results}

In Fig. \ref{Fig_gp}b-Fig. \ref{Fig_gp}d, we further illustrate the generalization performance of NOVA on three new tasks with ICs out of the training distribution (i.e., $\mathbfcal{T}_{\rm test\_ood1}, \mathbfcal{T}_{\rm test\_ood2}, \mathbfcal{T}_{\rm test\_ood3} \in \mathbfcal{T}_{\rm test\_ood}$). The ground truth (with new ICs) is obtained through an in-house numerical solver (Section \ref{subsubsec:DR}). 

Visually, NOVA still predicts $u$ and $v$ well for $\mathbfcal{T}_{\rm test\_ood1}$-$\mathbfcal{T}_{\rm test\_ood3}$. However, FNO and U-Net extrapolate poorly as these new ICs are significantly different from the training data distribution. Crucially, NOVA's RMSE only slightly increase from 0.0029 to 0.0046, while the RMSE for data-driven FNO and U-Net exhibit orders of magnitude deterioration from $\mathbfcal{T}_{\rm test\_id}$, illustrating the robustness of NOVA to distributional shifts in ICs. 

\textbf{NOVA robustness enhances accuracy for long-time horizon prediction in autoregressive rollout.} Many prior reports have suggested that an autoregressive strategy frequently results in a covariate shift which can adversely affect long-time prediction of dynamical systems in conventional data-driven models \cite{bengio2015scheduled,lamb2016professor}. In contrast, the prediction error of NOVA at each time step is comparatively small, making it well-suited for long-time autoregressive rollout of dynamical systems as compared to FNO and U-Net, as illustrated in Fig. \ref{Fig_gp}f. This highlights the effectiveness of NOVA in leveraging physics for more accurate long-time dynamical emulation.  



\subsubsection{Importance of NOVA's explicit focus on generalizability}

\textbf{Effectiveness of neural architecture}. We examine the impact of an explicit focus on generalization over the conventional goal of minimizing training loss for neural architecture selection through additional ablation experiments. 

Fig. \ref{Fig_gp}g and Fig. \ref{Fig_gp}h present the training loss histograms on $\mathbfcal{T}_{\rm training\_NAS}$ for all 200 candidate architectures evaluated during NAS and the top 20 architectures (based on validation performance on $\mathbfcal{T}_{\rm validation\_NAS}$) obtained during the NAS process for the 2D diffusion-reaction equations, covering both NOVA and NAS-U-Net (described in Section \ref{subsec:training details}). It is evident from Fig. \ref{Fig_gp}g that the best-performing NOVA architectures on $\mathbfcal{T}_{\rm validation\_NAS}$ are not found in the region with minimal training loss during NAS. Instead, they are more common in regions with relatively higher training loss, a sign that overfitting of the training data has been mitigated. In contrast, as shown in Fig. \ref{Fig_gp}h, traditional data-driven training and validation processes (e.g., NAS-U-Net) tend to favor architectures with minimal training loss. This contrast underscores how the incorporation of physics fundamentally transforms the conventional data-driven training and validation paradigm, emphasizing the need for a dedicated NAS algorithm designed to identify generalizable architectures. 

We further select two models from the 200 candidate architectures evaluated during NAS for NOVA: (1) the one with the lowest training loss and (2) another with the median training loss on $\mathbfcal{T}_{\rm training\_NAS}$. Both models are then fine-tuned using the physics-informed zero-shot adaptation for PDE scenarios in $\mathbfcal{T}_{\rm test\_id}$ to illustrate that the improvement originates from the synergy between the physics-aligned architecture and physics-informed adaptation. These models (NOVA-minimal-training-loss and NOVA-median-training-loss in Supplementary Table 3) perform significantly worse than our final NOVA model across all the metrics, underscoring the benefit of NOVA. We illustrate the effectiveness of the physics-informed zero-shot adaptation in Supplementary Section 2.

\begin{figure}[htp]
\centering
\vspace*{0mm}
\includegraphics[width=1\columnwidth]{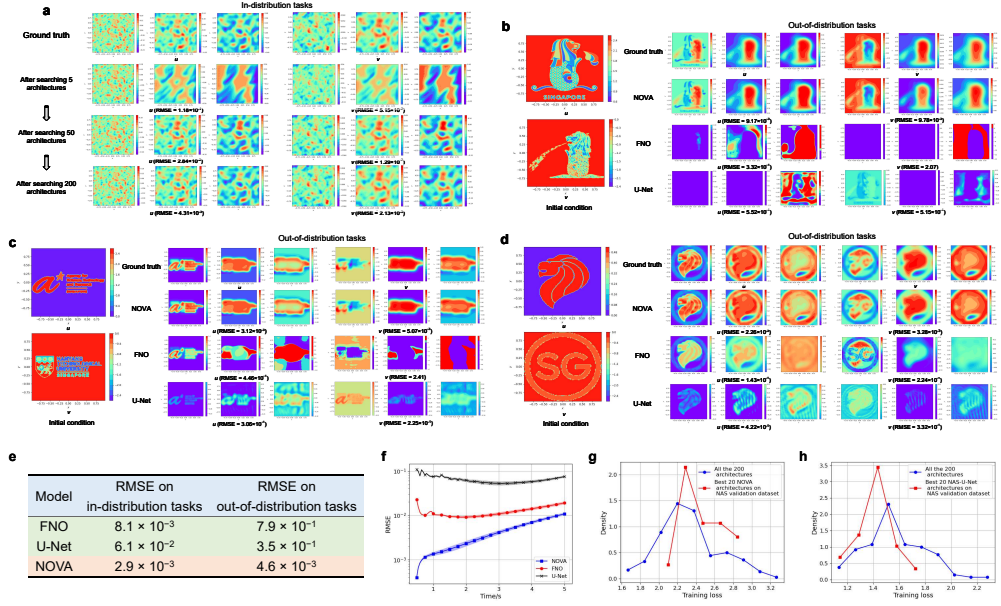}
\vspace*{-0mm}
\caption{Performance of NOVA for the 2D diffusion-reaction equations. \textbf{a}. Prediction results of NOVA on a representative case in $\mathbfcal{T}_{\rm test\_id}$ show improvement after 5, 50, and 200 architectures are evaluated during NAS. \textbf{b-d}. Predictions from NOVA, FNO and U-Net on (b) $\mathbfcal{T}_{\rm test\_ood1}$, (c) $\mathbfcal{T}_{\rm test\_ood2}$, and (d) $\mathbfcal{T}_{\rm test\_ood3}$. \textbf{e}. Mean RMSE on $\mathbfcal{T}_{\rm test\_id}$ and $\mathbfcal{T}_{\rm test\_ood}$ for NOVA, FNO, and U-Net indicates FNO and U-Net performance degrades significantly more than NOVA for out-of-distribution predictions. \textbf{f}. Mean RMSE at each time step across tasks in $\mathbfcal{T}_{\rm test\_id}$ from NOVA, FNO, and U-Net. Shaded area indicates the standard deviation. \textbf{g}. Training loss histogram after five epochs on the training dataset $\mathbfcal{T}_{\rm training\_NAS}$ for all 200 NOVA architectures evaluated during NAS. Also shown are the top 20 architectures, selected based on their validation performance on the validation dataset $\mathbfcal{T}_{\rm validation\_NAS}$ after physics-informed zero-shot adaptation. \textbf{h}. Training loss histogram after twenty epochs on the training dataset $\mathbfcal{T}_{\rm training\_NAS}$ for all 200 NAS-U-Net architectures. The top 20 architectures, identified based on their validation performance on the validation dataset $\mathbfcal{T}_{\rm validation\_NAS}$ during NAS, are also included. For \textbf{a}, \textbf{b}, \textbf{c}, and \textbf{d}, each subfigure for $u$ and $v$ displays predictions at $t = 0.5$, $t = 2.5$, and $t = 5$. RMSE values in brackets indicate an average over the entire simulation time. The pre-trained FNO and U-Net models are obtained from \cite{takamoto2022pdebench}.}
\vspace*{0mm}
\label{Fig_gp}
\end{figure}

\subsection{NOVA robustness to different geometries facilitates data-light novel design} \label{subsec:Geometries}

We next evaluate NOVA for predicting the velocity and pressure fields across different geometries as governed by the challenging 2D steady-state incompressible Navier-Stokes equations. The Navier-Stokes equations are highly nonlinear, making them challenging to solve \cite{CHIU2022114909}, and the computational cost consequently makes it extremely difficult to generate large datasets to cover the design space for pure data-driven models. The governing equations are 


\begin{subequations}
\label{Eq:2D_NS}
\begin{align}
& \frac{\partial u(x, y)}{\partial x} + \frac{\partial v(x, y)}{\partial y} = 0, \nonumber \\
& \hspace{8.5cm} x \in [0, 2], \; y \in [0, 2] \\[0.5cm]
& u(x, y) \frac{\partial u(x, y)}{\partial x} + v(x, y) \frac{\partial u(x, y)}{\partial y} = \frac{1}{\rm Re}\left[\frac{\partial^2 u(x, y)}{\partial x^2} + \frac{\partial^2 u(x, y)}{\partial y^2}\right] - \frac{\partial p(x, y)}{\partial x}, \nonumber \\
& \hspace{8.5cm} x \in [0, 2], \; y \in [0, 2] \\[0.5cm]
& u(x, y) \frac{\partial v(x, y)}{\partial x} + v(x, y) \frac{\partial v(x, y)}{\partial y} = \frac{1}{\rm Re}\left[\frac{\partial^2 v(x, y)}{\partial x^2} + \frac{\partial^2 v(x, y)}{\partial y^2}\right] - \frac{\partial p(x, y)}{\partial y}, \nonumber \\
& \hspace{8.5cm} x \in [0, 2], \; y \in [0, 2]
\end{align}
\end{subequations}

\noindent where $u(x, y)$ and $v(x, y)$ are velocities in the $x$- and $y$- directions; $p(x, y)$ is the pressure; ${\rm Re} = 500$ is the Reynolds number. 


\subsubsection{NOVA prediction performance across different geometries}

For this problem, we distinguish between in-distribution and out-of-distribution scenarios based on the number of inlet channels in the geometry. $\mathbfcal{T}_{\rm training}$ and $\mathbfcal{T}_{\rm test\_id}$ comprise geometries with between one and three inlet channels while $\mathbfcal{T}_{\rm test\_ood}$ comprises geometries with four inlet channels. 

A data-driven U-Net (referenced as NAS-U-Net hereafter) is trained with similar settings as a baseline (described in Section \ref{subsec:training details}). Prediction performance of the physics-guided NOVA and data-driven NAS-U-Net obtained models on both $\mathbfcal{T}_{\rm test\_id}$ and $\mathbfcal{T}_{\rm test\_ood}$ is collected in Fig. \ref{Fig_NS}f and Supplementary Table 4. 

\textbf{(1) In-distribution results}
A representative prediction on $\mathbfcal{T}_{\rm test\_id}$ is provided in Fig. \ref{Fig_NS}a. Visually, both models can capture the similar patterns, although NOVA is quantitatively more accurate. NOVA's RMSE on $\mathbfcal{T}_{\rm test\_id}$ is approximately one order of magnitude better than the data-driven NAS-U-Net (and better across all other metrics in Supplementary Table 4). 

\begin{figure}[htp]
\centering
\vspace*{0mm}
\includegraphics[width=1\textwidth]{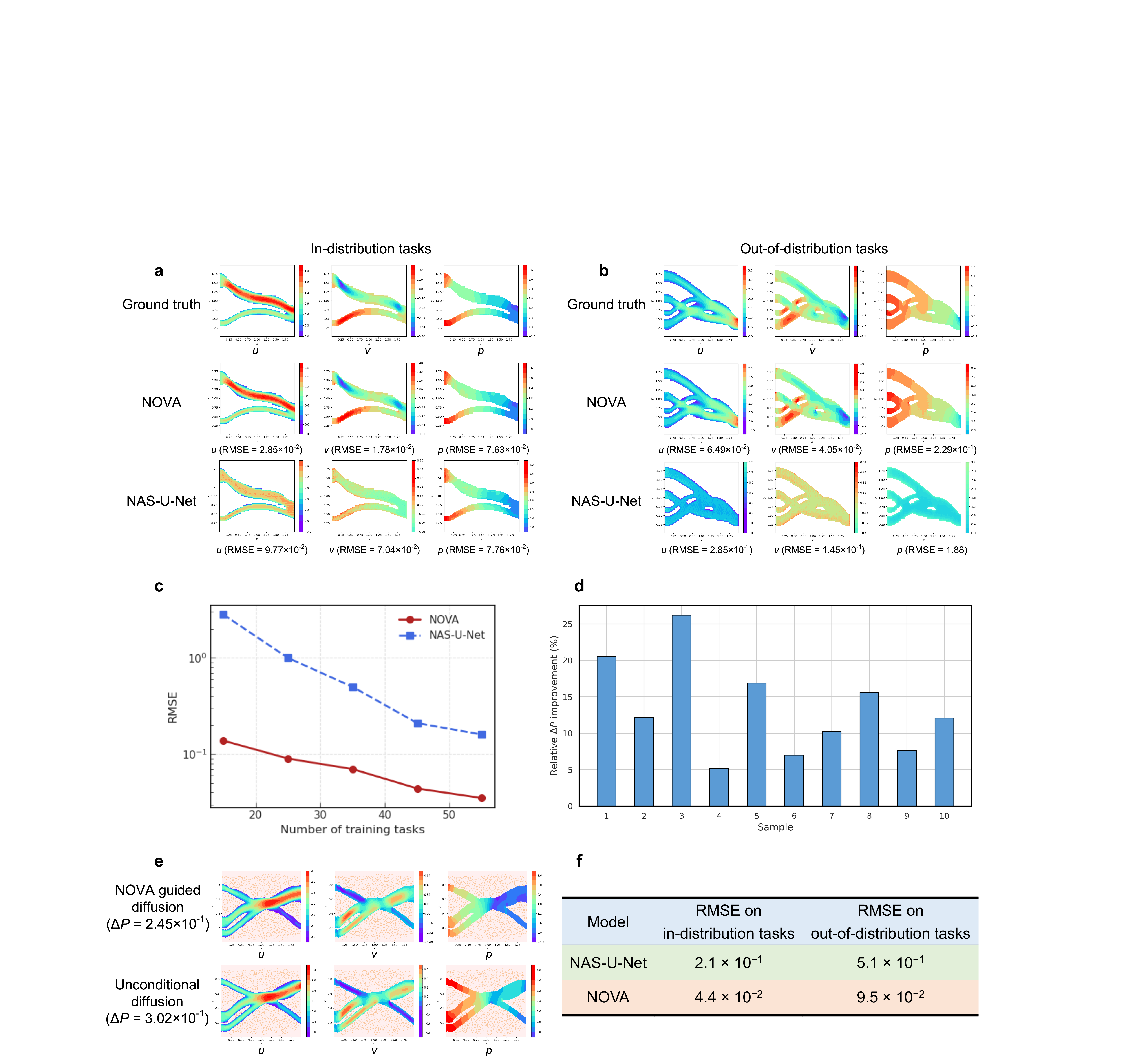}
\vspace*{-0mm}
\caption{Performance of NOVA for the 2D Navier-Stokes equations. \textbf{a-b}. Representative prediction results from NOVA and NAS-U-Net on (a) $\mathbfcal{T}_{\rm test\_id}$ and (b) $\mathbfcal{T}_{\rm test\_ood}$. The three images in each subfigure represent the predictions of $u$, $v$, and $p$ while the RMSE in brackets indicates the corresponding errors. \textbf{c}. RMSE for NOVA and NAS-U-Net on $\mathbfcal{T}_{\rm test\_id}$ when trained with 15, 25, 35, 45, and 55 training samples. \textbf{d}. Relative improvement in $\Delta P$ after guided generation with NOVA. Relative improvement is calculated as the difference between the $\Delta P$ for a design obtained by unconditional diffusion and NOVA-guided diffusion, normalized by the former's $\Delta P$. \textbf{e}. CFD-derived velocity and pressure fields for a sample fluidic chip design generated through diffusion with and without NOVA guidance. \textbf{f}. Mean RMSE on $\mathbfcal{T}_{\rm test\_id}$ and $\mathbfcal{T}_{\rm test\_ood}$ for NOVA and NAS-U-Net trained on 45 geometries. The $\Delta P$ values in \textbf{d} and \textbf{e} are obtained using a CFD solver on generated geometries.} 
\vspace*{0mm}
\label{Fig_NS}
\end{figure}

To further illustrate the generalization performance of NOVA under data-sparse scenarios, NOVA and NAS-U-Net are pre-trained on small datasets comprising 15, 25, 35, 45, or 55 different geometries. Test results on $\mathbfcal{T}_{\rm test\_id}$ are plotted in Fig. \ref{Fig_NS}c. Under such data scarcity, NOVA can still achieve RMSE $<0.1$, consistently outperforming NAS-U-Net through physics. Unsurprisingly, both models are still improving with more training data. 


\textbf{(2) Out-of-distribution results}

Using geometries with four inlet channels as a form of out-of-distribution scenarios ($\mathbfcal{T}_{\rm test\_ood}$), we further observe that NOVA is better able to provide accurate predictions and significantly outperforms the data-driven NAS-U-Net, both qualitatively as in Fig. \ref{Fig_NS}b, and quantitatively in Fig. \ref{Fig_NS}f, and Supplementary Table 4. In particular, Fig. \ref{Fig_NS}b clearly shows how the data-driven model is completely unable to provide a meaningful guess under such distributional shifts and these results underscore NOVA's improved generalization capability, even for complex, nonlinear physics. 

\subsubsection{NOVA-guided generative design}\label{subsubsec:generative design}

Fluidic chips are being actively developed for applications like organ-on-chip for therapeutic developments, and ensuring optimal performance through metrics such as perfusion efficiency and low operating pressures (to avoid channel delamination and/or cell damage) is a key part of the design process \cite{bhatia2014microfluidic, low2021organs, emmerich2024automated}. In this work, we apply the NOVA model within a guided diffusion model \cite{ho2020denoising} as a demonstration of the utility of data-light NOVA for novel design. The NOVA model is pre-trained on 45 different tasks and model details are in Section \ref{subsec:guided diffusion}. 

Guided generation with NOVA utilizes a conditional denoising process as described in Eq. (\ref{denoising process}) and Eq. (\ref{conditional denoising process}). During each conditional denoising step, the NOVA model is provided the putative fluid geometry, and the model's predicted pressure drop across the geometry ($\Delta P$) is used to guide the denoising process (i.e., geometry generation) such that $\Delta P$ is reduced/minimized. 

We generate 10 samples with and without NOVA guidance. As shown in Fig. \ref{Fig_NS}d, designs generated with NOVA guidance consistently achieve lower $\Delta P$ than without. A Wilcoxon signed-rank test confirms that this improvement is statistically significant, with an average improvement of $\approx15\%$ ($p < 0.001$). In contrast, using the NAS-U-Net for guidance shows no statistically significant improvement ($p \approx0.99$). This demonstrates NOVA's ability to steer the generative process towards desirable outcomes. As illustrated in Fig. \ref{Fig_NS}e, NOVA guidance widens the outlet channel, thereby reducing $\Delta P$. 


\subsection{NOVA robustness to different PDE parameters as a potential path to more generalizable PDE solvers} \label{subsec:PDEs}

The 2D transient nonlinear heat equation is used to validate the effectiveness of the proposed method for tasks with different PDE parameters. The PDE's general form is defined in Eq. (\ref{Eq:2D_heat})

\begin{equation}
\begin{aligned}
& \frac{\partial u(x, y, t)}{\partial t} - \gamma \frac{\partial^2 u(x, y, t)}{\partial x^2} - \gamma \frac{\partial^2 u(x, y, t)}{\partial y^2} + k_1 k_2 \, {\rm tanh}[u(x, y, t)] = \mathcal{S}(x, y, t), \\ 
& \hspace{7cm} x \in [-1, 1], \; y \in [-1, 1], \; t \in [0, 1]
\label{Eq:2D_heat}
\end{aligned}
\end{equation}

\noindent where $\gamma$, $k_1$, and $k_2$ are PDE coefficients that can be varied to construct diverse PDE scenarios; $\mathcal{S}(x, y, t)$ is a problem-specific source term.


We distinguish between in-distribution and out-of-distribution scenarios based on the form of $\mathcal{S}(x, y, t)$. The in-distribution tasks are generated using a source term corresponding to the analytical solution in Eq. (\ref{Eq:2D_heat_id}). In contrast, the out-of-distribution tasks are constructed using source terms consistent with the analytical solutions specified in Eqs. (\ref{Eq:2D_heat_F1})-(\ref{Eq:2D_heat_F3}). 

We identify an optimal data-driven U-Net as a baseline (referenced as NAS-U-Net). Experimental details for NOVA and NAS-U-Net are presented in Section \ref{subsec:training details}. The comparison results of NOVA and NAS-U-Net on $\mathbfcal{T}_{\rm test\_id}$ and $\mathbfcal{T}_{\rm test\_ood}$ are collected in Fig. \ref{heat and flow mixing}e and Supplementary Table 5. 

\textbf{(1) In-distribution results}

A representative case in $\mathbfcal{T}_{\rm test\_id}$ is plotted in Fig. \ref{heat and flow mixing}a. Visually, NAS-U-Net and NOVA can both produce reasonable predictions, although NOVA is noticeably more accurate at later simulation times (similar to 2D diffusion-reaction). NOVA can successfully generalize across different PDE coefficients and has approximately two orders of magnitude lower RMSE than NAS-U-Net ($10^{-3}$ vs $10^{-1}$). 

\begin{figure}[htp]
\centering
\vspace*{0mm}
\includegraphics[width=1\textwidth]{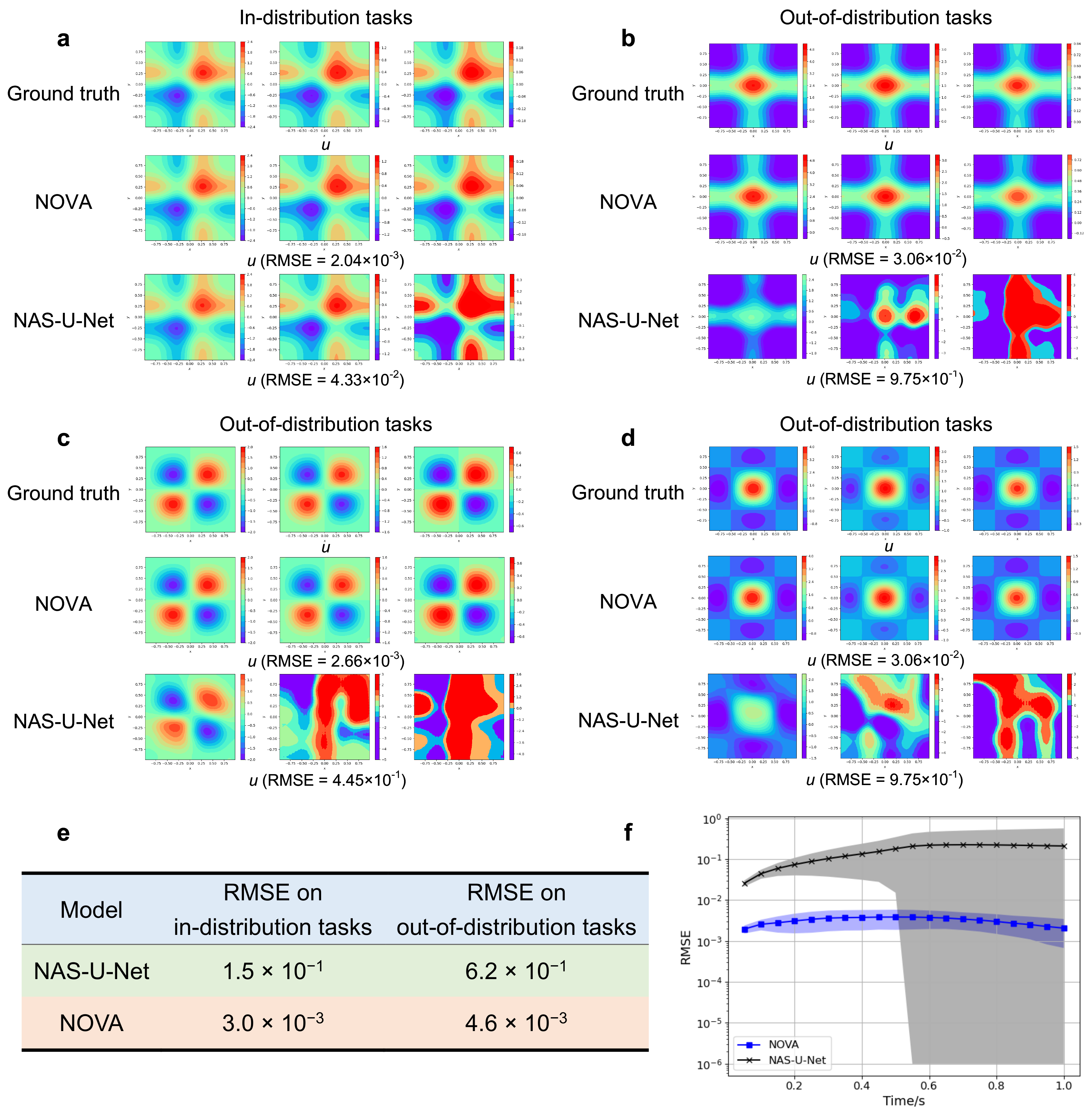}
\vspace*{-0mm}
\caption{Generalization performance of NOVA for the 2D nonlinear heat equation. \textbf{a-d}. Predictions from NOVA and NAS-U-Net on a representative case from (a) $\mathbfcal{T}_{\rm test\_id}$, (b) $\mathbfcal{T}_{{\rm test\_ood}\_F1}$, (c) $\mathbfcal{T}_{{\rm test\_ood}\_F2}$, and (d) $\mathbfcal{T}_{{\rm test\_ood}\_F3}$. Each subfigure displays predictions at $t = 0.05$, $t = 0.5$, and $t = 1$. RMSE in brackets is the average over the entire simulation time. \textbf{e}. Mean RMSE from NOVA and NAS-U-Net on $\mathbfcal{T}_{\rm test\_id}$ and $\mathbfcal{T}_{{\rm test\_ood}\_F3}$. \textbf{f}. Mean RMSE at each time step for tasks in $\mathbfcal{T}_{\rm test\_id}$ for NOVA and NAS-U-Net. Shaded area indicates the standard deviation.}
\vspace*{0mm}
\label{heat and flow mixing}
\end{figure}


The RMSE values at each time step for NOVA and NAS-U-Net are depicted in Fig. \ref{heat and flow mixing}f. The RMSE values of NOVA on $\mathbfcal{T}_{\rm test\_id}$ at each time step remain relatively low compared with NAS-U-Net. This lower error illustrates NOVA's ability to predict more accurately over a longer time horizon due to the lower rate of error accumulation in the autoregressive rollout. 

\textbf{(2) Out-of-distribution results}

To further illustrate the performance of NOVA for scenarios which are dissimilar to the original training examples, we test NOVA and NAS-U-Net on $\mathbfcal{T}_{\rm test\_ood}$. Sample prediction results for $\mathbfcal{T}_{{\rm test\_ood}\_F1}$, $\mathbfcal{T}_{{\rm test\_ood}\_F2}$, and $\mathbfcal{T}_{{\rm test\_ood}\_F3}$ are provided in Fig. \ref{heat and flow mixing}b-Fig. \ref{heat and flow mixing}d. 

Despite the distributional shift, NOVA can still provide accurate predictions due to its physics-guided inductive bias and physics-informed adaptation. In contrast, the NAS-U-Net performs poorly on these new PDE scenarios relative to its performance on the original training dataset. As a result, the RMSE of NOVA is more than one order of magnitude lower than NAS-U-Net on $\mathbfcal{T}_{{\rm test\_ood}\_F1}$, and more than two orders of magnitude lower than NAS-U-Net on $\mathbfcal{T}_{{\rm test\_ood}\_F2}$ and $\mathbfcal{T}_{{\rm test\_ood}\_F3}$. NOVA also exhibits more than one order of magnitude better performance than the pure data-driven model (NAS-U-Net) on other metrics (i.e., nRMSE, max error, cRMSE, and bRMSE) as summarized in Fig. \ref{heat and flow mixing}e and Supplementary Table 5. 

\textbf{Extension of NOVA to extreme case with no data but known parametric range of interest.} We apply NOVA to a separate flow-mixing problem, and demonstrate that NOVA can also be adapted to produce good predictions for different PDE parameters in the extreme case where there is no data. These experiments are in Supplementary Section 5.

\section{Discussion}\label{sec:Discussion}

This work presents NOVA, an innovative paradigm that integrates neural physics solver discovery with a physics-informed adaptation step to automatically identify models with inductive biases appropriate for physics prediction. By embedding physical laws during prediction, NOVA enables rapid and versatile, data-free adaptation for robust and accurate predictions even when the new system may be different from the training distribution. In contrast to existing data-driven or physics-informed approaches, NOVA incorporates physical consistency and robustness to out-of-distribution scenarios as an explicit part of the search process. 

Experimental results across diverse PDE problems including nonlinear heat, flow mixing, diffusion-reaction, and Navier-Stokes equations, confirm that NOVA consistently learns feature representations aligned to the physics of interest, conferring robustness to distributional shifts in ICs, geometries, or the PDE itself. Its overall performance across physical systems is summarized in Supplementary Table 1. Notably, the relative performance of NOVA to baseline models (``Gain'') remains high across all out-of-distribution test scenarios (an average of $99 \times$), and, more remarkably, is increased relative to the ``Gain'' for in-distribution test scenarios.  

Through incorporation of physics, NOVA enables neural physics solvers to generalize under distributional shifts, with two key implications for their adoption in AI for Science: 1) NOVA-derived models have enhanced robustness to covariate shift for more accurate long-time horizon autoregressive rollout, a key benefit for the emulation of complex dynamical systems; 2) NOVA-derived models can flexibly serve as a data-light, yet fast prediction model for scientific discovery, where one may not be able to \textit{a priori} tightly define the parametric bounds of interest for the physical system, such as in performance-guided generative diffusion. 

Beyond the demonstrated performance benefits, the results also clearly show that models with higher training errors may still generalize better if the architecture possesses the appropriate inductive bias, highlighting the connection between any structural prior as dictated by the architecture and specific physics. Further investigation of the alignment between architecture and physics may be worth elucidating in future work, especially as we believe the structural prior (as represented by the neural architecture) may be a complementary source of physics-regularization as one may regard traditional physics-informed losses (e.g., in PINNs).
A better understanding of this relationship may also facilitate future extensions of this work to more complex settings such as multi-physics or three-dimensional problems where data is likely to be even more expensive to obtain and venturing out-of-distribution is consequently even more likely.

Overall, NOVA represents a principled approach for enhancing model robustness over and beyond their training data distributions by embedding appropriate physical structure in the neural architecture and physics-compliance into the adaptation step. This paradigm of co-designing neural architectures with physics-informed adaptation presents a promising path towards more dependable and robust scientific machine learning and neural physics solvers. 



\section{Methods}\label{sec:Methods}

\subsection{NOVA framework}\label{subsec:framework}


In the NOVA framework, the model is trained across different PDE scenarios $\mathbfcal{T}$ with the data loss $\mathcal{L}_{\rm Data}$ as per Eq. (\ref{Eq:data loss})

\begin{equation}
\begin{aligned}
    & \mathcal{L}_{\rm Data} = \frac{1}{n_{\rm Data}} \sum_{i = 1}^{n_{\rm Data}} \big\lVert \boldsymbol{u}(\boldsymbol{x}_i, t_i) - \boldsymbol{u}_{gt}(\boldsymbol{x}_i, t_i) \big\rVert_2^{2}
\label{Eq:data loss}
\end{aligned}
\end{equation}

\noindent where $\boldsymbol{u}_{gt}(\boldsymbol{x}, t)$ is the ground truth (labeled data) for the PDE domain as encompassed by $\boldsymbol{\Omega} \times [0, t_m]$. 

The trained model can then be used to predict the output $\boldsymbol{u}$ for any new scenario with a physics-informed adaptation step that ensures $\boldsymbol{u}$ adheres to the specific governing physics (i.e., PDEs, BCs, and ICs) described in Eq. (\ref{Eq:PINN}). 

To evaluate the generalization performance of the architecture across a family of tasks $\mathbfcal{T}$ in our available data, we further divide $\mathbfcal{T}$ into a training dataset $\mathbfcal{T}_{\rm training}$ (including NAS training tasks $\mathbfcal{T}_{\rm training\_NAS}$ and NAS validation tasks $\mathbfcal{T}_{\rm validation\_NAS}$). 

Building on this, NOVA's progressive learning strategy can be seen to have the following two key components. 

\textbf{(1) Lightweight training for optimized kernel parameters $\boldsymbol{\omega}^*$}

Without loss of generality, a convolutional neural network (CNN) type architecture is chosen as the backbone architecture for NOVA in this work, and for description in the rest of this section. By incorporating an additional convolution layer as the last layer prior to the final feature map for regression \cite{9451544}, the outputs of the CNN with spatial dimensions $x$, $y$ and time dimension $t$ can be expressed by 

\begin{small}
\begin{equation}
\begin{aligned}
    & u_j(x, y, t; \theta_{ij}) = \sum_{i = 1}^{c_{\rm in}} \sum_{n_x = 1}^{n_k} \sum_{n_y = 1}^{n_k} f_i(x + (n_x - 1)\Delta x, y + (n_y - 1)\Delta y, t; \boldsymbol{\omega}) \cdot \theta_{ij}(n_x, n_y), \\
    & j = 1, 2,  \ldots, c_{\rm out}, \; x \in \{x_0, x_1, \cdots, x_m\}, \; y \in \{y_0, y_1, \cdots, y_m\}, \; t \in \{t_0, t_1, \cdots, t_m\}
\label{Eq:2D_CNN}
\end{aligned}
\end{equation}
\end{small}

\noindent where $u_j(x, y, t; \theta_{ij})$ is the final feature map value of the $j$-th channel at $(x, y, t)$; $\Delta x$ and $\Delta y$ are the discretization distances along $x$ and $y$ axes, respectively; $f_i(x + (n_x - 1)\Delta x, y + (n_y - 1)\Delta y, t; \boldsymbol{\omega})$ is the feature map of the $i$-th channel at $(x + (n_x - 1)\Delta x, y + (n_y - 1)\Delta y, t)$ of the penultimate layer; $\boldsymbol{\omega}$ are the kernel parameters related to $f_i$; $\theta_{ij}(n_x, n_y)$ is the final layer parameter value at $(n_x, n_y)$; $c_{\rm in}$ is the number of channels of the input into the last layer; $c_{\rm out}$ is the number of channel(s) of the output after the last layer; $n_k$ is the kernel size.

Eq. (\ref{Eq:2D_CNN}) indicates that the feature maps of the penultimate layer (i.e., $\boldsymbol{f}$) are derived from the preceding kernel parameters $\boldsymbol{\omega}$, while the CNN outputs are generated by applying the $n_k \times n_k$ kernels $\boldsymbol{\theta}$ on $\boldsymbol{f}$. The process is graphically expressed in Fig. \ref{Fig:last_layer}.

\begin{figure}[htp]
\centering
\vspace*{0mm}
\includegraphics[width=0.8\textwidth]{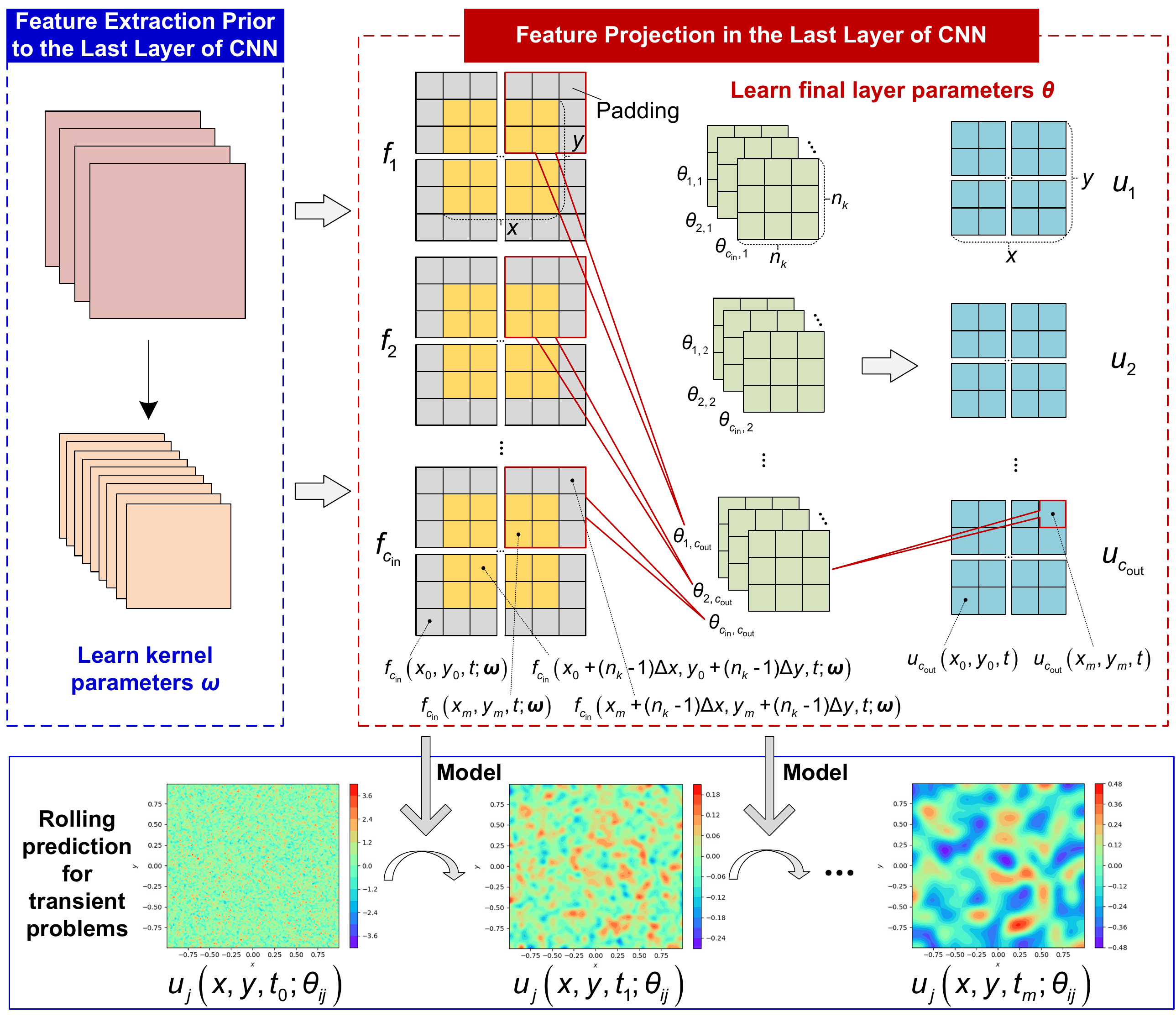}
\vspace*{-0mm}
\caption{CNN architecture diagram. The kernel parameters before the CNN last layer $\boldsymbol{\omega}$ are learned across multiple tasks for more generalizable feature representations. The final layer parameters $\boldsymbol{\theta}$ are learned to project the extracted features for each new PDE scenario. In this work, an autoregressive rollout is employed to handle transient problems.}
\vspace*{0mm}
\label{Fig:last_layer}
\end{figure}

The optimized kernel parameters $\boldsymbol{\omega}^*$ prior to the last layer of the CNN are learned through several epochs of training with the data loss $\mathcal{L}_{\rm Data}$, as computed using $\mathbfcal{T}_{\rm training}$ in Eq. (\ref{Eq:training_kernel_formulation}) 

\begin{equation}
\boldsymbol{\omega}^* = \boldsymbol{\omega}^{(0)}-\sum_{i = 0}^{N-1} \eta \nabla_{\boldsymbol{\omega}}\mathcal L_{\rm Data}(\boldsymbol{\omega}^{(i)}, \mathbfcal{T}_{\rm training})
\label{Eq:training_kernel_formulation}
\end{equation}

\textbf{(2) Physics-informed zero-shot adaptation via optimizing final layer parameters $\boldsymbol{\theta}^*$}

For each subsequent task, physics-informed zero-shot adaptation is applied to the CNN's last layer to get optimized final layer parameters $\boldsymbol{\theta}^*$ by enforcing the relevant physics. The solution for $\boldsymbol{\theta}^*$ can be obtained by incorporating known physics (i.e., PDEs, BCs, and ICs from Eq. (\ref{Eq:PINN})) with Eq. (\ref{Eq:2D_CNN}) and Eq. (\ref{Eq:training_kernel_formulation}) and formulating the problem as a convex optimization problem in Eq. (\ref{Eq:training_formulation}) which can be efficiently and accurately solved via Tikhonov regularization (also called ridge regression) 





\begin{equation}
\boldsymbol{\theta}^* = \mathop{\arg\min}\limits_{\boldsymbol{\theta}} \ [(\boldsymbol{A}_{\boldsymbol{\omega}^*}^{\mathbfcal{T}_{\rm validation}} \boldsymbol{\theta} - \boldsymbol{b}^{\mathbfcal{T}_{\rm validation}})^\top (\boldsymbol{A}_{\boldsymbol{\omega}^*}^{\mathbfcal{T}_{\rm validation}} \boldsymbol{\theta} - \boldsymbol{b}^{\mathbfcal{T}_{\rm validation}}) + \lambda \boldsymbol{\theta}^\top \boldsymbol{\theta}]
\label{Eq:training_formulation}
\end{equation}


\noindent where $\boldsymbol{A}_{\boldsymbol{\omega}^*}^{\mathbfcal{T}_{\rm validation}}$ is the physics-informed matrix related to the outputs obtained through the use of kernel parameters before the CNN's last layer $\boldsymbol{\omega}^*$ for $\mathbfcal{T}_{\rm validation}$; $\boldsymbol{\theta}$ are the CNN final layer parameters; $\boldsymbol{A}_{\boldsymbol{\omega}^*}^{\mathbfcal{T}_{\rm validation}} \boldsymbol{\theta}$ and $\boldsymbol{b}^{\mathbfcal{T}_{\rm validation}}$ are the left and right sides of the physics-informed least-squares problem, respectively; $\lambda$ is the Tikhonov regularization parameter to better condition the convex least-squares solution; $\eta$ is the learning rate. The process to obtain $\boldsymbol{\theta}^*$ for the CNN's last layer is further detailed in Section \ref{subsec:Pseudo-inverse learning}. 

\textbf{Theoretical Analysis.} A key insight is that the model's representational capacity improves sublinearly with additional training epochs and that most of the performance improvement occurs early, especially given that the last layer is to be solved by a convex Tikhonov regularization step. Hence, stopping the learning of $\boldsymbol{\omega}^*$ early and using the physics-informed adaptation step to achieve the best possible prediction can yield strong generalization performance with minimal training as necessary for NOVA. A more complete theoretical analysis is detailed in Supplementary Section 9.

\subsection{Physics-guided neural architecture search for NOVA}\label{subsec:nas_framework}
\subsubsection{Formulation of NAS problem} 


A novel physics-guided NAS framework is proposed to autonomously uncover better architectures with the appropriate inductive biases for physics problems. This involves a reformulation of the entire NAS training and validation methodology. In view of previously demonstrated powerful and robust performance across various fields such as fluid dynamics and generative AI, a generalized U-Net is employed as the architecture backbone (described in Section \ref{NAS search space}).

The physics-guided NAS consists of a bi-level optimization framework. The lower-level optimization aims to refine the kernel parameters and final layer parameters for a given architecture. Meanwhile, the upper-level optimization searches for the best architecture and provides inputs to guide the lower-level optimization. The mathematical formulation of the NAS problem is as per Eq. (\ref{Eq:NAS problem})

\begin{equation}
\label{Eq:NAS problem}
\begin{aligned}
    & {\rm find} \quad \boldsymbol{X} = [\boldsymbol{x}_{\rm encoding }, \, \lambda, \, \lambda_{\rm BC}, \, \lambda_{\rm IC}] \\
    & {\rm min} \quad f_p(\boldsymbol{X}; \ \boldsymbol{\omega}^*, \, \boldsymbol{\theta}^*) \\
    & {\rm s.t.} \quad \ \boldsymbol{\omega}^*, \, \boldsymbol{\theta}^* \Leftarrow{} \ \text{Eqs. (7) and (8)}
\end{aligned}
\end{equation}

\noindent where $\boldsymbol{X}$ are the design variables of the NAS problem; $\boldsymbol{x}_{\rm encoding}$ represents the encoding sequence for the architecture representation; $f_p$ is the predictive performance on the validation dataset, evaluated using RMSE; $\lambda$ is the regularization parameter in Eq. (\ref{Eq:training_formulation}); $\lambda_{\rm BC}$ and $\lambda_{\rm IC}$ are the scaling factors for enforcing BCs and ICs in Eq. (\ref{Eq:pseudo_inverse}). 

The NAS and evaluation workflow of NOVA is detailed in Section \ref{subsec:Process}, and is briefly summarized as follows. 

\textbf{Step 1:} For a given architecture $\boldsymbol{x}_{\rm encoding}$, a few epochs are performed on the NAS training dataset $\mathbfcal{T}_{\rm training\_NAS}$ to optimize the kernel parameters $\boldsymbol{\omega}^*$. If training data is available, the data loss defined in Eq. (\ref{Eq:data loss}) is used; otherwise, the physics loss defined in Supplementary Eq. (2) can be employed.

\textbf{Step 2:} A convexified physics-informed adaptation is applied to obtain optimized final layer parameters $\boldsymbol{\theta}^*$ for each task in the NAS validation dataset $\mathbfcal{T}_{\rm validation\_NAS}$. The predictive performance (i.e., RMSE in this work) of the architecture is then evaluated on $\mathbfcal{T}_{\rm validation\_NAS}$.

\textbf{Step 3:} Based on the current architectures and their associated performance, the optimizer generates a set of promising candidate architectures and corresponding scaling factors (i.e., design variables $\boldsymbol{X}$).

\textbf{Step 4:} If the termination condition is not met, the process returns to Step 1. Otherwise, the best-found architecture is selected for retraining and final evaluation on the test tasks $\mathbfcal{T}_{\rm test}$, which include both in-distribution tasks $\mathbfcal{T}_{\rm test\_id}$ and out-of-distribution tasks $\mathbfcal{T}_{\rm test\_ood}$.

In this work, we adapt a previously developed adaptive surrogate-assisted evolutionary algorithm \cite{wei2025continuous} as the optimizer to efficiently solve the NAS problem. Briefly, a Kriging model is employed to approximate the predictive performance of each candidate neural architecture. To effectively balance exploration and exploitation, we employ a two-level search strategy. The evolution-based global exploration mechanism promotes architectural diversity and mitigates premature convergence, while a clustering-guided local infill sampling strategy refines the search around promising regions. This coordinated approach accelerates convergence and facilitates the discovery of high-performing architectures with significantly reduced computational cost. Further details of the NAS algorithm can be found in \cite{wei2025continuous}.

\subsubsection{NAS search space} \label{NAS search space}
In this work, a generalized U-Net is established as the backbone for NAS (illustrated in Supplementary Fig. 3). U-Net is an advanced CNN model with a characteristic U-shape architecture that has shown great success in various domains such as fluid dynamic regression, biomedical image segmentation, and generative AI \cite{ronneberger2015u,williams2024unified}. Similar to the basic U-Net, the generalized U-Net retains the conventional encoder-decoder architecture with skip connections. The main difference is that the operators and executive sequences in each downsampling and upsampling cells can be changed, thereby unveiling high-performing architectures through NAS. A continuous encoding method is established to describe the connections and executive sequences of the operators as per Eq. (\ref{Eq:encoding})

\begin{equation}
\begin{aligned}
    \boldsymbol{x}_{c\_i} = [n_{no1\_i}, \ n_{no2\_i}], \quad i = 3, 4, \cdots, (n_n + 2)
\label{Eq:encoding}
\end{aligned}
\end{equation}

\noindent where $\boldsymbol{x}_{c\_i}$ is the encoding sequence of each node, and each node receives two outputs from the previous cells or nodes \cite{liu2018darts}; $n_{no1\_i}$ and $n_{no2\_i}$ are the elements linked to the first and second cell or node for Node $i$, respectively; $n_n$ is the number of intermediate nodes of each cell. 

The integer parts of $n_{no1\_i}$ and $n_{no2\_i}$ are the indices of the first and the second cell or node linked to Node $i$, respectively. The fractional parts of $n_{no1\_i}$ and $n_{no2\_i}$ indicate the first and the second downsampling, upsampling or normal operators for Node $i$. The $j$-th operator of the downsampling, upsampling or normal operators is selected if the fractional part of $n_{no1\_i}$ or $n_{no2\_i}$ is located in $[(j-1)/n_o, j/n_o]$, where $n_o$ is the number of downsampling, upsampling or normal operators. The sets of downsampling, upsampling, and normal operators are selected based on their prevalence in the CNN literature, and are listed in Supplementary Table 7 \cite{9201169}. 



\subsubsection{Physics-informed zero-shot adaptation}\label{subsec:Pseudo-inverse learning}


The physics-informed zero-shot adaptation aims to find final layer parameters $\boldsymbol{\theta}^*$ in Eq. (\ref{Eq:2D_CNN}) such that the solution $u_j(x, y, t; \theta_{ij})$ best satisfies Eq. (\ref{Eq:PINN}). Utilizing Eq. (\ref{Eq:2D_CNN}), the physics-informed adaptation problem can be formulated as a convex problem through the substitution of a set of PDE, BC and/or IC collocation points for any new task(s) $\mathbfcal{T_{\rm new}}$ in Eq. (\ref{Eq:pseudo_inverse})

\begin{scriptsize}
\begin{equation}
\begin{array}{c}
\boldsymbol{A}_{\boldsymbol{\omega}^*}^{\mathbfcal{T_{\rm new}}} \boldsymbol{\theta} = \boldsymbol{b}^{\mathbfcal{T_{\rm new}}} \\ [15pt]
\begin{bmatrix}
    \cdots & \mathcal{N}[f_i^\mathbfcal{T_{\rm new}}(x_0^{\rm PDE} + (n_x - 1) \Delta x, y_0^{\rm PDE} + (n_y - 1) \Delta y, t; \boldsymbol{\omega}^*)] & \cdots \\
    \vdots & \vdots & \vdots \\
    \cdots & \mathcal{N}[f_i^\mathbfcal{T_{\rm new}}(x_{n_{\rm PDE}-1}^{\rm PDE} + (n_x - 1) \Delta x, y_{n_{\rm PDE}-1}^{\rm PDE} + (n_y - 1) \Delta y, t; \boldsymbol{\omega}^*)] & \cdots \\
    \quad & \quad & \quad \\ \hdashline
    \quad & \quad & \quad \\
    \cdots & \lambda_{\rm BC} \cdot \mathcal{B}[f_i^\mathbfcal{T_{\rm new}}(x_0^{\rm BC} + (n_x - 1) \Delta x, y_0^{\rm BC} + (n_y - 1) \Delta y, t; \boldsymbol{\omega}^*)] & \cdots \\
    \vdots & \vdots & \vdots \\
    \cdots & \lambda_{\rm BC} \cdot \mathcal{B}[f_i^\mathbfcal{T_{\rm new}}(x_{n_{\rm BC}-1}^{\rm BC} + (n_x - 1) \Delta x, y_{n_{\rm BC}-1}^{\rm BC} + (n_y - 1) \Delta y, t; \boldsymbol{\omega}^*)] & \cdots \\
    \quad & \quad & \quad \\ \hdashline
    \quad & \quad & \quad \\
    \cdots & \lambda_{\rm IC} \cdot f_i^\mathbfcal{T_{\rm new}}(x_0^{\rm IC} + (n_x - 1) \Delta x, y_0^{\rm IC} + (n_y - 1) \Delta y, 0; \boldsymbol{\omega}^*) & \cdots \\
    \vdots & \vdots & \vdots \\
    \cdots & \lambda_{\rm IC} \cdot f_i^\mathbfcal{T_{\rm new}}(x_{n_{\rm IC}-1}^{\rm IC} + (n_x - 1) \Delta x, y_{n_{\rm IC}-1}^{\rm IC} + (n_y - 1) \Delta y, 0; \boldsymbol{\omega}^*) & \cdots
\end{bmatrix}
\begin{bmatrix}
    \vdots \\
    \theta_{ij}(n_x, n_y) \\
    \vdots
\end{bmatrix}
= \\ [85pt]
\begin{bmatrix}
    \cdots & \mathcal{S^\mathbfcal{T_{\rm new}}}(x_0^{\rm PDE} + (n_x - 1) \Delta x, y_0^{\rm PDE} + (n_y - 1) \Delta y, t) & \cdots \\
    \vdots & \vdots & \vdots \\ 
    \cdots & \mathcal{S^\mathbfcal{T_{\rm new}}}(x_{n_{\rm PDE}-1}^{\rm PDE} + (n_x - 1) \Delta x, y_{n_{\rm PDE}-1}^{\rm PDE} + (n_y - 1) \Delta y, t) & \cdots \\ 
    \quad & \quad & \quad \\ \hdashline
    \quad & \quad & \quad \\
    \cdots & \lambda_{\rm BC} \cdot g^\mathbfcal{T_{\rm new}}(x_0^{\rm BC} + (n_x - 1) \Delta x, y_0^{\rm BC} + (n_y - 1) \Delta y, t) & \cdots \\
    \vdots & \vdots & \vdots \\
    \cdots & \lambda_{\rm BC} \cdot g^\mathbfcal{T_{\rm new}}(x_{n_{\rm BC}-1}^{\rm BC} + (n_x - 1) \Delta x, y_{n_{\rm BC}-1}^{\rm BC} + (n_y - 1) \Delta y, t) & \cdots \\
    \quad & \quad & \quad \\ \hdashline
    \quad & \quad & \quad \\
    \cdots & \lambda_{\rm IC} \cdot u_0^\mathbfcal{T_{\rm new}}(x_0^{\rm IC} + (n_x - 1) \Delta x, y_0^{\rm IC} + (n_y - 1) \Delta y) & \cdots \\
    \vdots & \vdots & \vdots \\
    \cdots & \lambda_{\rm IC} \cdot u_0^\mathbfcal{T_{\rm new}}(x_{n_{\rm IC}-1}^{\rm IC} + (n_x - 1) \Delta x, y_{n_{\rm IC}-1}^{\rm IC} + (n_y - 1) \Delta y) & \cdots
\end{bmatrix}
\label{Eq:pseudo_inverse}
\end{array}
\end{equation}
\end{scriptsize}

\noindent where $\lambda_{\rm BC}$ and $\lambda_{\rm IC}$ are the scaling factors to enforce BCs and ICs, respectively; $n_{\rm PDE}$, $n_{\rm BC}$, and $n_{\rm IC}$ are the respective number of PDE, BC, and IC points used to construct the matrix for determining the final layer parameters.

In Eq. (\ref{Eq:pseudo_inverse}), the spatial and temporal derivatives of the PDE part in $\boldsymbol{A}_{\boldsymbol{\omega}^*}^{\mathbfcal{T_{\rm new}}}$ are calculated through the method described in Supplementary Section 6. Once $\boldsymbol{A}_{\boldsymbol{\omega}^*}^{\mathbfcal{T_{\rm new}}}$ and $\boldsymbol{b}^{\mathbfcal{T_{\rm new}}}$ are defined, the optimal final layer parameters $\boldsymbol{\theta}^*$ can be calculated by a rapid Tikhonov regularization solution for Eq. (\ref{Eq:pseudo_inverse}) as per Eq. (\ref{Eq:theta}).

\begin{equation}
\boldsymbol{\theta}^*= \left\{
\begin{aligned}
    & \Big[\lambda \boldsymbol{I} + \big(\boldsymbol{A}_{\boldsymbol{\omega}^*}^{\mathbfcal{T_{\rm new}}}\big)^\mathrm{T} \boldsymbol{A}_{\boldsymbol{\omega}^*}^{\mathbfcal{T_{\rm new}}}\Big]^{-1} \big(\boldsymbol{A}_{\boldsymbol{\omega}^*}^{\mathbfcal{T_{\rm new}}}\big)^\mathrm{T} \boldsymbol{b}^{\mathbfcal{T_{\rm new}}}, & \text{if the system is over-determined} \\
    & \big(\boldsymbol{A}_{\boldsymbol{\omega}^*}^{\mathbfcal{T_{\rm new}}}\big)^\mathrm{T}
    \Big[\lambda \boldsymbol{I} +  \boldsymbol{A}_{\boldsymbol{\omega}^*}^{\mathbfcal{T_{\rm new}}} \big(\boldsymbol{A}_{\boldsymbol{\omega}^*}^{\mathbfcal{T_{\rm new}}}\big)^\mathrm{T} \Big]^{-1} \boldsymbol{b}^{\mathbfcal{T_{\rm new}}}, & \text{if the system is under-determined}
\end{aligned}
\right.
\label{Eq:theta}
\end{equation}

Note that $\boldsymbol{\theta}^*$ can be calculated directly by Eq. (\ref{Eq:theta}) in one step if the PDE system of Eq. (\ref{Eq:PINN}) is linear. For nonlinear PDEs, the lagging of coefficient method is used \cite{pletcher2012computational}. 

\subsubsection{NAS and prediction with NOVA}\label{subsec:Process}

The NAS and prediction workflow of NOVA is illustrated in Fig. \ref{Fig:whole_process}. The entire process comprises two phases, i.e., NAS and test (prediction for a new scenario). The NAS phase searches for optimal neural architecture that performs well on the validation dataset through a data-free physics-informed adaptation. The test phase uses a final selected architecture to compute predictions for test PDE scenarios of interest, including for instances which vary from the training data distribution.

\begin{figure}[htp]
\centering
\vspace*{0mm}
\includegraphics[width=1.0\textwidth]{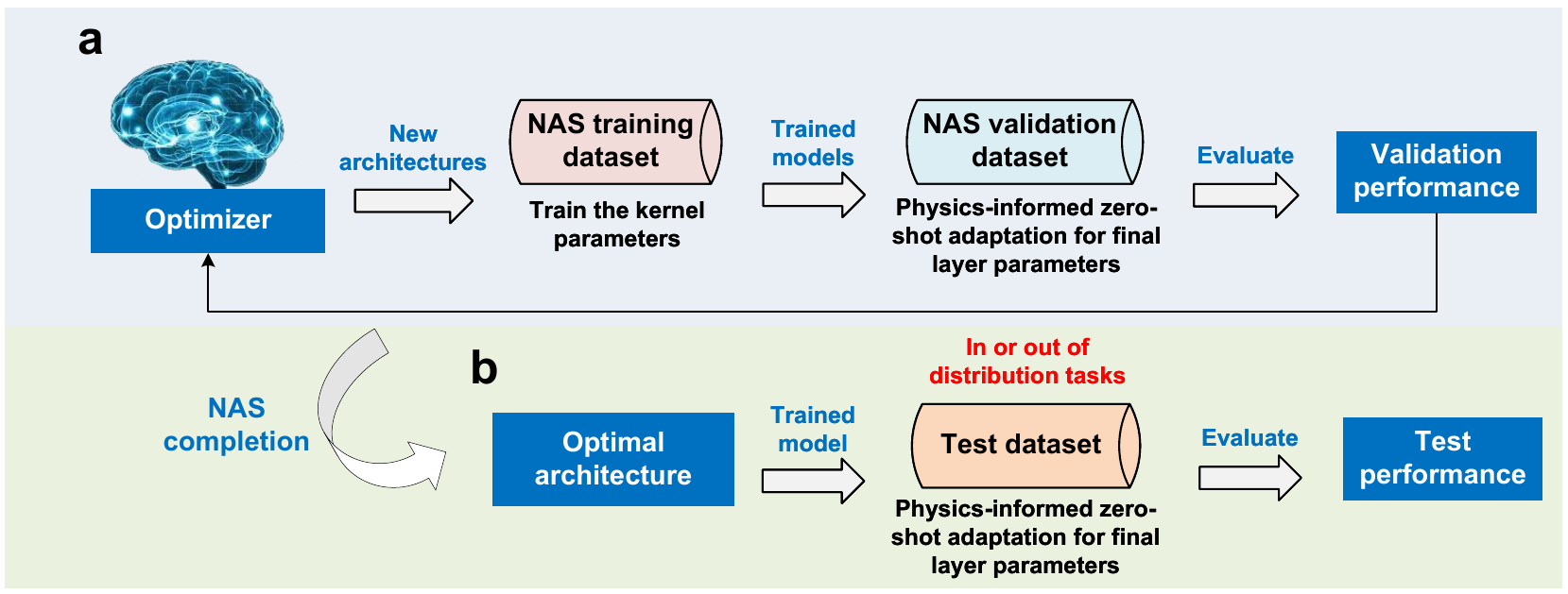}
\vspace*{-0mm}
\caption{NAS and prediction workflow of NOVA. \textbf{a}. NAS stage. In the NAS stage, the optimizer iteratively proposes promising architectures for evaluation. For each candidate architecture, lightweight training is conducted on the NAS training dataset $\mathbfcal{T}_{\rm training\_NAS}$ to get the optimized kernel parameters $\boldsymbol{\omega}^*$. Subsequently, physics-informed zero-shot adaptation is applied to learn the final layer parameters $\boldsymbol{\theta}^*$, after which the validation performance is assessed on $\mathbfcal{T}_{\rm validation\_NAS}$. This process continues until a predefined termination criterion is met. The architecture with the best validation performance is then selected for testing. \textbf{b}. Test stage. In the test stage, the selected optimal architecture undergoes lightweight retraining on the entire training dataset $\mathbfcal{T}_{\rm training}$ to obtain the refined kernel parameters $\boldsymbol{\omega}^*$. Physics-informed zero-shot adaptation is then applied to both in-distribution and out-of-distribution tasks in $\mathbfcal{T}_{\rm test}$ to learn the final layer parameters $\boldsymbol{\theta}^*$ and perform the final evaluation.}
\vspace*{0mm}
\label{Fig:whole_process}
\end{figure}

The physics-guided NAS process is outlined in Algorithm \ref{algorithm_NAS}. As depicted in Fig. \ref{Fig:whole_process}, the NAS process consists of two levels: the upper level searches for the best architecture and provides inputs for the lower level, while the lower level evaluates the validation performance of the input architecture through physics-informed zero-shot adaptation. 

Initially, as the current number of function evaluations $\rm NFE$ is less than the maximum number of function evaluations $\rm NFE_{max}$, the optimizer generates new encoding sequences $\boldsymbol{x}_{\rm encoding}$ (i.e., new architectures), corresponding regularization parameter $\lambda$, BC scaling factor $\lambda_{\rm BC}$, and IC scaling factor $\lambda_{\rm IC}$ for subsequent training and evaluation. During training, several epochs are applied to the new promising architectures on the NAS training dataset $\mathbfcal{T}_{\rm training\_NAS}$ with data loss as defined in Eq. (\ref{Eq:data loss}) to learn the features and obtain the optimized kernel parameters $\boldsymbol{\omega}^*$. 

For each task in the NAS validation dataset $\mathbfcal{T}_{\rm validation\_NAS}$, the final layer parameters at time step $t_p$, i.e., $\boldsymbol{\theta}^{*\mathcal{T}_j(t_p)}$, are calculated by Eq. (\ref{Eq:theta}) to guarantee the solution satisfies the underlying physics. For transient scenarios, predictions for each time-step is obtained via an autoregressive procedure. This process is repeated until solutions at all required time steps are obtained. The RMSE on $\mathbfcal{T}_{\rm validation\_NAS}$ for the new architectures, denoted as ${\rm RMSE}_{\rm validation}$, is calculated and saved into the database. Once $\rm NFE$ exceeds $\rm NFE_{max}$, the best architecture with the minimal ${\rm RMSE}_{\rm validation}$ is selected from the database.

The test process of NOVA, outlined in Algorithm \ref{algorithm_test}, is similar to the NAS validation process. The key difference is that only the best architecture obtained from Algorithm \ref{algorithm_NAS} is used. 

\begin{algorithm}
\caption{NAS for NOVA}\label{algorithm_NAS}
\begin{algorithmic}[1]
\renewcommand{\algorithmicrequire}{\textbf{Input:}}
\renewcommand{\algorithmicensure}{\textbf{Output:}}
\Require NAS training dataset $\mathbfcal{T}_{\rm training\_NAS}$, NAS validation dataset $\mathbfcal{T}_{\rm validation\_NAS}$, maximum number of function evaluations of optimizer ${\rm NFE_{max}}$, initial condition $\boldsymbol{u}(x, y, 0)$
\Ensure Best architecture $\boldsymbol{x}_{\rm encoding}^*$, best regularization parameter $\lambda^*$, best BC scaling factor $\lambda^*_{\rm BC}$, best IC scaling factor $\lambda^*_{\rm IC}$
\While{${\rm NFE} < {\rm NFE_{max}}$}
    \State $\boldsymbol{x}_{\rm encoding}, \lambda, \lambda_{\rm BC}, \lambda_{\rm IC} \Leftarrow {\rm Optimizer}$
    \For{$\mathcal{T}_i \in \mathbfcal{T}_{\rm training\_NAS}$}
        \State $\boldsymbol{\omega}^* \Leftarrow {\rm GradientDescent}(\boldsymbol{x}_{\rm encoding}, \mathcal{T}_i)$
    \EndFor
    \For{$\mathcal{T}_j \in \mathbfcal{T}_{\rm validation\_NAS}$}
        \For{$p = 1, 2, \cdots, n_t$}
            \State $\boldsymbol{\theta}^{*\mathcal{T}_j(t_p)} \Leftarrow {\rm PseudoInverse}(\boldsymbol{x}_{\rm encoding}, \boldsymbol{\omega}^*, \lambda, \lambda_{\rm BC}, \lambda_{\rm IC}, \mathcal{T}_j,$
            \State \qquad \qquad \quad $\boldsymbol{u}^{\mathcal{T}_j}(x, y, (p-1)\Delta t))$
            \For{$k = 1, 2, \cdots, N$}
                \State $\boldsymbol{\theta}^{*\mathcal{T}_j(t_p)} \Leftarrow {\rm LaggingCoefficientPseudoInverse}(\boldsymbol{x}_{\rm encoding}, \boldsymbol{\omega}^*, \lambda, \lambda_{\rm BC},$
                \State \qquad \qquad \quad $\lambda_{\rm IC}, \boldsymbol{\theta}^{*\mathcal{T}_j(t_p)}, \mathcal{T}_j, \boldsymbol{u}^{\mathcal{T}_j}(x, y, (p-1)\Delta t)$
            \EndFor
            \State $\boldsymbol{u}^{\mathcal{T}_j}(x, y, p\Delta t) \Leftarrow {\rm Prediction}(\boldsymbol{x}_{\rm encoding}, \boldsymbol{\omega}^*, \lambda, \lambda_{\rm BC}, \lambda_{\rm IC}, \boldsymbol{\theta}^{*\mathcal{T}_j(t_p)}, \mathcal{T}_j,$
            \State \qquad \qquad \qquad \qquad $\boldsymbol{u}^{\mathcal{T}_j}(x, y, (p-1)\Delta t))$
        \EndFor
        \State ${{\rm RMSE}_{\rm validation}^{\mathcal{T}_j}} \Leftarrow {\rm Evaluation}(\boldsymbol{u}^{\mathcal{T}_j}(x, y, p\Delta t), \mathcal{T}_j)$
    \EndFor
    \State ${\rm RMSE}_{\rm validation} \Leftarrow {\rm Mean}({{\rm RMSE}_{\rm validation}^{\mathcal{T}_j}})$
    \State Save $\boldsymbol{x}_{\rm encoding}, \lambda, \lambda_{\rm BC}, \lambda_{\rm IC}, \boldsymbol{u}^{\mathcal{T}_j}(x, y, p\Delta t), {\rm RMSE}_{\rm validation}^{\mathcal{T}_j}, {\rm RMSE}_{\rm validation}$ into Database
    \State ${\rm NFE = NFE} + 1$
\EndWhile
\State $\boldsymbol{x}_{\rm encoding}^*, \lambda^*, \lambda^*_{\rm BC}, \lambda^*_{\rm IC} \Leftarrow {\rm Selection(Database)}$
\end{algorithmic}
\end{algorithm}

\begin{algorithm}
\caption{Test for NOVA}\label{algorithm_test}
\begin{algorithmic}[1]
\renewcommand{\algorithmicrequire}{\textbf{Input:}}
\renewcommand{\algorithmicensure}{\textbf{Output:}}
\Require NAS training dataset $\mathbfcal{T}_{\rm training\_NAS}$, NAS validation dataset $\mathbfcal{T}_{\rm validation\_NAS}$, test dataset $\mathbfcal{T}_{\rm test}$, initial condition $\boldsymbol{u}(x, y, 0)$, best architecture $\boldsymbol{x}_{\rm encoding}^*$, best regularization parameter $\lambda^*$, best BC scaling factor $\lambda^*_{\rm BC}$, best IC scaling factor $\lambda^*_{\rm IC}$ 
\Ensure Test root mean squared error ${{\rm RMSE}_{\rm test}^{\mathcal{T}_j}}$, best kernel parameters before CNN last layer $\boldsymbol{\omega}^*$
\For{$\mathcal{T}_i \in \mathbfcal{T}_{\rm training\_NAS} \cup \mathbfcal{T}_{\rm validation\_NAS}$}
    \State $\boldsymbol{\omega}^* \Leftarrow {\rm GradientDescent}(\boldsymbol{x}_{\rm encoding}^*, \mathcal{T}_i)$
\EndFor
\For{$\mathcal{T}_j \in \mathbfcal{T}_{\rm test}$}
    \For{$p = 1, 2, \cdots, n_t$}
        \State $\boldsymbol{\theta}^{\mathcal{T}_j(t_p)} \Leftarrow {\rm PseudoInverse}(\boldsymbol{x}_{\rm encoding}^*, \boldsymbol{\omega}^*, \lambda^*, \lambda^*_{\rm BC}, \lambda^*_{\rm IC}, \mathcal{T}_j,$
        \State \qquad \qquad \ \, $\boldsymbol{u}^{\mathcal{T}_j}(x, y, (p-1)\Delta t))$
        \For{$k = 1, 2, \cdots, N$}
            \State $\boldsymbol{\theta}^{\mathcal{T}_j(t_p)} \Leftarrow {\rm LaggingCoefficientPseudoInverse}(\boldsymbol{x}_{\rm encoding}^*, \boldsymbol{\omega}^*, \lambda^*, \lambda^*_{\rm BC}, \lambda^*_{\rm IC},$
            \State \qquad \qquad \ \, $\boldsymbol{\theta}^{\mathcal{T}_j(t_p)}, \mathcal{T}_j, \boldsymbol{u}^{\mathcal{T}_j}(x, y, (p-1)\Delta t)$
        \EndFor
        \State $\boldsymbol{u}^{\mathcal{T}_j}(x, y, p\Delta t) \Leftarrow {\rm Prediction}(\boldsymbol{x}_{\rm encoding}^*, \boldsymbol{\omega}^*, \lambda^*, \lambda^*_{\rm BC}, \lambda^*_{\rm IC}, \boldsymbol{\theta}^{\mathcal{T}_j(t_p)}, \mathcal{T}_j,$
        \State \qquad \qquad \qquad \qquad $\boldsymbol{u}^{\mathcal{T}_j}(x, y, (p-1)\Delta t))$
    \EndFor
    \State ${{\rm RMSE}_{\rm test}^{\mathcal{T}_j}} \Leftarrow {\rm Evaluation}(\boldsymbol{u}^{\mathcal{T}_j}(x, y, p\Delta t), \mathcal{T}_j)$
\EndFor
\end{algorithmic}
\end{algorithm}

\subsection{Data generation}\label{subsec:data generation}

\subsubsection{Diffusion-reaction equations}\label{subsubsec:DR}


For the in-distribution tasks, the dataset is directly adopted from \cite{takamoto2022pdebench}. In \cite{takamoto2022pdebench}, 1000 cases with different ICs are generated, comprising 900 for training ($\mathbfcal{T}_{\rm training}$) and 100 for test ($\mathbfcal{T}_{\rm test\_id}$). In this work, the 900 training cases $\mathbfcal{T}_{\rm training}$ are split into 810 cases for NAS training $\mathbfcal{T}_{\rm training\_NAS}$ and 90 cases for NAS validation $\mathbfcal{T}_{\rm validation\_NAS}$. The ground truth for $\mathbfcal{T}_{\rm training}$ and $\mathbfcal{T}_{\rm test\_id}$ is directly obtained from \cite{takamoto2022pdebench}. 

For the out-of-distribution data, we generate distinct ICs that differ significantly from those in \cite{takamoto2022pdebench}. The ground truth for $\mathbfcal{T}_{\rm test\_ood}$ is generated using an in-house second-order finite-volume solver. Simulations are performed over the domain $x, y \in [-1, 1]$ from $t = 0$ to $t = 5$. To ensure temporal accuracy, a third-order strong stability-preserving Runge-Kutta method \cite{Gottlieb01SSPRK} is employed with a time step size of $1 \times 10^{-3}$. Zero flux Neumann BCs are applied.

The datasets $\mathbfcal{T}_{\rm training}$, $\mathbfcal{T}_{\rm test\_id}$, and $\mathbfcal{T}_{\rm test\_ood}$ are each discretized into $n_x = 128$, $n_y = 128$, and $n_t = 101$ points along the $x$, $y$, and $t$ dimensions, respectively.

\subsubsection{Navier-Stokes equations}


For the in-distribution tasks, 45 cases with varying geometries are generated as the training dataset $\mathbfcal{T}_{\rm training}$, where the number of inlet channels ranges from one to three. Among these, 40 cases are used as the NAS training dataset $\mathbfcal{T}_{\rm training\_NAS}$, and the remaining 5 cases as the NAS validation dataset $\mathbfcal{T}_{\rm validation\_NAS}$. An additional 20 test cases, with one to three inlet channels, are generated as $\mathbfcal{T}_{\rm test\_id}$. 

For the out-of-distribution test, new geometries with four inlet channels $\mathbfcal{T}_{\rm test\_ood}$ are generated.

The ground truth for velocity and pressure is computed using the high-fidelity CFD solver developed in \cite{Chiu2021DFIB,Chiu2023cDFIB}, which employs a finite volume-based dispersion-relation-preserving scheme for approximating the advection term and an improved differentially interpolated direct forcing immersed boundary method for handling irregular geometries. The spatial discretization in both the $x$ and $y$ dimensions is set to $n_x = 128$ and $n_y = 128$. The inlet BC is set as $u(x, y) = 1$ and $v(x, y) = 0$, while the outlet BC is $p(x, y) = 0$. No-slip conditions are imposed on the walls, meaning both $u(x, y)$ and $v(x, y)$ are zero along the wall boundaries. Neumann BC is applied to $p(x, y)$ at the walls. 

\subsubsection{Nonlinear heat equation}

For the nonlinear heat equation, the analytical solution is provided in Eq. (\ref{Eq:2D_heat_id}). The source term $\mathcal{S}(x, y, t)$ is derived to be self-consistent with the analytical solution. 

\begin{equation}
    u(x, y, t) = k_1 {\rm sin}(\pi x) {\rm exp}(-\gamma k_1 x^2) {\rm exp}(-\gamma t^2) + k_2 {\rm sin}(\pi y) {\rm exp}(-\gamma k_2 y^2) {\rm exp}(-\gamma t^2)
\label{Eq:2D_heat_id}
\end{equation}

For the in-distribution tasks, 40 sets of $\gamma, k_1, k_2$ are randomly sampled from the range $[1, \pi]$ to form the training dataset $\mathbfcal{T}_{\rm training}$, of which 36 cases are used as NAS training dataset $\mathbfcal{T}_{\rm training\_NAS}$, and 4 cases are used as NAS validation dataset $\mathbfcal{T}_{\rm validation\_NAS}$. An additional 10 sets of $\gamma, k_1, k_2$ sampled from the same range are used to construct the test dataset $\mathbfcal{T}_{\rm test\_id}$.

For the out-of-distribution evaluation, we design three modified source terms, each corresponding to a new analytical solution, denoted as F1, F2, and F3. These analytical solutions are given in Eqs. (\ref{Eq:2D_heat_F1})–(\ref{Eq:2D_heat_F3}). The same 10 sets of PDE parameters $\gamma, k_1, k_2$ from $\mathbfcal{T}_{\rm test\_id}$ are used to create new datasets for these three new problems, i.e., $\mathbfcal{T}_{{\rm test\_ood}\_F1}$, $\mathbfcal{T}_{{\rm test\_ood}\_F2}$, and $\mathbfcal{T}_{{\rm test\_ood}\_F3}$.

\begin{subequations}
\label{Eq:2D_heat_F1-3}
    \begin{numcases}{u(x, y, t) =}
        k_1 {\rm cos}(\pi x) {\rm exp}(-\gamma k_1 x^2) {\rm exp}(-\gamma t^2) + k_2 {\rm cos}(\pi y) {\rm exp}(-\gamma k_2 y^2) {\rm exp}(-\gamma t^2) \label{Eq:2D_heat_F1} \\ 
        k_1 k_2 {\rm sin}(\pi x) {\rm sin}(\pi y) {\rm exp}(-\gamma k_1 x^2) {\rm exp}(-\gamma k_2 y^2) {\rm exp}(-\gamma t^2) \label{Eq:2D_heat_F2} \\
        k_1 k_2 {\rm cos}(\pi x) {\rm cos}(\pi y) {\rm exp}(-\gamma k_1 x^2) {\rm exp}(-\gamma k_2 y^2) {\rm exp}(-\gamma t^2) \label{Eq:2D_heat_F3}
    \end{numcases}
\end{subequations}

The ground truth for $\mathbfcal{T}_{\rm training}$ and $\mathbfcal{T}_{\rm test\_id}$ is given by Eq. (\ref{Eq:2D_heat_id}), while the ground truth for $\mathbfcal{T}_{{\rm test\_ood}\_F1}$, $\mathbfcal{T}_{{\rm test\_ood}\_F2}$, and $\mathbfcal{T}_{{\rm test\_ood}\_F3}$ is provided by Eqs. (\ref{Eq:2D_heat_F1})-(\ref{Eq:2D_heat_F3}). The discretizations for $x$, $y$, and $t$ dimensions are $n_x = 64$, $n_y = 64$, and $n_t = 21$, respectively. Dirichlet BC is employed with values at the boundary determined based on ensuring self-consistency with the analytical solutions.

\subsection{NAS settings and training details}\label{subsec:training details}

\subsubsection{NAS settings}
For the generalized U-Net backbone of NOVA and NAS-U-Net, the number of downsampling and upsampling cells are set to four and three respectively, with each cell containing three intermediate nodes. The structures of all the downsampling or upsampling cells are kept the same. Consequently, the dimensionality of the encoding sequence $\boldsymbol{x}_{\rm encoding}$ is $2 \times 3 + 2 \times 3 = 12$. Since the rolling prediction method is adopted to predict the solution at each time step as described in Supplementary Section 6, the dimensionality of the NAS problem is 14. 

\subsubsection{Training details}

\hspace*{1.5em}\textbf{(1) Training details for NOVA}

For the nonlinear heat, flow mixing, diffusion-reaction, and Navier-Stokes equations, each architecture is trained for five epochs on the NAS training dataset $\mathbfcal{T}_{\rm training\_NAS}$ during the NAS phase to identify the architecture with the lowest RMSE on the NAS validation dataset $\mathbfcal{T}_{\rm validation\_NAS}$. After completing the physics-guided NAS, the best-performing architecture (i.e., the one achieving minimal RMSE on $\mathbfcal{T}_{\rm validation\_NAS}$) is re-trained on the combined dataset $\mathbfcal{T}_{\rm training\_NAS} \cup \mathbfcal{T}_{\rm validation\_NAS}$. Using five epochs with the Adam optimizer and a fixed learning rate of $5 \times 10^{-4}$, the optimized kernel parameters $\boldsymbol{\omega}^*$ in Eq. (\ref{Eq:training_kernel_formulation}) are learned. NOVA is then evaluated on $\mathbfcal{T}_{\rm test\_id}$ and $\mathbfcal{T}_{\rm test\_ood}$ by using the learned $\boldsymbol{\omega}^*$ together with the final layer parameters $\boldsymbol{\theta}^*$ as obtained through the physics-informed zero-shot adaptation procedure as per Eq. (\ref{Eq:theta}), as detailed in Algorithm \ref{algorithm_NAS} and Algorithm \ref{algorithm_test}. The convolutional kernel size is set to $5 \times 5$ for the  nonlinear heat, flow mixing, and diffusion-reaction equations, and to $7 \times 7$ for the Navier-Stokes equations. 

\textbf{(2) Training details for NAS-U-Net}

In contrast, the data-driven NAS-U-Net is trained for 20 epochs on the NAS training dataset $\mathbfcal{T}_{\rm training\_NAS}$ during the NAS phase to identify the architecture that achieves the lowest RMSE on the NAS validation dataset $\mathbfcal{T}_{\rm validation\_NAS}$. The selected architecture is then re-trained for 500 epochs on the combined dataset $\mathbfcal{T}_{\rm training\_NAS} \cup \mathbfcal{T}_{\rm validation\_NAS}$ to ensure model convergence. The NAS-U-Net is then evaluated on $\mathbfcal{T}_{\rm test\_id}$ and $\mathbfcal{T}_{\rm test\_ood}$ to obtain the final test results. Training is performed using the Adam optimizer with an initial learning rate of $5 \times 10^{-4}$. A reduce-on-plateau learning rate scheduler is applied, with a patience of 10 steps and a reduction factor of 0.7.

\textbf{(3) Training details for other models}

For the diffusion-reaction equations, the results of FNO, U-Net, and MLP-PINN are directly taken from \cite{takamoto2022pdebench}. In that study, FNO and U-Net are trained for 500 epochs using the Adam optimizer, with an initial learning rate of $1 \times 10^{-3}$, which is halved every 100 epochs. MLP-PINN is trained for 15000 epochs using the Adam optimizer with a fixed learning rate of $1 \times 10^{-3}$.

The maximum number of function evaluations ${\rm NFE_{max}}$ for NAS in both NOVA and NAS-U-Net is set to 200. Additionally, all experiments are implemented in the PyTorch framework and are run on an Intel Xeon Gold 6336Y 2.40GHz CPU workstation with a NVIDIA RTX A5000 GPU.

\subsection{NOVA for guided generative design}\label{subsec:guided diffusion}

\subsubsection{Guided diffusion process}\label{Guided diffusion process}

Diffusion models represent a powerful class of deep generative models, primarily used in machine learning for image generation, and have gained significant attention across various domains. The core concept of diffusion models is to simulate how data is gradually corrupted by noise and then learn how to reverse this process to generate new data samples. 

During the forward process, a small amount of Gaussian noise is added at each step, transforming the original data into pure noise through a Markov chain as per Eq. (\ref{diffusion process})

\begin{equation}
    q(\boldsymbol{x}_t | \boldsymbol{x}_{t-1}) = \mathcal{N}(\boldsymbol{x}_t; \; \sqrt{1 - \beta_t}\boldsymbol{x}_{t-1}, \; \beta_t\boldsymbol{I})
\label{diffusion process}
\end{equation}

\noindent where $\beta_t \in (0, 1)$ is a hyperparameter selected prior to model training. In contrast, the denoising process aims to progressively remove the noise as predicted through a neural network $\boldsymbol{\xi}$ to generate ``denoised'' samples as per Eq. (\ref{denoising process})

\begin{equation}
    p_{\boldsymbol{\xi}}(\boldsymbol{x}_{t-1} | \boldsymbol{x}_t) = \mathcal{N}(\boldsymbol{x}_{t-1}; \; \boldsymbol{\mu}_{\boldsymbol{\xi}}(\boldsymbol{x}_t), \; \boldsymbol{\Sigma}_{\boldsymbol{\xi}}(\boldsymbol{x}_t))
\label{denoising process}
\end{equation}

\noindent where $\boldsymbol{\mu}_{\boldsymbol{\xi}}(\boldsymbol{x}_t)$ and $\boldsymbol{\Sigma}_{\boldsymbol{\xi}}(\boldsymbol{x}_t)$ are the predicted mean and covariance of the denoising process. New samples can then be generated by sampling from $\mathcal{N}(0, 1)$ to undergo the denoising process according to Eq. (\ref{denoising process}).

\label{conditional denoising process}
In many AI for Science applications, novel designs (samples) are typically required, but these must satisfy specific performance criteria. Guided diffusion models provide a promising approach to guide the sampling process to meet these requirements \cite{dhariwal2021diffusion}. One effective way is to use the gradients from a differentiable performance evaluator to guide the pre-trained diffusion model \cite{maze2023diffusion}, as this eliminates the need to modify the diffusion model, which is a significant advantage given the advent of foundation models in Science. In the context of Navier-Stokes equations, a smaller pressure drop corresponds to smaller pressure gradients, leading to smoother and more stable flows without sudden accelerations or decelerations. Thus, we use the pressure drop between the inlet and outlet ($\Delta P$) as our simulated objective for the generative design process. The conditional denoising process is formulated in Eq. (\ref{conditional denoising process}) 

\begin{equation}
    \hat{p}_{\boldsymbol{\xi}}(\boldsymbol{x}_{t-1} | \boldsymbol{x}_t) = \mathcal{N}(\boldsymbol{x}_{t-1}; \; \boldsymbol{\mu}_{\boldsymbol{\xi}}(\boldsymbol{x}_t) - s\boldsymbol{\Sigma}_{\boldsymbol{\xi}}(\boldsymbol{x}_t) \nabla_{\boldsymbol{x}_t} c_{\boldsymbol{\omega}, \boldsymbol{\theta}}(\boldsymbol{x}_t), \; \boldsymbol{\Sigma}_{\boldsymbol{\xi}}(\boldsymbol{x}_t))
\label{conditional denoising process}
\end{equation}

\noindent where $s$ is the gradient scaling factor; $c_{\boldsymbol{\omega}, \boldsymbol{\theta}}(\boldsymbol{x}_t)$ is the predicted pressure drop between the inlet and outlet from the regressor, i.e., the pre-trained NOVA model described above; $\nabla_{\boldsymbol{x}_t} c_{\boldsymbol{\omega}, \boldsymbol{\theta}}(\boldsymbol{x}_t)$ denotes the gradient of $c_{\boldsymbol{\omega}, \boldsymbol{\theta}}(\boldsymbol{x}_t)$ with respect to $\boldsymbol{x}_t$. Similar to the unconditional denoising process, samples are first drawn from $\mathcal{N}(0, 1)$ and then progressively denoised according to Eq. (\ref{conditional denoising process}) to generate the conditional new data. 

\subsubsection{Training details of diffusion model}\label{Training details of diffusion model}

The input resolution for the diffusion model is $64 \times 64$. The training dataset consists of 20000 fluidic channel designs with varying numbers of inlet channels and is split into 80\% for training and 20\% for validation. The model is trained using the Adam optimizer for 60000 epochs with a fixed learning rate of $1 \times 10^{-4}$. Once trained, guidance (i.e., the pressure drop between the inlet and outlet predicted by the pre-trained NOVA) can be incorporated into the denoising process to generate conditional fluidic channel designs.

Since NOVA for Navier-Stokes equations was previously developed using $128 \times 128$ input resolution, the fluidic channel geometries generated by the guided diffusion process are upsampled to $128 \times 128$ for evaluation. The ground truth velocity and pressure fields for each fluidic channel geometry are obtained using the high-fidelity CFD solver developed in \cite{Chiu2021DFIB,Chiu2023cDFIB}.

\bmhead{Supplementary information}

Supplementary materials are provided in a separate file.

\bmhead{Acknowledgements}



This research is supported by the A*STAR Catalyst Project for Artificial Intelligence in Drug Discovery (AIDD) Programme.

\bibliography{sn-bibliography}


\begin{thebibliography}{63}
\ifx \bisbn   \undefined \def \bisbn  #1{ISBN #1}\fi
\ifx \binits  \undefined \def \binits#1{#1}\fi
\ifx \bauthor  \undefined \def \bauthor#1{#1}\fi
\ifx \batitle  \undefined \def \batitle#1{#1}\fi
\ifx \bjtitle  \undefined \def \bjtitle#1{#1}\fi
\ifx \bvolume  \undefined \def \bvolume#1{\textbf{#1}}\fi
\ifx \byear  \undefined \def \byear#1{#1}\fi
\ifx \bissue  \undefined \def \bissue#1{#1}\fi
\ifx \bfpage  \undefined \def \bfpage#1{#1}\fi
\ifx \blpage  \undefined \def \blpage #1{#1}\fi
\ifx \burl  \undefined \def \burl#1{\textsf{#1}}\fi
\ifx \doiurl  \undefined \def \doiurl#1{\url{https://doi.org/#1}}\fi
\ifx \betal  \undefined \def \betal{\textit{et al.}}\fi
\ifx \binstitute  \undefined \def \binstitute#1{#1}\fi
\ifx \binstitutionaled  \undefined \def \binstitutionaled#1{#1}\fi
\ifx \bctitle  \undefined \def \bctitle#1{#1}\fi
\ifx \beditor  \undefined \def \beditor#1{#1}\fi
\ifx \bpublisher  \undefined \def \bpublisher#1{#1}\fi
\ifx \bbtitle  \undefined \def \bbtitle#1{#1}\fi
\ifx \bedition  \undefined \def \bedition#1{#1}\fi
\ifx \bseriesno  \undefined \def \bseriesno#1{#1}\fi
\ifx \blocation  \undefined \def \blocation#1{#1}\fi
\ifx \bsertitle  \undefined \def \bsertitle#1{#1}\fi
\ifx \bsnm \undefined \def \bsnm#1{#1}\fi
\ifx \bsuffix \undefined \def \bsuffix#1{#1}\fi
\ifx \bparticle \undefined \def \bparticle#1{#1}\fi
\ifx \barticle \undefined \def \barticle#1{#1}\fi
\bibcommenthead
\ifx \bconfdate \undefined \def \bconfdate #1{#1}\fi
\ifx \botherref \undefined \def \botherref #1{#1}\fi
\ifx \url \undefined \def \url#1{\textsf{#1}}\fi
\ifx \bchapter \undefined \def \bchapter#1{#1}\fi
\ifx \bbook \undefined \def \bbook#1{#1}\fi
\ifx \bcomment \undefined \def \bcomment#1{#1}\fi
\ifx \oauthor \undefined \def \oauthor#1{#1}\fi
\ifx \citeauthoryear \undefined \def \citeauthoryear#1{#1}\fi
\ifx \endbibitem  \undefined \def \endbibitem {}\fi
\ifx \bconflocation  \undefined \def \bconflocation#1{#1}\fi
\ifx \arxivurl  \undefined \def \arxivurl#1{\textsf{#1}}\fi
\csname PreBibitemsHook\endcsname

\bibitem[\protect\citeauthoryear{Szymanski et~al.}{2023}]{szymanski2023autonomous}
\begin{barticle}
\bauthor{\bsnm{Szymanski}, \binits{N.J.}},
\bauthor{\bsnm{Rendy}, \binits{B.}},
\bauthor{\bsnm{Fei}, \binits{Y.}},
\bauthor{\bsnm{Kumar}, \binits{R.E.}},
\bauthor{\bsnm{He}, \binits{T.}},
\bauthor{\bsnm{Milsted}, \binits{D.}},
\bauthor{\bsnm{McDermott}, \binits{M.J.}},
\bauthor{\bsnm{Gallant}, \binits{M.}},
\bauthor{\bsnm{Cubuk}, \binits{E.D.}},
\bauthor{\bsnm{Merchant}, \binits{A.}}, \betal:
\batitle{An autonomous laboratory for the accelerated synthesis of novel materials}.
\bjtitle{Nature}
\bvolume{624}(\bissue{7990}),
\bfpage{86}--\blpage{91}
(\byear{2023})
\end{barticle}
\endbibitem

\bibitem[\protect\citeauthoryear{Claudio et~al.}{2025}]{Claudio2025}
\begin{barticle}
\bauthor{\bsnm{Claudio}, \binits{Z.}},
\bauthor{\bsnm{Robert}, \binits{P.}},
\bauthor{\bsnm{Daniel}, \binits{Z.}},
\bauthor{\bsnm{Andrew}, \binits{F.}},
\bauthor{\bsnm{Matthew}, \binits{H.}},
\bauthor{\bsnm{Xiang}, \binits{F.}},
\bauthor{\bsnm{Zilong}, \binits{W.}},
\bauthor{\bsnm{Aliaksandra}, \binits{S.}},
\bauthor{\bsnm{Jonathan}, \binits{C.}},
\bauthor{\bsnm{Shoko}, \binits{U.}},
\bauthor{\bsnm{Roberto}, \binits{S.}},
\bauthor{\bsnm{Lixin}, \binits{S.}},
\bauthor{\bsnm{Jake}, \binits{S.}},
\bauthor{\bsnm{Bichlien}, \binits{N.}},
\bauthor{\bsnm{Hannes}, \binits{S.}},
\bauthor{\bsnm{Sarah}, \binits{L.}},
\bauthor{\bsnm{Chin-Wei}, \binits{H.}},
\bauthor{\bsnm{Ziheng}, \binits{L.}},
\bauthor{\bsnm{Yichi}, \binits{Z.}},
\bauthor{\bsnm{Han}, \binits{Y.}},
\bauthor{\bsnm{Hongxia}, \binits{H.}},
\bauthor{\bsnm{Jielan}, \binits{L.}},
\bauthor{\bsnm{Chunlei}, \binits{Y.}},
\bauthor{\bsnm{Wenjie}, \binits{L.}},
\bauthor{\bsnm{Ryota}, \binits{T.}},
\bauthor{\bsnm{Tian}, \binits{X.}}:
\batitle{A generative model for inorganic materials design}.
\bjtitle{Nature}
\bvolume{639}(\bissue{8055}),
\bfpage{624}--\blpage{632}
(\byear{2025})
\end{barticle}
\endbibitem

\bibitem[\protect\citeauthoryear{Peplow}{2023}]{peplow2023robot}
\begin{botherref}
\oauthor{\bsnm{Peplow}, \binits{M.}}:
Robot chemist sparks row with claim it created new materials.
Nature
(2023)
\end{botherref}
\endbibitem

\bibitem[\protect\citeauthoryear{Cheetham and Seshadri}{2024}]{cheetham2024artificial}
\begin{barticle}
\bauthor{\bsnm{Cheetham}, \binits{A.K.}},
\bauthor{\bsnm{Seshadri}, \binits{R.}}:
\batitle{Artificial intelligence driving materials discovery? {Perspective} on the article: {Scaling} deep learning for materials discovery}.
\bjtitle{Chemistry of Materials}
\bvolume{36}(\bissue{8}),
\bfpage{3490}--\blpage{3495}
(\byear{2024})
\end{barticle}
\endbibitem

\bibitem[\protect\citeauthoryear{Karpatne et~al.}{2025}]{karpatne2025ai}
\begin{barticle}
\bauthor{\bsnm{Karpatne}, \binits{A.}},
\bauthor{\bsnm{Deshwal}, \binits{A.}},
\bauthor{\bsnm{Jia}, \binits{X.}},
\bauthor{\bsnm{Ding}, \binits{W.}},
\bauthor{\bsnm{Steinbach}, \binits{M.}},
\bauthor{\bsnm{Zhang}, \binits{A.}},
\bauthor{\bsnm{Kumar}, \binits{V.}}:
\batitle{{AI}-enabled scientific revolution in the age of generative {AI}: second {NSF} workshop report}.
\bjtitle{npj Artificial Intelligence}
\bvolume{1}(\bissue{1}),
\bfpage{18}
(\byear{2025})
\end{barticle}
\endbibitem

\bibitem[\protect\citeauthoryear{Zhang et~al.}{2024}]{zhang2024blending}
\begin{barticle}
\bauthor{\bsnm{Zhang}, \binits{E.}},
\bauthor{\bsnm{Kahana}, \binits{A.}},
\bauthor{\bsnm{Kopani{\v{c}}{\'a}kov{\'a}}, \binits{A.}},
\bauthor{\bsnm{Turkel}, \binits{E.}},
\bauthor{\bsnm{Ranade}, \binits{R.}},
\bauthor{\bsnm{Pathak}, \binits{J.}},
\bauthor{\bsnm{Karniadakis}, \binits{G.E.}}:
\batitle{Blending neural operators and relaxation methods in pde numerical solvers}.
\bjtitle{Nature Machine Intelligence}
\bvolume{6}(\bissue{11}),
\bfpage{1303}--\blpage{1313}
(\byear{2024})
\end{barticle}
\endbibitem

\bibitem[\protect\citeauthoryear{Brunton and Kutz}{2024}]{brunton2024promising}
\begin{botherref}
\oauthor{\bsnm{Brunton}, \binits{S.L.}},
\oauthor{\bsnm{Kutz}, \binits{J.N.}}:
Promising directions of machine learning for partial differential equations.
Nature Computational Science,
1--12
(2024)
\end{botherref}
\endbibitem

\bibitem[\protect\citeauthoryear{Kontolati et~al.}{2024}]{Katiana2024}
\begin{barticle}
\bauthor{\bsnm{Kontolati}, \binits{K.}},
\bauthor{\bsnm{Goswami}, \binits{S.}},
\bauthor{\bsnm{Karniadakis}, \binits{G.}},
\bauthor{\bsnm{Shields}, \binits{M.}}:
\batitle{Learning nonlinear operators in latent spaces for real-time predictions of complex dynamics in physical systems}.
\bjtitle{Nature Communications}
\bvolume{15}(\bissue{1}),
\bfpage{5101}
(\byear{2024})
\end{barticle}
\endbibitem

\bibitem[\protect\citeauthoryear{Goswami et~al.}{2022}]{goswami2022deep}
\begin{barticle}
\bauthor{\bsnm{Goswami}, \binits{S.}},
\bauthor{\bsnm{Kontolati}, \binits{K.}},
\bauthor{\bsnm{Shields}, \binits{M.D.}},
\bauthor{\bsnm{Karniadakis}, \binits{G.E.}}:
\batitle{Deep transfer operator learning for partial differential equations under conditional shift}.
\bjtitle{Nature Machine Intelligence}
\bvolume{4}(\bissue{12}),
\bfpage{1155}--\blpage{1164}
(\byear{2022})
\end{barticle}
\endbibitem

\bibitem[\protect\citeauthoryear{Kejriwal et~al.}{2024}]{kejriwal2024challenges}
\begin{barticle}
\bauthor{\bsnm{Kejriwal}, \binits{M.}},
\bauthor{\bsnm{Kildebeck}, \binits{E.}},
\bauthor{\bsnm{Steininger}, \binits{R.}},
\bauthor{\bsnm{Shrivastava}, \binits{A.}}:
\batitle{Challenges, evaluation and opportunities for open-world learning}.
\bjtitle{Nature Machine Intelligence}
\bvolume{6}(\bissue{6}),
\bfpage{580}--\blpage{588}
(\byear{2024})
\end{barticle}
\endbibitem

\bibitem[\protect\citeauthoryear{Lu et~al.}{2021}]{lu2021learning}
\begin{barticle}
\bauthor{\bsnm{Lu}, \binits{L.}},
\bauthor{\bsnm{Jin}, \binits{P.}},
\bauthor{\bsnm{Pang}, \binits{G.}},
\bauthor{\bsnm{Zhang}, \binits{Z.}},
\bauthor{\bsnm{Karniadakis}, \binits{G.E.}}:
\batitle{Learning nonlinear operators via {DeepONet} based on the universal approximation theorem of operators}.
\bjtitle{Nature Machine Intelligence}
\bvolume{3}(\bissue{3}),
\bfpage{218}--\blpage{229}
(\byear{2021})
\end{barticle}
\endbibitem

\bibitem[\protect\citeauthoryear{Azizzadenesheli et~al.}{2024}]{Azizzadenesheli2024}
\begin{barticle}
\bauthor{\bsnm{Azizzadenesheli}, \binits{K.}},
\bauthor{\bsnm{Kovachki}, \binits{N.}},
\bauthor{\bsnm{Li}, \binits{Z.}},
\bauthor{\bsnm{Liu-Schiaffini}, \binits{M.}},
\bauthor{\bsnm{Kossaifi}, \binits{J.}},
\bauthor{\bsnm{Anandkumar}, \binits{A.}}:
\batitle{Neural operators for accelerating scientific simulations and design}.
\bjtitle{Nature Reviews Physics}
\bvolume{6}(\bissue{5}),
\bfpage{320}--\blpage{328}
(\byear{2024})
\end{barticle}
\endbibitem

\bibitem[\protect\citeauthoryear{Jiao et~al.}{2025}]{jiao2025one}
\begin{barticle}
\bauthor{\bsnm{Jiao}, \binits{A.}},
\bauthor{\bsnm{He}, \binits{H.}},
\bauthor{\bsnm{Ranade}, \binits{R.}},
\bauthor{\bsnm{Pathak}, \binits{J.}},
\bauthor{\bsnm{Lu}, \binits{L.}}:
\batitle{One-shot learning for solution operators of partial differential equations}.
\bjtitle{Nature Communications}
\bvolume{16}(\bissue{1}),
\bfpage{8386}
(\byear{2025})
\end{barticle}
\endbibitem

\bibitem[\protect\citeauthoryear{Cao et~al.}{2024}]{cao2024laplace}
\begin{barticle}
\bauthor{\bsnm{Cao}, \binits{Q.}},
\bauthor{\bsnm{Goswami}, \binits{S.}},
\bauthor{\bsnm{Karniadakis}, \binits{G.E.}}:
\batitle{Laplace neural operator for solving differential equations}.
\bjtitle{Nature Machine Intelligence}
\bvolume{6}(\bissue{6}),
\bfpage{631}--\blpage{640}
(\byear{2024})
\end{barticle}
\endbibitem

\bibitem[\protect\citeauthoryear{Ding et~al.}{2023}]{ding2023parameter}
\begin{barticle}
\bauthor{\bsnm{Ding}, \binits{N.}},
\bauthor{\bsnm{Qin}, \binits{Y.}},
\bauthor{\bsnm{Yang}, \binits{G.}},
\bauthor{\bsnm{Wei}, \binits{F.}},
\bauthor{\bsnm{Yang}, \binits{Z.}},
\bauthor{\bsnm{Su}, \binits{Y.}},
\bauthor{\bsnm{Hu}, \binits{S.}},
\bauthor{\bsnm{Chen}, \binits{Y.}},
\bauthor{\bsnm{Chan}, \binits{C.-M.}},
\bauthor{\bsnm{Chen}, \binits{W.}}, \betal:
\batitle{Parameter-efficient fine-tuning of large-scale pre-trained language models}.
\bjtitle{Nature Machine Intelligence}
\bvolume{5}(\bissue{3}),
\bfpage{220}--\blpage{235}
(\byear{2023})
\end{barticle}
\endbibitem

\bibitem[\protect\citeauthoryear{Wenzel et~al.}{2022}]{wenzel2022assaying}
\begin{barticle}
\bauthor{\bsnm{Wenzel}, \binits{F.}},
\bauthor{\bsnm{Dittadi}, \binits{A.}},
\bauthor{\bsnm{Gehler}, \binits{P.}},
\bauthor{\bsnm{Simon-Gabriel}, \binits{C.-J.}},
\bauthor{\bsnm{Horn}, \binits{M.}},
\bauthor{\bsnm{Zietlow}, \binits{D.}},
\bauthor{\bsnm{Kernert}, \binits{D.}},
\bauthor{\bsnm{Russell}, \binits{C.}},
\bauthor{\bsnm{Brox}, \binits{T.}},
\bauthor{\bsnm{Schiele}, \binits{B.}}, \betal:
\batitle{Assaying out-of-distribution generalization in transfer learning}.
\bjtitle{Advances in Neural Information Processing Systems}
\bvolume{35},
\bfpage{7181}--\blpage{7198}
(\byear{2022})
\end{barticle}
\endbibitem

\bibitem[\protect\citeauthoryear{Raissi et~al.}{2019}]{RAISSI2019686}
\begin{barticle}
\bauthor{\bsnm{Raissi}, \binits{M.}},
\bauthor{\bsnm{Perdikaris}, \binits{P.}},
\bauthor{\bsnm{Karniadakis}, \binits{G.E.}}:
\batitle{Physics-informed neural networks: A deep learning framework for solving forward and inverse problems involving nonlinear partial differential equations}.
\bjtitle{Journal of Computational Physics}
\bvolume{378},
\bfpage{686}--\blpage{707}
(\byear{2019})
\end{barticle}
\endbibitem

\bibitem[\protect\citeauthoryear{Wei et~al.}{2023}]{wei2023select}
\begin{bchapter}
\bauthor{\bsnm{Wei}, \binits{Z.}},
\bauthor{\bsnm{Wong}, \binits{J.C.}},
\bauthor{\bsnm{Sung}, \binits{N.W.Y.}},
\bauthor{\bsnm{Gupta}, \binits{A.}},
\bauthor{\bsnm{Ooi}, \binits{C.C.}},
\bauthor{\bsnm{Chiu}, \binits{P.-H.}},
\bauthor{\bsnm{Dao}, \binits{M.H.}},
\bauthor{\bsnm{Ong}, \binits{Y.-S.}}:
\bctitle{How to select physics-informed neural networks in the absence of ground truth: a pareto front-based strategy}.
In: \bbtitle{1st Workshop on the Synergy of Scientific and Machine Learning Modeling@ ICML2023}
(\byear{2023})
\end{bchapter}
\endbibitem

\bibitem[\protect\citeauthoryear{Karniadakis et~al.}{2021}]{karniadakis2021physics}
\begin{barticle}
\bauthor{\bsnm{Karniadakis}, \binits{G.E.}},
\bauthor{\bsnm{Kevrekidis}, \binits{I.G.}},
\bauthor{\bsnm{Lu}, \binits{L.}},
\bauthor{\bsnm{Perdikaris}, \binits{P.}},
\bauthor{\bsnm{Wang}, \binits{S.}},
\bauthor{\bsnm{Yang}, \binits{L.}}:
\batitle{Physics-informed machine learning}.
\bjtitle{Nature Reviews Physics}
\bvolume{3}(\bissue{6}),
\bfpage{422}--\blpage{440}
(\byear{2021})
\end{barticle}
\endbibitem

\bibitem[\protect\citeauthoryear{Liu et~al.}{2025}]{liu2025automatic}
\begin{barticle}
\bauthor{\bsnm{Liu}, \binits{Z.}},
\bauthor{\bsnm{Liu}, \binits{Y.}},
\bauthor{\bsnm{Yan}, \binits{X.}},
\bauthor{\bsnm{Liu}, \binits{W.}},
\bauthor{\bsnm{Nie}, \binits{H.}},
\bauthor{\bsnm{Guo}, \binits{S.}},
\bauthor{\bsnm{Zhang}, \binits{C.}}:
\batitle{Automatic network structure discovery of physics informed neural networks via knowledge distillation}.
\bjtitle{Nature Communications}
\bvolume{16}(\bissue{1}),
\bfpage{9558}
(\byear{2025})
\end{barticle}
\endbibitem

\bibitem[\protect\citeauthoryear{Cuomo et~al.}{2022}]{cuomo2022scientific}
\begin{barticle}
\bauthor{\bsnm{Cuomo}, \binits{S.}},
\bauthor{\bsnm{Di~Cola}, \binits{V.S.}},
\bauthor{\bsnm{Giampaolo}, \binits{F.}},
\bauthor{\bsnm{Rozza}, \binits{G.}},
\bauthor{\bsnm{Raissi}, \binits{M.}},
\bauthor{\bsnm{Piccialli}, \binits{F.}}:
\batitle{Scientific machine learning through physics-informed neural networks: Where we are and what’s next}.
\bjtitle{Journal of Scientific Computing}
\bvolume{92}(\bissue{3}),
\bfpage{88}
(\byear{2022})
\end{barticle}
\endbibitem

\bibitem[\protect\citeauthoryear{Kadeethum et~al.}{2021}]{kadeethum2021framework}
\begin{barticle}
\bauthor{\bsnm{Kadeethum}, \binits{T.}},
\bauthor{\bsnm{O’Malley}, \binits{D.}},
\bauthor{\bsnm{Fuhg}, \binits{J.N.}},
\bauthor{\bsnm{Choi}, \binits{Y.}},
\bauthor{\bsnm{Lee}, \binits{J.}},
\bauthor{\bsnm{Viswanathan}, \binits{H.S.}},
\bauthor{\bsnm{Bouklas}, \binits{N.}}:
\batitle{A framework for data-driven solution and parameter estimation of {PDEs} using conditional generative adversarial networks}.
\bjtitle{Nature Computational Science}
\bvolume{1}(\bissue{12}),
\bfpage{819}--\blpage{829}
(\byear{2021})
\end{barticle}
\endbibitem

\bibitem[\protect\citeauthoryear{Krishnapriyan et~al.}{2021}]{krishnapriyan2021characterizing}
\begin{barticle}
\bauthor{\bsnm{Krishnapriyan}, \binits{A.}},
\bauthor{\bsnm{Gholami}, \binits{A.}},
\bauthor{\bsnm{Zhe}, \binits{S.}},
\bauthor{\bsnm{Kirby}, \binits{R.}},
\bauthor{\bsnm{Mahoney}, \binits{M.W.}}:
\batitle{Characterizing possible failure modes in physics-informed neural networks}.
\bjtitle{Advances in Neural Information Processing Systems}
\bvolume{34},
\bfpage{26548}--\blpage{26560}
(\byear{2021})
\end{barticle}
\endbibitem

\bibitem[\protect\citeauthoryear{Rathore et~al.}{2024}]{rathore2024challenges}
\begin{bchapter}
\bauthor{\bsnm{Rathore}, \binits{P.}},
\bauthor{\bsnm{Lei}, \binits{W.}},
\bauthor{\bsnm{Frangella}, \binits{Z.}},
\bauthor{\bsnm{Lu}, \binits{L.}},
\bauthor{\bsnm{Udell}, \binits{M.}}:
\bctitle{Challenges in training {PINN}s: {A} loss landscape perspective}.
In: \bbtitle{Proceedings of the 41st International Conference on Machine Learning}
(\byear{2024})
\end{bchapter}
\endbibitem

\bibitem[\protect\citeauthoryear{Wong et~al.}{2025}]{wong2025evolutionary}
\begin{botherref}
\oauthor{\bsnm{Wong}, \binits{J.}},
\oauthor{\bsnm{Gupta}, \binits{A.}},
\oauthor{\bsnm{Ooi}, \binits{C.}},
\oauthor{\bsnm{Chiu}, \binits{P.-H.}},
\oauthor{\bsnm{Liu}, \binits{J.}},
\oauthor{\bsnm{Ong}, \binits{Y.-S.}}:
Evolutionary optimization of physics-informed neural networks: {Evo-PINN} frontiers and opportunities.
IEEE Computational Intelligence Magazine
(2025)
\end{botherref}
\endbibitem

\bibitem[\protect\citeauthoryear{Bihlo}{2024}]{JMLR:v25:23-0356}
\begin{barticle}
\bauthor{\bsnm{Bihlo}, \binits{A.}}:
\batitle{Improving physics-informed neural networks with meta-learned optimization}.
\bjtitle{Journal of Machine Learning Research}
\bvolume{25}(\bissue{14}),
\bfpage{1}--\blpage{26}
(\byear{2024})
\end{barticle}
\endbibitem

\bibitem[\protect\citeauthoryear{Psaros et~al.}{2022}]{PSAROS2022111121}
\begin{barticle}
\bauthor{\bsnm{Psaros}, \binits{A.F.}},
\bauthor{\bsnm{Kawaguchi}, \binits{K.}},
\bauthor{\bsnm{Karniadakis}, \binits{G.E.}}:
\batitle{Meta-learning {PINN} loss functions}.
\bjtitle{Journal of Computational Physics}
\bvolume{458},
\bfpage{111121}
(\byear{2022})
\end{barticle}
\endbibitem

\bibitem[\protect\citeauthoryear{Finn et~al.}{2017}]{finn2017model}
\begin{bchapter}
\bauthor{\bsnm{Finn}, \binits{C.}},
\bauthor{\bsnm{Abbeel}, \binits{P.}},
\bauthor{\bsnm{Levine}, \binits{S.}}:
\bctitle{Model-agnostic meta-learning for fast adaptation of deep networks}.
In: \bbtitle{International Conference on Machine Learning},
pp. \bfpage{1126}--\blpage{1135}
(\byear{2017})
\end{bchapter}
\endbibitem

\bibitem[\protect\citeauthoryear{Penwarden et~al.}{2023}]{PENWARDEN2023111912}
\begin{barticle}
\bauthor{\bsnm{Penwarden}, \binits{M.}},
\bauthor{\bsnm{Zhe}, \binits{S.}},
\bauthor{\bsnm{Narayan}, \binits{A.}},
\bauthor{\bsnm{Kirby}, \binits{R.M.}}:
\batitle{A metalearning approach for physics-informed neural networks ({PINNs}): Application to parameterized {PDE}s}.
\bjtitle{Journal of Computational Physics}
\bvolume{477},
\bfpage{111912}
(\byear{2023})
\end{barticle}
\endbibitem

\bibitem[\protect\citeauthoryear{Zoph and Le}{2017}]{zoph2017neural}
\begin{bchapter}
\bauthor{\bsnm{Zoph}, \binits{B.}},
\bauthor{\bsnm{Le}, \binits{Q.V.}}:
\bctitle{Neural architecture search with reinforcement learning}.
In: \bbtitle{International Conference on Learning Representations}
(\byear{2017})
\end{bchapter}
\endbibitem

\bibitem[\protect\citeauthoryear{Pham et~al.}{2018}]{pham2018efficient}
\begin{bchapter}
\bauthor{\bsnm{Pham}, \binits{H.}},
\bauthor{\bsnm{Guan}, \binits{M.}},
\bauthor{\bsnm{Zoph}, \binits{B.}},
\bauthor{\bsnm{Le}, \binits{Q.}},
\bauthor{\bsnm{Dean}, \binits{J.}}:
\bctitle{Efficient neural architecture search via parameters sharing}.
In: \bbtitle{International Conference on Machine Learning},
pp. \bfpage{4095}--\blpage{4104}
(\byear{2018})
\end{bchapter}
\endbibitem

\bibitem[\protect\citeauthoryear{Elsken et~al.}{2019}]{elsken2019neural}
\begin{barticle}
\bauthor{\bsnm{Elsken}, \binits{T.}},
\bauthor{\bsnm{Metzen}, \binits{J.H.}},
\bauthor{\bsnm{Hutter}, \binits{F.}}:
\batitle{Neural architecture search: A survey}.
\bjtitle{Journal of Machine Learning Research}
\bvolume{20}(\bissue{55}),
\bfpage{1}--\blpage{21}
(\byear{2019})
\end{barticle}
\endbibitem

\bibitem[\protect\citeauthoryear{Wang and Zhong}{2024}]{wang2024pinn}
\begin{barticle}
\bauthor{\bsnm{Wang}, \binits{Y.}},
\bauthor{\bsnm{Zhong}, \binits{L.}}:
\batitle{{NAS-PINN}: {Neural} architecture search-guided physics-informed neural network for solving {PDEs}}.
\bjtitle{Journal of Computational Physics}
\bvolume{496},
\bfpage{112603}
(\byear{2024})
\end{barticle}
\endbibitem

\bibitem[\protect\citeauthoryear{Tack et~al.}{2022}]{tack2022meta}
\begin{barticle}
\bauthor{\bsnm{Tack}, \binits{J.}},
\bauthor{\bsnm{Park}, \binits{J.}},
\bauthor{\bsnm{Lee}, \binits{H.}},
\bauthor{\bsnm{Lee}, \binits{J.}},
\bauthor{\bsnm{Shin}, \binits{J.}}:
\batitle{Meta-learning with self-improving momentum target}.
\bjtitle{Advances in Neural Information Processing Systems}
\bvolume{35},
\bfpage{6318}--\blpage{6332}
(\byear{2022})
\end{barticle}
\endbibitem

\bibitem[\protect\citeauthoryear{Dandi et~al.}{2024}]{dandi2024two}
\begin{barticle}
\bauthor{\bsnm{Dandi}, \binits{Y.}},
\bauthor{\bsnm{Krzakala}, \binits{F.}},
\bauthor{\bsnm{Loureiro}, \binits{B.}},
\bauthor{\bsnm{Pesce}, \binits{L.}},
\bauthor{\bsnm{Stephan}, \binits{L.}}:
\batitle{How two-layer neural networks learn, one (giant) step at a time}.
\bjtitle{Journal of Machine Learning Research}
\bvolume{25}(\bissue{349}),
\bfpage{1}--\blpage{65}
(\byear{2024})
\end{barticle}
\endbibitem

\bibitem[\protect\citeauthoryear{Collins et~al.}{2024}]{collins2024provable}
\begin{bchapter}
\bauthor{\bsnm{Collins}, \binits{L.}},
\bauthor{\bsnm{Hassani}, \binits{H.}},
\bauthor{\bsnm{Soltanolkotabi}, \binits{M.}},
\bauthor{\bsnm{Mokhtari}, \binits{A.}},
\bauthor{\bsnm{Shakkottai}, \binits{S.}}:
\bctitle{Provable multi-task representation learning by two-layer {ReLU} neural networks}.
In: \bbtitle{International Conference on Machine Learning}
(\byear{2024})
\end{bchapter}
\endbibitem

\bibitem[\protect\citeauthoryear{Ba et~al.}{2022}]{ba2022high}
\begin{barticle}
\bauthor{\bsnm{Ba}, \binits{J.}},
\bauthor{\bsnm{Erdogdu}, \binits{M.A.}},
\bauthor{\bsnm{Suzuki}, \binits{T.}},
\bauthor{\bsnm{Wang}, \binits{Z.}},
\bauthor{\bsnm{Wu}, \binits{D.}},
\bauthor{\bsnm{Yang}, \binits{G.}}:
\batitle{High-dimensional asymptotics of feature learning: How one gradient step improves the representation}.
\bjtitle{Advances in Neural Information Processing Systems}
\bvolume{35},
\bfpage{37932}--\blpage{37946}
(\byear{2022})
\end{barticle}
\endbibitem

\bibitem[\protect\citeauthoryear{Maini et~al.}{2006}]{maini2006turing}
\begin{barticle}
\bauthor{\bsnm{Maini}, \binits{P.K.}},
\bauthor{\bsnm{Baker}, \binits{R.E.}},
\bauthor{\bsnm{Chuong}, \binits{C.-M.}}:
\batitle{The turing model comes of molecular age}.
\bjtitle{Science}
\bvolume{314}(\bissue{5804}),
\bfpage{1397}--\blpage{1398}
(\byear{2006})
\end{barticle}
\endbibitem

\bibitem[\protect\citeauthoryear{Newman}{2007}]{newman2007turing}
\begin{barticle}
\bauthor{\bsnm{Newman}, \binits{S.A.}}:
\batitle{The turing mechanism in vertebrate limb patterning}.
\bjtitle{Nature Reviews Molecular Cell Biology}
\bvolume{8}(\bissue{6}),
\bfpage{1}--\blpage{1}
(\byear{2007})
\end{barticle}
\endbibitem

\bibitem[\protect\citeauthoryear{Epstein and Xu}{2016}]{epstein2016reaction}
\begin{barticle}
\bauthor{\bsnm{Epstein}, \binits{I.R.}},
\bauthor{\bsnm{Xu}, \binits{B.}}:
\batitle{Reaction-diffusion processes at the nano-and microscales}.
\bjtitle{Nature Nanotechnology}
\bvolume{11}(\bissue{4}),
\bfpage{312}--\blpage{319}
(\byear{2016})
\end{barticle}
\endbibitem

\bibitem[\protect\citeauthoryear{Fuseya et~al.}{2021}]{fuseya2021nanoscale}
\begin{barticle}
\bauthor{\bsnm{Fuseya}, \binits{Y.}},
\bauthor{\bsnm{Katsuno}, \binits{H.}},
\bauthor{\bsnm{Behnia}, \binits{K.}},
\bauthor{\bsnm{Kapitulnik}, \binits{A.}}:
\batitle{Nanoscale turing patterns in a bismuth monolayer}.
\bjtitle{Nature Physics}
\bvolume{17}(\bissue{9}),
\bfpage{1031}--\blpage{1036}
(\byear{2021})
\end{barticle}
\endbibitem

\bibitem[\protect\citeauthoryear{Ho et~al.}{2020}]{ho2020denoising}
\begin{barticle}
\bauthor{\bsnm{Ho}, \binits{J.}},
\bauthor{\bsnm{Jain}, \binits{A.}},
\bauthor{\bsnm{Abbeel}, \binits{P.}}:
\batitle{Denoising diffusion probabilistic models}.
\bjtitle{Advances in Neural Information Processing Systems}
\bvolume{33},
\bfpage{6840}--\blpage{6851}
(\byear{2020})
\end{barticle}
\endbibitem

\bibitem[\protect\citeauthoryear{Dhariwal and Nichol}{2021}]{dhariwal2021diffusion}
\begin{barticle}
\bauthor{\bsnm{Dhariwal}, \binits{P.}},
\bauthor{\bsnm{Nichol}, \binits{A.}}:
\batitle{Diffusion models beat {GANs} on image synthesis}.
\bjtitle{Advances in Neural Information Processing Systems}
\bvolume{34},
\bfpage{8780}--\blpage{8794}
(\byear{2021})
\end{barticle}
\endbibitem

\bibitem[\protect\citeauthoryear{Maz{\'e} and Ahmed}{2023}]{maze2023diffusion}
\begin{bchapter}
\bauthor{\bsnm{Maz{\'e}}, \binits{F.}},
\bauthor{\bsnm{Ahmed}, \binits{F.}}:
\bctitle{Diffusion models beat {GANs} on topology optimization}.
In: \bbtitle{Proceedings of the AAAI Conference on Artificial Intelligence},
vol. \bseriesno{37},
pp. \bfpage{9108}--\blpage{9116}
(\byear{2023})
\end{bchapter}
\endbibitem

\bibitem[\protect\citeauthoryear{Wei et~al.}{2025}]{wei2025evolvable}
\begin{bchapter}
\bauthor{\bsnm{Wei}, \binits{Z.}},
\bauthor{\bsnm{Ooi}, \binits{C.C.}},
\bauthor{\bsnm{Gupta}, \binits{A.}},
\bauthor{\bsnm{Wong}, \binits{J.C.}},
\bauthor{\bsnm{Chiu}, \binits{P.-H.}},
\bauthor{\bsnm{Toh}, \binits{S.X.W.}},
\bauthor{\bsnm{Ong}, \binits{Y.-S.}}:
\bctitle{Evolvable conditional diffusion}.
In: \bbtitle{International Joint Conference on Artificial Intelligence}
(\byear{2025})
\end{bchapter}
\endbibitem

\bibitem[\protect\citeauthoryear{Woolley et~al.}{2011}]{PhysRevE.84.046216}
\begin{barticle}
\bauthor{\bsnm{Woolley}, \binits{T.E.}},
\bauthor{\bsnm{Baker}, \binits{R.E.}},
\bauthor{\bsnm{Gaffney}, \binits{E.A.}},
\bauthor{\bsnm{Maini}, \binits{P.K.}}:
\batitle{Stochastic reaction and diffusion on growing domains: Understanding the breakdown of robust pattern formation}.
\bjtitle{Physical Review E}
\bvolume{84},
\bfpage{046216}
(\byear{2011})
\end{barticle}
\endbibitem

\bibitem[\protect\citeauthoryear{Takamoto et~al.}{2022}]{takamoto2022pdebench}
\begin{barticle}
\bauthor{\bsnm{Takamoto}, \binits{M.}},
\bauthor{\bsnm{Praditia}, \binits{T.}},
\bauthor{\bsnm{Leiteritz}, \binits{R.}},
\bauthor{\bsnm{MacKinlay}, \binits{D.}},
\bauthor{\bsnm{Alesiani}, \binits{F.}},
\bauthor{\bsnm{Pfl{\"u}ger}, \binits{D.}},
\bauthor{\bsnm{Niepert}, \binits{M.}}:
\batitle{{PDEBench}: An extensive benchmark for scientific machine learning}.
\bjtitle{Advances in Neural Information Processing Systems}
\bvolume{35},
\bfpage{1596}--\blpage{1611}
(\byear{2022})
\end{barticle}
\endbibitem

\bibitem[\protect\citeauthoryear{Bengio et~al.}{2015}]{bengio2015scheduled}
\begin{botherref}
\oauthor{\bsnm{Bengio}, \binits{S.}},
\oauthor{\bsnm{Vinyals}, \binits{O.}},
\oauthor{\bsnm{Jaitly}, \binits{N.}},
\oauthor{\bsnm{Shazeer}, \binits{N.}}:
Scheduled sampling for sequence prediction with recurrent neural networks.
Advances in Neural Information Processing Systems
\textbf{28}
(2015)
\end{botherref}
\endbibitem

\bibitem[\protect\citeauthoryear{Lamb et~al.}{2016}]{lamb2016professor}
\begin{botherref}
\oauthor{\bsnm{Lamb}, \binits{A.M.}},
\oauthor{\bsnm{ALIAS PARTH~GOYAL}, \binits{A.G.}},
\oauthor{\bsnm{Zhang}, \binits{Y.}},
\oauthor{\bsnm{Zhang}, \binits{S.}},
\oauthor{\bsnm{Courville}, \binits{A.C.}},
\oauthor{\bsnm{Bengio}, \binits{Y.}}:
Professor forcing: A new algorithm for training recurrent networks.
Advances in Neural Information Processing Systems
\textbf{29}
(2016)
\end{botherref}
\endbibitem

\bibitem[\protect\citeauthoryear{Chiu et~al.}{2022}]{CHIU2022114909}
\begin{barticle}
\bauthor{\bsnm{Chiu}, \binits{P.-H.}},
\bauthor{\bsnm{Wong}, \binits{J.C.}},
\bauthor{\bsnm{Ooi}, \binits{C.C.}},
\bauthor{\bsnm{Dao}, \binits{M.H.}},
\bauthor{\bsnm{Ong}, \binits{Y.-S.}}:
\batitle{{CAN-PINN}: A fast physics-informed neural network based on coupled-automatic-numerical differentiation method}.
\bjtitle{Computer Methods in Applied Mechanics and Engineering}
\bvolume{395},
\bfpage{114909}
(\byear{2022})
\end{barticle}
\endbibitem

\bibitem[\protect\citeauthoryear{Bhatia and Ingber}{2014}]{bhatia2014microfluidic}
\begin{barticle}
\bauthor{\bsnm{Bhatia}, \binits{S.N.}},
\bauthor{\bsnm{Ingber}, \binits{D.E.}}:
\batitle{Microfluidic organs-on-chips}.
\bjtitle{Nature Biotechnology}
\bvolume{32}(\bissue{8}),
\bfpage{760}--\blpage{772}
(\byear{2014})
\end{barticle}
\endbibitem

\bibitem[\protect\citeauthoryear{Low et~al.}{2021}]{low2021organs}
\begin{barticle}
\bauthor{\bsnm{Low}, \binits{L.A.}},
\bauthor{\bsnm{Mummery}, \binits{C.}},
\bauthor{\bsnm{Berridge}, \binits{B.R.}},
\bauthor{\bsnm{Austin}, \binits{C.P.}},
\bauthor{\bsnm{Tagle}, \binits{D.A.}}:
\batitle{Organs-on-chips: into the next decade}.
\bjtitle{Nature Reviews Drug Discovery}
\bvolume{20}(\bissue{5}),
\bfpage{345}--\blpage{361}
(\byear{2021})
\end{barticle}
\endbibitem

\bibitem[\protect\citeauthoryear{Emmerich et~al.}{2025}]{emmerich2024automated}
\begin{barticle}
\bauthor{\bsnm{Emmerich}, \binits{M.}},
\bauthor{\bsnm{Ebner}, \binits{P.}},
\bauthor{\bsnm{Wille}, \binits{R.}}:
\batitle{Automated design for multiorgan-on-chip geometries}.
\bjtitle{IEEE Transactions on Computer-Aided Design of Integrated Circuits and Systems}
\bvolume{44}(\bissue{6}),
\bfpage{2287}--\blpage{2299}
(\byear{2025})
\end{barticle}
\endbibitem

\bibitem[\protect\citeauthoryear{Li et~al.}{2022}]{9451544}
\begin{barticle}
\bauthor{\bsnm{Li}, \binits{Z.}},
\bauthor{\bsnm{Liu}, \binits{F.}},
\bauthor{\bsnm{Yang}, \binits{W.}},
\bauthor{\bsnm{Peng}, \binits{S.}},
\bauthor{\bsnm{Zhou}, \binits{J.}}:
\batitle{A survey of convolutional neural networks: Analysis, applications, and prospects}.
\bjtitle{IEEE Transactions on Neural Networks and Learning Systems}
\bvolume{33}(\bissue{12}),
\bfpage{6999}--\blpage{7019}
(\byear{2022})
\end{barticle}
\endbibitem

\bibitem[\protect\citeauthoryear{Wei et~al.}{2025}]{wei2025continuous}
\begin{barticle}
\bauthor{\bsnm{Wei}, \binits{Z.}},
\bauthor{\bsnm{Ooi}, \binits{C.C.}},
\bauthor{\bsnm{Ong}, \binits{Y.-S.}}:
\batitle{A continuous encoding-based representation for efficient multi-fidelity multi-objective neural architecture search}.
\bjtitle{Applied Soft Computing}
\bvolume{185},
\bfpage{113932}
(\byear{2025})
\end{barticle}
\endbibitem

\bibitem[\protect\citeauthoryear{Ronneberger et~al.}{2015}]{ronneberger2015u}
\begin{bchapter}
\bauthor{\bsnm{Ronneberger}, \binits{O.}},
\bauthor{\bsnm{Fischer}, \binits{P.}},
\bauthor{\bsnm{Brox}, \binits{T.}}:
\bctitle{{U-Net}: Convolutional networks for biomedical image segmentation}.
In: \bbtitle{Medical Image Computing and Computer-assisted intervention--MICCAI 2015},
pp. \bfpage{234}--\blpage{241}
(\byear{2015})
\end{bchapter}
\endbibitem

\bibitem[\protect\citeauthoryear{Williams et~al.}{2024}]{williams2024unified}
\begin{botherref}
\oauthor{\bsnm{Williams}, \binits{C.}},
\oauthor{\bsnm{Falck}, \binits{F.}},
\oauthor{\bsnm{Deligiannidis}, \binits{G.}},
\oauthor{\bsnm{Holmes}, \binits{C.C.}},
\oauthor{\bsnm{Doucet}, \binits{A.}},
\oauthor{\bsnm{Syed}, \binits{S.}}:
A unified framework for {U-Net} design and analysis.
Advances in Neural Information Processing Systems
\textbf{36}
(2024)
\end{botherref}
\endbibitem

\bibitem[\protect\citeauthoryear{Liu et~al.}{2019}]{liu2018darts}
\begin{bchapter}
\bauthor{\bsnm{Liu}, \binits{H.}},
\bauthor{\bsnm{Simonyan}, \binits{K.}},
\bauthor{\bsnm{Yang}, \binits{Y.}}:
\bctitle{{DARTS}: Differentiable architecture search}.
In: \bbtitle{International Conference on Learning Representations}
(\byear{2019})
\end{bchapter}
\endbibitem

\bibitem[\protect\citeauthoryear{Lu et~al.}{2021}]{9201169}
\begin{barticle}
\bauthor{\bsnm{Lu}, \binits{Z.}},
\bauthor{\bsnm{Whalen}, \binits{I.}},
\bauthor{\bsnm{Dhebar}, \binits{Y.}},
\bauthor{\bsnm{Deb}, \binits{K.}},
\bauthor{\bsnm{Goodman}, \binits{E.D.}},
\bauthor{\bsnm{Banzhaf}, \binits{W.}},
\bauthor{\bsnm{Boddeti}, \binits{V.N.}}:
\batitle{Multiobjective evolutionary design of deep convolutional neural networks for image classification}.
\bjtitle{IEEE Transactions on Evolutionary Computation}
\bvolume{25}(\bissue{2}),
\bfpage{277}--\blpage{291}
(\byear{2021})
\end{barticle}
\endbibitem

\bibitem[\protect\citeauthoryear{Pletcher et~al.}{2012}]{pletcher2012computational}
\begin{bbook}
\bauthor{\bsnm{Pletcher}, \binits{R.H.}},
\bauthor{\bsnm{Tannehill}, \binits{J.C.}},
\bauthor{\bsnm{Anderson}, \binits{D.}}:
\bbtitle{Computational Fluid Mechanics and Heat Transfer}.
\bpublisher{CRC Press},
\blocation{Boca Raton, FL}
(\byear{2012})
\end{bbook}
\endbibitem

\bibitem[\protect\citeauthoryear{Gottlieb et~al.}{2001}]{Gottlieb01SSPRK}
\begin{barticle}
\bauthor{\bsnm{Gottlieb}, \binits{S.}},
\bauthor{\bsnm{Shu}, \binits{C.-W.}},
\bauthor{\bsnm{Tadmor}, \binits{E.}}:
\batitle{Strong stability-preserving high-order time discretization methods}.
\bjtitle{SIAM Review}
\bvolume{43}(\bissue{1}),
\bfpage{89}--\blpage{112}
(\byear{2001})
\end{barticle}
\endbibitem

\bibitem[\protect\citeauthoryear{Chiu and Poh}{2021}]{Chiu2021DFIB}
\begin{barticle}
\bauthor{\bsnm{Chiu}, \binits{P.-H.}},
\bauthor{\bsnm{Poh}, \binits{H.J.}}:
\batitle{Development of an improved divergence-free-condition compensated coupled framework to solve flow problems with time-varying geometries}.
\bjtitle{International Journal for Numerical Methods in Fluids}
\bvolume{93}(\bissue{1}),
\bfpage{44}--\blpage{70}
(\byear{2021})
\end{barticle}
\endbibitem

\bibitem[\protect\citeauthoryear{Chiu}{2023}]{Chiu2023cDFIB}
\begin{barticle}
\bauthor{\bsnm{Chiu}, \binits{P.-H.}}:
\batitle{{cDFIB}: A convolutional direct forcing immersed boundary method for solving incompressible flows with time-varying geometries}.
\bjtitle{Journal of Computational Physics}
\bvolume{487},
\bfpage{112178}
(\byear{2023})
\end{barticle}
\endbibitem

\end{thebibliography}

\end{document}


\title[Article Title]{Searching for Generalizable Neural Architectures for Computational Physics}


\author*[1,2]{\fnm{First} \sur{Author}}\email{iauthor@gmail.com}

\author[2,3]{\fnm{Second} \sur{Author}}\email{iiauthor@gmail.com}
\equalcont{These authors contributed equally to this work.}

\author[1,2]{\fnm{Third} \sur{Author}}\email{iiiauthor@gmail.com}
\equalcont{These authors contributed equally to this work.}

\affil*[1]{\orgdiv{Department}, \orgname{Organization}, \orgaddress{\street{Street}, \city{City}, \postcode{100190}, \state{State}, \country{Country}}}

\affil[2]{\orgdiv{Department}, \orgname{Organization}, \orgaddress{\street{Street}, \city{City}, \postcode{10587}, \state{State}, \country{Country}}}

\affil[3]{\orgdiv{Department}, \orgname{Organization}, \orgaddress{\street{Street}, \city{City}, \postcode{610101}, \state{State}, \country{Country}}}


\section*{\centering \Large Supplementary Information}
\vspace{0.5cm}

\section{Overall performance of NOVA across physical systems}\label{sec:overal performance}

The overall performance of NOVA across different physical systems is summarized in Supplementary Table \ref{tab:summary}.

\begin{table}[h]
\centering
\caption{Summary of NOVA performance across physical systems. Reported values are RMSEs for in-distribution and out-of-distribution tests (lower is better). “Gain” indicates the improvement factor computed as Baseline / NOVA.}
\label{tab:summary}
\begin{tabular}{lcccccc}
\toprule
Physical system & Test dataset & Baseline & Baseline RMSE & NOVA RMSE & \makecell{NOVA performance \\ gain ($\times$)} \\
\midrule
\multirow{5}{*}{\makecell{Diffusion-reaction\\equations}} 
 & $\mathbfcal{T}_{\rm test\_id}$  & FNO      & $8.1\!\times\!10^{-3}$ & $2.9\!\times\!10^{-3}$ & 2.8 \\
 & $\mathbfcal{T}_{\rm test\_id}$  & U\mbox{-}Net & $6.1\!\times\!10^{-2}$ & $2.9\!\times\!10^{-3}$ & 21.0 \\
 & $\mathbfcal{T}_{\rm test\_id}$  & MLP\mbox{-}PINN & $1.9\!\times\!10^{-1}$ & $2.9\!\times\!10^{-3}$ & 65.5 \\
 & $\mathbfcal{T}_{\rm test\_ood}$ & FNO      & $7.9\!\times\!10^{-1}$ & $4.6\!\times\!10^{-3}$ & 172 \\
 & $\mathbfcal{T}_{\rm test\_ood}$ & U\mbox{-}Net & $3.5\!\times\!10^{-1}$ & $4.6\!\times\!10^{-3}$ & 76 \\
\noalign{\vskip 2pt}
\noalign{\hrule height 0.4pt}
\noalign{\vskip 1pt}
\noalign{\hrule height 0.4pt}
\noalign{\vskip 2pt}
\multirow{2}{*}{\makecell{Navier-Stokes\\equations}}
 & $\mathbfcal{T}_{\rm test\_id}$  & NAS\mbox{-}U\mbox{-}Net & $2.1\!\times\!10^{-1}$ & $4.4\!\times\!10^{-2}$ & 4.8 \\
 & $\mathbfcal{T}_{\rm test\_ood}$ & NAS\mbox{-}U\mbox{-}Net & $5.1\!\times\!10^{-1}$ & $9.5\!\times\!10^{-2}$ & 5.4 \\
\noalign{\vskip 2pt}
\noalign{\hrule height 0.4pt}
\noalign{\vskip 1pt}
\noalign{\hrule height 0.4pt}
\noalign{\vskip 2pt}
\multirow{4}{*}{\makecell{Nonlinear heat\\equation}}
 & $\mathbfcal{T}_{\rm test\_id}$     & NAS\mbox{-}U\mbox{-}Net & $1.5\!\times\!10^{-1}$ & $3.0\!\times\!10^{-3}$ & 50 \\
 & $\mathbfcal{T}_{{\rm test\_ood}\_F1}$ & NAS\mbox{-}U\mbox{-}Net & $9.7\!\times\!10^{-1}$ & $3.0\!\times\!10^{-2}$ & 32 \\
 & $\mathbfcal{T}_{{\rm test\_ood}\_F2}$ & NAS\mbox{-}U\mbox{-}Net & $4.5\!\times\!10^{-1}$ & $2.6\!\times\!10^{-3}$ & 173 \\
 & $\mathbfcal{T}_{{\rm test\_ood}\_F3}$ & NAS\mbox{-}U\mbox{-}Net & $6.2\!\times\!10^{-1}$ & $4.6\!\times\!10^{-3}$ & 135 \\
\noalign{\vskip 2pt}
\noalign{\hrule height 0.4pt}
\noalign{\vskip 1pt}
\noalign{\hrule height 0.4pt}
\noalign{\vskip 2pt}
\makecell{\textbf{Overall}\\\textbf{(out-of-distribution)}} 
 & -- & -- & -- & -- &
\textbf{avg $\approx 99\times$} \\
\bottomrule
\end{tabular}
\end{table}


\clearpage

\section{NOVA performance for diffusion-reaction equations}\label{sec:diffusion-reaction}

The prediction results of NOVA, FNO, U-Net, and MLP-PINN for the 2D diffusion-reaction equations on $\mathbfcal{T}_{\rm test\_id}$ and $\mathbfcal{T}_{\rm test\_ood}$ are listed in Supplementary Table \ref{Table:2D_DR}. FNO and U-Net are purely data-driven models.




\begin{table}[h]
\caption{Test results for the 2D diffusion-reaction equations on $\mathbfcal{T}_{\rm test\_id}$ and $\mathbfcal{T}_{\rm test\_ood}$. 
The results of FNO, U-Net, and MLP-PINN on $\mathbfcal{T}_{\rm test\_id}$ are extracted from \cite{takamoto2022pdebench}.}
\label{Table:2D_DR}
\centering
\begin{tabular}{ccccccc}
\toprule
\multirow{2}{*}{Test dataset} & \multirow{2}{*}{Model} & \multicolumn{5}{c}{Metric} \\ 
\cmidrule(lr){3-7}
~ & ~ & RMSE & nRMSE & max error & cRMSE & bRMSE \\
\midrule
\multirow{4}{*}{$\mathbfcal{T}_{\rm test\_id}$} 
 & FNO & $8.1 \times 10^{-3}$ & $1.2 \times 10^{-1}$ & $9.1 \times 10^{-2}$ & $1.7 \times 10^{-3}$ & $2.7 \times 10^{-2}$ \\
 & U-Net & $6.1 \times 10^{-2}$ & $8.4 \times 10^{-1}$ & $1.9 \times 10^{-1}$ & $3.9 \times 10^{-2}$ & $7.8 \times 10^{-2}$ \\
 & MLP-PINN & $1.9 \times 10^{-1}$ & $1.6$ & $5.0 \times 10^{-1}$ & $1.3 \times 10^{-1}$ & $2.2 \times 10^{-1}$ \\
 & NOVA & $2.9 \times 10^{-3}$ & $3.4 \times 10^{-2}$ & $7.4 \times 10^{-2}$ & $1.5 \times 10^{-4}$ & $9.6 \times 10^{-3}$ \\
\noalign{\vskip 2pt}
\noalign{\hrule height 0.4pt}
\noalign{\vskip 1pt}
\noalign{\hrule height 0.4pt}
\noalign{\vskip 2pt}
\multirow{3}{*}{$\mathbfcal{T}_{\rm test\_ood}$} & FNO & $7.9 \times 10^{-1}$ & $3.1$ & $2.6$ & $4.3 \times 10^{-1}$ & $2.7 \times 10^{-1}$ \\
& U-Net & $3.5 \times 10^{-1}$ & $9.9 \times 10^{-1}$ & $6.2 \times 10^{-1}$ & $3.1 \times 10^{-1}$ & $2.9 \times 10^{-1}$\\
& NOVA & $4.6 \times 10^{-3}$ & $1.8 \times 10^{-2}$ & $1.8 \times 10^{-2}$ & $3.5 \times 10^{-3}$ & $4.4 \times 10^{-3}$ \\
 
\bottomrule
\end{tabular}
\end{table}



The ablation analysis results for the 2D diffusion-reaction equations on $\mathbfcal{T}_{\rm test\_id}$ and $\mathbfcal{T}_{\rm test\_ood}$ are presented in Supplementary Table \ref{ablation analysis}.
\break{}

\textbf{Effectiveness of physics-informed adaptation}. In addition to evaluating the effectiveness of a physics-guided NAS, we further select two architectures: (1) the one with the lowest training loss on $\mathbfcal{T}_{\rm training\_NAS}$, and (2) the final NOVA architecture with the lowest RMSE on $\mathbfcal{T}_{\rm validation\_NAS}$, and evaluate the contribution of the physics-informed adaptation. Instead of applying physics-informed adaptation, we re-train both architectures using only data loss for $5$ and $500$ epochs. The resulting models, denoted as NOVA-minimal-training-loss-data-driven-5, NOVA-minimal-training-loss-data-driven-500, NOVA-data-driven-5, and NOVA-data-driven-500, are then evaluated on PDE scenarios in $\mathbfcal{T}_{\rm test\_id}$. As shown in Supplementary Table \ref{ablation analysis}, NOVA surpasses both re-trained architectures with 5-step data-driven updates (i.e., NOVA-minimal-training-loss-data-driven-5 and NOVA-data-driven-5), improving performance by about two orders of magnitude over NOVA-data-driven-5 and one order of magnitude over NOVA-minimal-training-loss-data-driven-5. Notably, even after 500 steps of data-driven training, all metrics for both models remain markedly inferior to those of NOVA. These results highlight the critical importance of physics-informed zero-shot adaptation in improving generalization to unseen test tasks.

\begin{table}[h]
\caption{Ablation results for the 2D diffusion-reaction equations on $\mathbfcal{T}_{\rm test\_id}$ and $\mathbfcal{T}_{\rm test\_ood}$. ``NOVA-minimal-training-loss'' refers to the architecture with the lowest training loss on $\mathbfcal{T}_{\rm training\_NAS}$ among the 200 architectures in the NAS process, followed by physics-informed zero-shot adaptation on $\mathbfcal{T}_{\rm test\_id}$ or $\mathbfcal{T}_{\rm test\_ood}$. ``NOVA-median-training-loss'' refers to the architecture with the median training loss on $\mathbfcal{T}_{\rm training\_NAS}$ among the 200 architectures in the NAS process, also followed by physics-informed zero-shot adaptation on $\mathbfcal{T}_{\rm test\_id}$ or $\mathbfcal{T}_{\rm test\_ood}$. ``NOVA-minimal-training-loss-data-driven-5'' denotes the architecture with the lowest training loss on $\mathbfcal{T}_{\rm training\_NAS}$ among the 200 architectures in the NAS process, re-trained using five epochs with data loss only, and then evaluated on $\mathbfcal{T}_{\rm test\_id}$ or $\mathbfcal{T}_{\rm test\_ood}$ (without any physics-informed zero-shot adaptation). ``NOVA-minimal-training-loss-data-driven-500'' follows the same process but with 500 epochs. ``NOVA-data-driven-5'' denotes the architecture with the lowest RMSE on $\mathbfcal{T}_{\rm validation\_NAS}$ among the 200 architectures in the NAS process, re-trained using five epochs with data loss only, and then evaluated on $\mathbfcal{T}_{\rm test\_id}$ or $\mathbfcal{T}_{\rm test\_ood}$ (without any physics-informed zero-shot adaptation). ``NOVA-data-driven-500'' follows the same process but with 500 epochs. ``NOVA'' denotes the architecture with the lowest RMSE on $\mathbfcal{T}_{\rm validation\_NAS}$ among the 200 architectures in the NAS process, followed by physics-informed zero-shot adaptation on $\mathbfcal{T}_{\rm test\_id}$ or $\mathbfcal{T}_{\rm test\_ood}$.}
\label{ablation analysis}
\centering
\begin{tabular}{ccccccc}
\toprule
\multirow{2}{*}{Test Dataset} & \multirow{2}{*}{Model} & \multicolumn{5}{c}{Metric} \\ 
\cmidrule(lr){3-7}
~ & ~ & RMSE & nRMSE & max error & cRMSE & bRMSE \\
\midrule
\multirow{11}{*}{$\mathbfcal{T}_{\rm test\_id}$}
 & \makecell{NOVA-minimal-\\training-loss} & $2.1 \times 10^{-2}$ & $2.4 \times 10^{-1}$ & $1.4 \times 10^{-1}$ & $2.0 \times 10^{-4}$ & $4.0 \times 10^{-2}$ \\
 &  \makecell{NOVA-median-\\training-loss} & $7.8 \times 10^{-3}$ & $9.1 \times 10^{-2}$ & $8.9 \times 10^{-2}$ & $1.5 \times 10^{-4}$ & $1.9 \times 10^{-2}$ \\
 &  \makecell{NOVA-minimal-training-\\loss-data-driven-5} & $9.8 \times 10^{-2}$ & $1.7$ & $3.2 \times 10^{-1}$ & $2.2 \times 10^{-2}$ & $9.4 \times 10^{-2}$ \\
 & \makecell{NOVA-minimal-training-\\loss-data-driven-500} & $2.5 \times 10^{-2}$ & $2.7 \times 10^{-1}$ & $1.2 \times 10^{-1}$ & $4.9 \times 10^{-3}$ & $3.2 \times 10^{-2}$ \\
 & NOVA-data-driven-5 & $2.4 \times 10^{-1}$ & $4.5$ & $2.6$ & $2.0 \times 10^{-2}$ & $2.3 \times 10^{-1}$ \\
 & NOVA-data-driven-500 & $5.2 \times 10^{-2}$ & $6.2 \times 10^{-1}$ & $7.4 \times 10^{-1}$ & $1.8 \times 10^{-2}$ & $1.1 \times 10^{-1}$ \\
 & NOVA & $2.9 \times 10^{-3}$ & $3.4 \times 10^{-2}$ & $7.4 \times 10^{-2}$ & $1.5 \times 10^{-4}$ & $9.6 \times 10^{-3}$ \\
\noalign{\vskip 2pt}
\noalign{\hrule height 0.4pt}
\noalign{\vskip 1pt}
\noalign{\hrule height 0.4pt}
\noalign{\vskip 2pt}
\multirow{11}{*}{$\mathbfcal{T}_{\rm test\_ood}$}
 & \makecell{NOVA-minimal-\\training-loss} & $9.2 \times 10^{-3}$ & $3.6 \times 10^{-2}$ & $1.0 \times 10^{-1}$ & $3.7 \times 10^{-3}$ & $1.8 \times 10^{-2}$ \\
 & \makecell{NOVA-median-\\training-loss} & $6.0 \times 10^{-3}$ & $2.5 \times 10^{-2}$ & $8.5 \times 10^{-2}$ & $3.8 \times 10^{-3}$ & $1.3 \times 10^{-2}$ \\
 & \makecell{NOVA-minimal-training-\\loss-data-driven-5} & $3.2 \times 10^{-1}$ & $9.1 \times 10^{-1}$ & $6.4 \times 10^{-1}$ & $2.5 \times 10^{-1}$ & $2.3 \times 10^{-1}$ \\
 & \makecell{NOVA-minimal-training-\\loss-data-driven-500} & $3.9 \times 10^{-1}$ & $1.2$ & $8.2 \times 10^{-1}$ & $3.3 \times 10^{-1}$ & $3.1 \times 10^{-1}$ \\
 & NOVA-data-driven-5 & $3.9 \times 10^{-1}$ & $1.2$ & $3.1$ & $2.5 \times 10^{-1}$ & $3.0 \times 10^{-1}$ \\
 & NOVA-data-driven-500 & $3.1 \times 10^{-1}$ & $8.8 \times 10^{-1}$ & $5.3 \times 10^{-1}$ & $2.7 \times 10^{-1}$ & $2.6 \times 10^{-1}$ \\
 & NOVA & $4.6 \times 10^{-3}$ & $1.8 \times 10^{-2}$ & $1.8 \times 10^{-2}$ & $3.5 \times 10^{-3}$ & $4.4 \times 10^{-3}$ \\
\bottomrule
\end{tabular}
\end{table}



\clearpage

\section{NOVA performance for Navier-Stokes equations}\label{sec:N-S}

The prediction results of NOVA and NAS-U-Net for the 2D steady-state incompressible Navier-Stokes equations on $\mathbfcal{T}_{\rm test\_id}$ and $\mathbfcal{T}_{\rm test\_ood}$ are compared in Supplementary Table \ref{Table:2D_NS}.


\begin{table}[h]
\caption{Test results for the 2D steady-state incompressible Navier-Stokes equations on $\mathbfcal{T}_{\rm test\_id}$ and $\mathbfcal{T}_{\rm test\_ood}$.}
\label{Table:2D_NS}
\centering
\begin{tabular}{ccccccc}
\toprule
\multirow{2}{*}{Test dataset} & \multirow{2}{*}{Model} & \multicolumn{5}{c}{Metric} \\ 
\cmidrule(lr){3-7}
~ & ~ & RMSE & nRMSE & max error & cRMSE & bRMSE \\
\midrule
\multirow{2}{*}{$\mathbfcal{T}_{\rm test\_id}$}
 & NAS-U-Net & $2.1 \times 10^{-1}$ & $8.2 \times 10^{-1}$ & $1.6$ & $3.1 \times 10^{-2}$ & $2.0 \times 10^{-1}$ \\
 & NOVA & $4.4 \times 10^{-2}$ & $1.8 \times 10^{-1}$ & $1.0$ & $4.9 \times 10^{-3}$ & $5.8 \times 10^{-2}$ \\
\noalign{\vskip 2pt}
\noalign{\hrule height 0.4pt}
\noalign{\vskip 1pt}
\noalign{\hrule height 0.4pt}
\noalign{\vskip 2pt}
\multirow{2}{*}{$\mathbfcal{T}_{\rm test\_ood}$}
 & NAS-U-Net & $5.1 \times 10^{-1}$ & $7.7 \times 10^{-1}$ & $2.7$ & $1.7 \times 10^{-1}$ & $4.5 \times 10^{-1}$ \\
 & NOVA & $9.5 \times 10^{-2}$ & $1.8 \times 10^{-1}$ & $1.9$ & $6.8 \times 10^{-3}$ & $1.2 \times 10^{-1}$ \\
\bottomrule
\end{tabular}
\end{table}

\clearpage

\section{NOVA performance for nonlinear heat equation}\label{sec:heat}

The prediction results of NOVA and NAS-U-Net for the 2D nonlinear heat equation on $\mathbfcal{T}_{\rm test\_id}$, $\mathbfcal{T}_{{\rm test\_ood}\_F1}$, $\mathbfcal{T}_{{\rm test\_ood}\_F2}$, and $\mathbfcal{T}_{{\rm test\_ood}\_F3}$ are compared in Supplementary Table \ref{Table:2D_heat_OOD}.







\begin{table}[h]
\caption{Test results for the 2D nonlinear heat equation on $\mathbfcal{T}_{\rm test\_id}$, $\mathbfcal{T}_{{\rm test\_ood}\_F1}$, $\mathbfcal{T}_{{\rm test\_ood}\_F2}$, and $\mathbfcal{T}_{{\rm test\_ood}\_F3}$.}
\label{Table:2D_heat_OOD}
    \begin{tabular}{ccccccc}
    \toprule
    \multirow{2}{*}{Test dataset} & \multirow{2}{*}{Model} & \multicolumn{5}{c}{Metric} \\ \cmidrule(lr){3-7}
    ~ & ~ & RMSE & nRMSE & max error & cRMSE & bRMSE \\
    \midrule
    \multirow{2}{*}{$\mathbfcal{T}_{\rm test\_id}$} & NAS-U-Net & $1.5 \times 10^{-1}$ & $3.4 \times 10^{-1}$ & $4.1 \times 10^{-1}$ & $8.6 \times 10^{-3}$ & $1.1 \times 10^{-1}$ \\
    ~ & NOVA & $3.0 \times 10^{-3}$ & $6.9 \times 10^{-3}$ & $3.7 \times 10^{-2}$ & $7.4 \times 10^{-4}$ & $6.5 \times 10^{-3}$ \\
    \noalign{\vskip 2pt}
    \noalign{\hrule height 0.4pt}
    \noalign{\vskip 1pt}
    \noalign{\hrule height 0.4pt}
    \noalign{\vskip 2pt}
    \multirow{2}{*}{$\mathbfcal{T}_{{\rm test\_ood}\_F1}$} & NAS-U-Net & $9.7 \times 10^{-1}$ & $1.4$ & $2.5$ & $5.3 \times 10^{-1}$ & $8.6 \times 10^{-1}$ \\ ~ & NOVA & $3.0 \times 10^{-2}$ & $5.6 \times 10^{-2}$ & $1.5 \times 10^{-1}$ & $3.0 \times 10^{-2}$ & $3.7 \times 10^{-2}$ \\
    \midrule
    \multirow{2}{*}{$\mathbfcal{T}_{{\rm test\_ood}\_F2}$} & NAS-U-Net & $4.5 \times 10^{-1}$ & $1.6$ & $1.4$ & $2.6 \times 10^{-2}$ & $3.0 \times 10^{-1}$ \\ ~ & NOVA & $2.6 \times 10^{-3}$ & $2.0 \times 10^{-2}$ & $3.9 \times 10^{-2}$ & $7.5 \times 10^{-4}$ & $7.4 \times 10^{-3}$ \\
    \midrule
    \multirow{2}{*}{$\mathbfcal{T}_{{\rm test\_ood}\_F3}$} & NAS-U-Net & $6.2 \times 10^{-1}$ & $1.2$ & $2.7$ & $1.5 \times 10^{-1}$ & $2.4 \times 10^{-1}$ \\ ~ & NOVA & $4.6 \times 10^{-3}$ & $1.5 \times 10^{-2}$ & $2.1 \times 10^{-2}$  & $4.2 \times 10^{-3}$ & $4.8 \times 10^{-3}$ \\
    \bottomrule
    \end{tabular}
\end{table}


\clearpage











\section{NOVA performance for flow mixing equation}\label{sec:flow-mixing}

In this section, we further confirm NOVA's capability for generalization in the extreme scenario where there is a complete absence of data, highlighting its utility in scenarios where data is costly to obtain. A 2D transient flow mixing equation is used to illustrate how NOVA can be implemented even in the complete absence of data, while being a distinctly different physical system from the experiments in the main text. 

The governing PDE of the 2D flow mixing equation is

\begin{equation}
\label{Eq:2D_FM}
\begin{aligned}
& \frac{\partial u(x, y, t)}{\partial t} - \frac{v_t \cdot y}{v_{t\_{\rm max}} \cdot r} \frac{\partial u(x, y, t)}{\partial x} + \frac{v_t \cdot x}{v_{t\_{\rm max}} \cdot r} \frac{\partial u(x, y, t)}{\partial y} = 0, \\
& v_t = {\rm sech}^2(r) {\rm tanh}(r), \ r = \sqrt{x^2 + y^2}, \\
& \hspace{7cm} x \in [-4, 4], \; y \in [-4, 4], \; t \in [0, 4]
\end{aligned}
\end{equation}

\noindent where the PDE coefficient $v_{t\_{\rm max}}$ is varied to construct a range of PDE scenarios.

\subsection{Data-free NOVA} 

Even without any data information, NOVA can still be trained using the physics loss $\mathcal{L}_{\rm Physics}$, as expressed in Supplementary Eq. (\ref{Eq:physics loss})

\begin{equation}
\begin{aligned}
    & \mathcal{L}_{\rm Physics} = \mathcal{L}_{\rm PDE} + \lambda_{\rm BC}\mathcal{L}_{\rm BC} + \lambda_{\rm IC}\mathcal{L}_{\rm IC} \\
    & \mathcal{L}_{\rm PDE} = \frac{1}{n_{\rm PDE}} \sum_{i = 1}^{n_{\rm PDE}} \big\lVert \mathcal{N}[\boldsymbol{u}(\boldsymbol{x}_i, t_i)] - \mathcal{S}(\boldsymbol{x}_i, t_i) \big\rVert_2^{2} \\
    & \mathcal{L}_{\rm BC} = \frac{1}{n_{\rm BC}} \sum_{i = 1}^{n_{\rm BC}} \big\lVert \mathcal{B}[\boldsymbol{u}(\boldsymbol{x}_i, t_i)] - \boldsymbol{g}(\boldsymbol{x}_i, t_i) \big\rVert_2^{2} \\
    & \mathcal{L}_{\rm IC} = \frac{1}{n_{\rm IC}} \sum_{i = 1}^{n_{\rm IC}} \big\lVert \boldsymbol{u}(\boldsymbol{x}_i, 0) - \boldsymbol{u}_0(\boldsymbol{x}_i) \big\rVert_2^{2}
\label{Eq:physics loss}
\end{aligned}
\end{equation}

\noindent where $\lambda_{\rm BC}$ and $\lambda_{\rm IC}$ are the scaling factors to enforce BCs and ICs, respectively. The PDE domain is given by $\boldsymbol{\Omega} \times (0, t_m]$, the BC domain by $\partial \boldsymbol{\Omega} \times (0, t_m]$, and the IC domain by $\boldsymbol{\Omega}$.

In this case, NOVA's kernel parameters are trained through the following update step.

\begin{equation}
\boldsymbol{\omega}^* = \boldsymbol{\omega}^{(0)}-\sum_{i = 0}^{N-1} \eta \nabla_{\boldsymbol{\omega}}\mathcal L_{\rm Physics}(\boldsymbol{\omega}^{(i)}, {\rm PDE}, {\rm BC}, {\rm IC})
\label{Eq:training_formulation}
\end{equation}

A scaling factor of $\lambda_{\rm PDE} = 0.01$ is applied to enforce the Dirichlet BC for the data-free NOVA experiment.

\subsection{PDE scenario data}

For the flow mixing equation, the analytical solution $u(x, y, t)$ is as per Supplementary Eq. (\ref{Eq:2D_FM_analytical}). 

\begin{equation}
\label{Eq:2D_FM_analytical}
\begin{aligned}
& u(x, y, t) = -{\rm tanh}[\frac{y}{2}{\rm cos}(\omega t) - \frac{x}{2}{\rm sin}(\omega t)], \ \omega = \frac{v_t}{r \cdot v_{t\_{\rm max}}}
\end{aligned}
\end{equation}

The ground truth for $\mathbfcal{T}_{\rm training}$, $\mathbfcal{T}_{\rm test\_id}$ and $\mathbfcal{T}_{\rm test\_ood}$ is obtained using Supplementary Eq. (\ref{Eq:2D_FM_analytical}). The discretizations along the $x$, $y$, and $t$ dimensions are set to $n_x = 128$, $n_y = 128$, and $n_t = 21$, respectively. The Dirichlet BC is employed with values at the boundary determined based on ensuring self-consistency with the analytical solutions.

For this problem, the in-distribution and out-of-distribution tasks ($\mathbfcal{T}_{\rm test\_id}$ and $\mathbfcal{T}_{\rm test\_ood}$) are defined based on the range of the PDE coefficient $v_{t\_{\rm max}}$. 

For the in-distribution tasks, 40 values of $v_{t\_{\rm max}} \in [0.2, 0.5]$ are randomly selected to form the training dataset $\mathbfcal{T}_{\rm training}$, which is further divided into 36 cases for NAS training $\mathbfcal{T}_{\rm training\_NAS}$ and 4 cases for NAS validation $\mathbfcal{T}_{\rm validation\_NAS}$. An additional 10 in-distribution tasks are randomly sampled  (i.e., $v_{t\_{\rm max}} \in [0.2, 0.5]$) to construct the test dataset $\mathbfcal{T}_{\rm test\_id}$. 

For the out-of-distribution experiments, another 10 tasks are randomly sampled with $v_{t\_{\rm max}} \in [0.1, 0.2]$ to form a new test dataset $\mathbfcal{T}_{\rm test\_ood}$.



\subsection{Results for NOVA and NAS-U-Net}


As shown in Supplementary Table \ref{Table:2D_flow_mixing} and Supplementary Fig. \ref{Fig:2D_flow_mixing_gp}a, even without any training data, NOVA still outperforms the data-driven NAS-U-Net on both global and local metrics. Supplementary Fig. \ref{Fig:2D_flow_mixing_gp}c illustrates that NOVA's predictions at each time step are more accurate, ensuring that subsequent steps can be reliably predicted using an autoregressive rollout.



Supplementary Table \ref{Table:2D_flow_mixing} and Fig. \ref{Fig:2D_flow_mixing_gp}b illustrate how NOVA can still achieve accurate predictions by utilizing underlying physical laws on new tasks with out-of-distribution PDE coefficients, i.e., $\mathbfcal{T}_{\rm test\_ood}$. In contrast, the prediction performance of NAS-U-Net deteriorates greatly when applied to out-of-distribution problems. This highlights NOVA's effectiveness in addressing practical science and engineering problems characterized by limited data availability.


\begin{table}[h]
\caption{Test results for the 2D flow mixing equation on $\mathbfcal{T}_{\rm test\_id}$ and $\mathbfcal{T}_{\rm test\_ood}$.}
\label{Table:2D_flow_mixing}
\centering
\begin{tabular}{ccccccc}
\toprule
\multirow{2}{*}{Test dataset} & \multirow{2}{*}{Model} & \multicolumn{5}{c}{Metric} \\ 
\cmidrule(lr){3-7}
~ & ~ & RMSE & nRMSE & max error & cRMSE & bRMSE \\
\midrule
\multirow{2}{*}{$\mathbfcal{T}_{\rm test\_id}$} 
 & NAS-U-Net & $7.1 \times 10^{-2}$ & $9.9 \times 10^{-2}$ & $3.5 \times 10^{-1}$ & $4.2 \times 10^{-3}$ & $9.5 \times 10^{-3}$ \\
 & NOVA       & $1.9 \times 10^{-2}$ & $2.6 \times 10^{-2}$ & $1.3 \times 10^{-1}$ & $1.4 \times 10^{-5}$ & $2.6 \times 10^{-5}$ \\
\noalign{\vskip 2pt}
\noalign{\hrule height 0.4pt}
\noalign{\vskip 1pt}
\noalign{\hrule height 0.4pt}
\noalign{\vskip 2pt}
\multirow{2}{*}{$\mathbfcal{T}_{\rm test\_ood}$} 
 & NAS-U-Net & $1.3 \times 10^{-1}$ & $1.8 \times 10^{-1}$ & $5.4 \times 10^{-1}$ & $9.8 \times 10^{-3}$ & $1.7 \times 10^{-2}$ \\
& NOVA & $5.1 \times 10^{-2}$ & $7.1 \times 10^{-2}$ & $3.0 \times 10^{-1}$ & $3.8 \times 10^{-6}$ & $9.0 \times 10^{-5}$ \\
\bottomrule
\end{tabular}
\end{table}

\begin{figure}[htp]
\centering
\vspace*{0mm}
\includegraphics[width=1\textwidth]{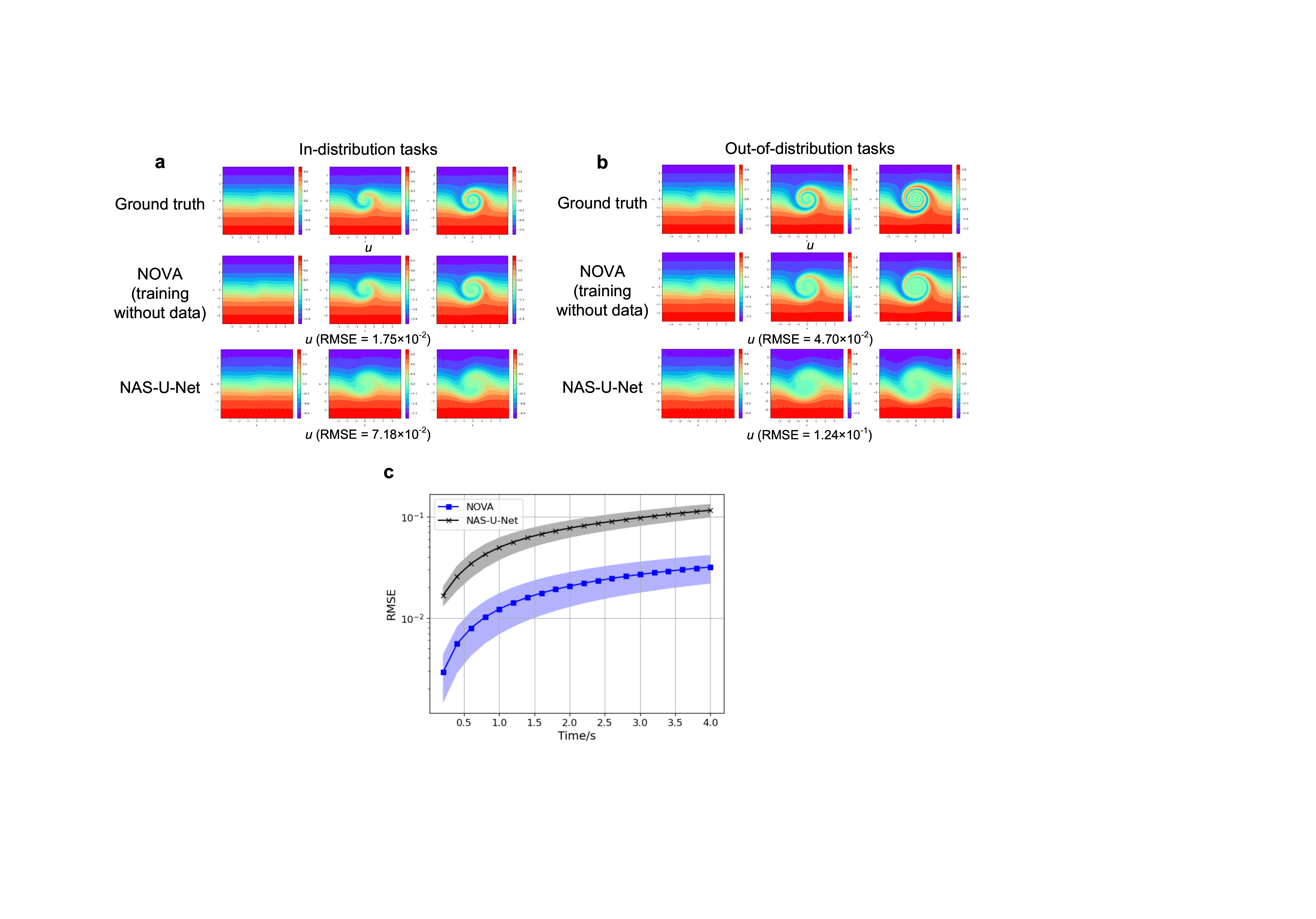}
\vspace*{-0mm}
\caption{Performance of NOVA for the 2D flow mixing equation. \textbf{a-b}. Prediction results of a representative case in (a) $\mathbfcal{T}_{\rm test\_id}$ and (b) $\mathbfcal{T}_{\rm test\_ood}$ for NOVA and NAS-U-Net. Each subfigure displays predictions at $t = 0.2$, $t = 2$, and $t = 4$ going from left to right. RMSE values in brackets are averages for the entire simulation time. \textbf{c}. Mean RMSE at each time step across tasks in $\mathbfcal{T}_{\rm test\_id}$ as-predicted by NOVA and NAS-U-Net. Shaded area indicates the standard deviation.}
\vspace*{0mm}
\label{Fig:2D_flow_mixing_gp}
\end{figure}


\clearpage

\section{Spatial and temporal derivatives}\label{sec:Spatial and temporal derivatives}

In recent studies, data-driven discretization methods have been explored for computing spatial derivatives, where neural networks learn optimal local operators directly from data \cite{bar2019learning}. In this work, however, we adopt finite difference schemes instead of data-driven approaches, owing to their simplicity and low computational overhead. Nonetheless, data-driven discretizations can also be seamlessly integrated with physics-informed constraints in NOVA and can be jointly learned in future work. 

Within CNN architectures, finite difference schemes are commonly used to calculate spatial derivatives based on the input grids \cite{CHIU2022114909}, as illustrated in Supplementary Fig. \ref{Fig:Spatial and temporal derivatives}a. To compute the derivative terms, it is necessary to utilize the surrounding values in the computational domains shifted by $\Delta x$ or $\Delta y$ along $+x$, $-x$, $+y$, and $-y$ axes,
where $\Delta x$ and $\Delta y$ are the spatial distance across evaluation points.




For transient problems, CNN needs to predict the solution at each time step. Since predicting the solution for the entire space-time domain at once is challenging, an autoregressive rollout is commonly employed \cite{krishnapriyan2021characterizing}, as depicted in Supplementary Fig. \ref{Fig:Spatial and temporal derivatives}b. This method starts with the initial condition $u(x, y, t_0)$ to predict the solution of time $t_1 = t_0 + \Delta t$, i.e., $u(x, y, t_1)$. The predicted value $u(x, y, t_1)$ is then used to predict the solution of the subsequent time $u(x, y, t_2)$ ($t_2 = t_1 + \Delta t$). The above procedures are repeated until the prediction for the final time $u(x, y, t_n)$ is obtained. Thus, the network can predict the entire state space using only the initial condition. The time derivative at each time step can also be approximated through finite difference schemes.




\begin{figure}[htp]
\centering
\vspace*{0mm}
\includegraphics[width=1\textwidth]{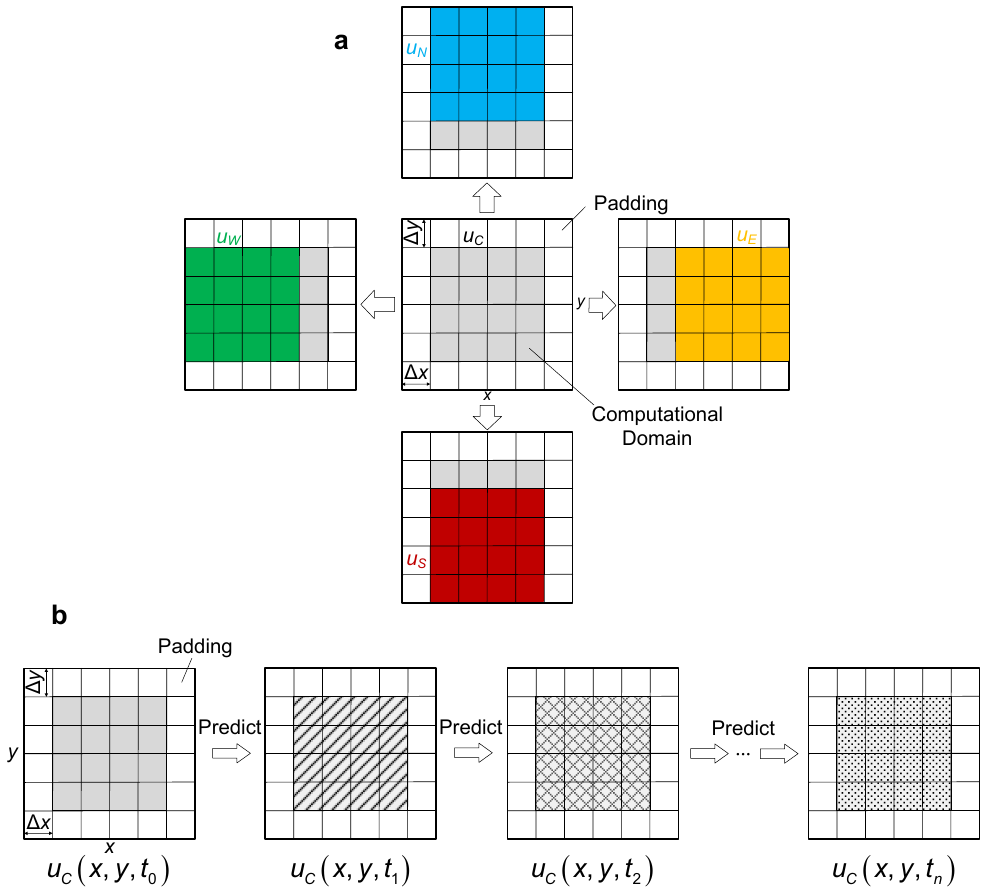}
\vspace*{-0mm}
\caption{Spatial and temporal derivatives of CNN architecture. \textbf{a}. Spatial derivatives. \textbf{b}. Temporal derivatives.}
\vspace*{0mm}
\label{Fig:Spatial and temporal derivatives}
\end{figure}

\clearpage

\section{Generalized U-Net architecture}\label{sec:Generalized U-Net}

The generalized U-Net structure is graphically expressed in Supplementary Fig. \ref{Fig:generalized U-Net}. 

\begin{figure}[htp]
\centering
\vspace*{0mm}
\includegraphics[width=1.0\textwidth]{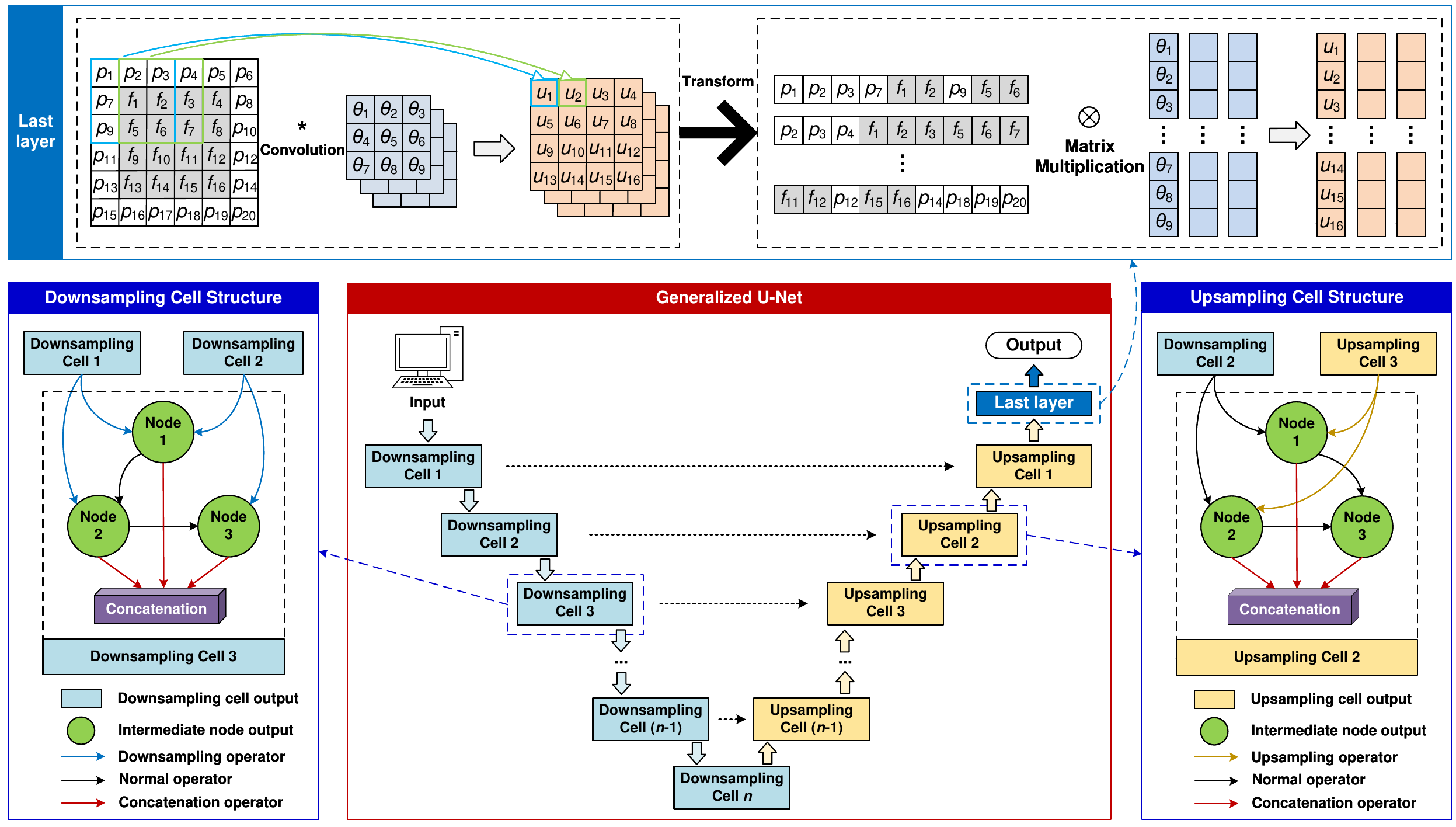}
\vspace*{-0mm}
\caption{Structure of the generalized U-Net.}
\vspace*{0mm}
\label{Fig:generalized U-Net}
\end{figure}

As illustrated in Supplementary Fig. \ref{Fig:generalized U-Net}, each downsampling cell includes two kinds of operators, i.e., downsampling operators and normal operators. Downsampling operators halve the number of pixels of the input while normal operators maintain the pixel of feature map as the input. Similarly, each upsampling cell includes upsampling operators and normal operators where the upsampling operators double the input pixel count. The downsampling, upsampling, and normal operators in the generalized U-Net are listed in Supplementary Table \ref{Table:operators}.

\begin{table}[h]
\renewcommand{\arraystretch}{1.5}
\caption{Downsampling, upsampling, and normal
operators.}
\begin{tabular}{@{}cc@{}}
\toprule
Type & Operator \\
\midrule
\makecell[c]{Downsampling operator \\ (stride = 2)} & \makecell[c]{3×3 squeeze-and-excitation, 3×3 dilated convolution, \\ 3×3 depthwise-separable convolution, 3×3 convolution, \\ 2×2 average pooling, 2×2 max pooling} \\ [5mm]
\makecell[c]{Upsampling operator \\ (stride = 2)} & \makecell[c]{3×3 transposed squeeze-and-excitation, \\ 3×3 transposed dilated convolution, 3×3 transposed convolution \\ 3×3 transposed depthwise-separable convolution} \\ [5mm]
\makecell[c]{Normal operator \\ (stride = 1)} & \makecell[c]{Identity, 3×3 squeeze-and-excitation, 3×3 dilated convolution, \\ 3×3 depthwise-separable convolution, 3×3 convolution}  \\
\botrule
\label{Table:operators}
\end{tabular}
\end{table}

\clearpage




\section{Evaluation metrics}\label{sec:evaluation metrics}

In this work, we use both global and local metrics to comprehensively evaluate the model performance for regression problems. This includes three classic global metrics: root mean squared error (RMSE), normalized RMSE (nRMSE), and maximum error (max error), which can be calculated by Supplementary Eq. (\ref{Eq:global metrics}) \cite{takamoto2022pdebench}

\begin{equation}
\label{Eq:global metrics}
\begin{aligned}
    & {\rm RMSE} = \sqrt{\frac{1}{n} \sum_{i=1}^{n} (u_{{\rm pred}, \, i} - u_{{\rm true}, \, i})^2} \\
    & {\rm nRMSE} = \frac{\sqrt{\frac{1}{n} \sum_{i=1}^{n} (u_{{\rm pred}, \, i} - u_{{\rm true}, \, i})^2}}{\sqrt{{\frac{1}{n} \sum_{i=1}^{n} u_{{\rm true}, \, i}^2}}} \\
    & \text{max error} = \max_{i=1, \ldots, n} \left| u_{{\rm pred}, \, i} - u_{{\rm true}, \, i} \right|
\end{aligned}
\end{equation}

\noindent where $u_{{\rm pred}, \, i}$ and $u_{{\rm true}, \, i}$ are the prediction value and ground truth of each point in the computational domain; $n$ is the number of points in the computational domain. The normalized RMSE can provide the scale-independent information, and the maximum error measures the worst prediction of the model.

Since these are global metrics, two other local metrics, i.e., RMSE of the conserved value (cRMSE) and RMSE at boundaries (bRMSE), are used to quantify specific failure modes as formulated in Supplementary Eq. (\ref{Eq:local metrics}) \cite{takamoto2022pdebench}

\begin{equation}
\label{Eq:local metrics}
\begin{aligned}
    & {\rm cRMSE} = \sqrt{\frac{1}{n} (\sum_{i=1}^{n} u_{{\rm pred}, \, i} - \sum_{i=1}^{n} u_{{\rm true}, \, i})^2} \\
    & {\rm bRMSE} = \sqrt{\frac{1}{n_{\rm BC}} \sum_{i=1}^{n_{\rm BC}} (u_{{\rm pred}, \, i} - u_{{\rm true}, \, i})^2}
\end{aligned}
\end{equation}

\noindent where $n_{\rm BC}$ is the number of BC points. cRMSE measures how well the model adheres to the physically conserved values, while bRMSE assesses the model's ability to accurately learn and predict at the boundaries.

\clearpage

\section{Theoretical analysis for lightweight training}

\noindent\textbf{Notation.} Let $\vect{\omega}\in\mathbb{R}^{q}$ denote the parameters of the network up to the penultimate layer, and
$\vect{\theta}\in \mathbb{R}^{p}$ the last-layer linear head.
For a fixed collocation/sampling set associated with task $\cT$,
$\mat{\Phi}(\vect{\omega})\in\mathbb{R}^{m\times p}$ is the design matrix obtained by stacking penultimate-layer features;
$\AT{\vect{\omega}}\in\mathbb{R}^{n\times p}$ and $\bT\in\mathbb{R}^{n}$ are the physics linearization matrix and right-hand side used in Eq.~(11).
We use $\|\cdot\|$ for the Euclidean norm on vectors and the spectral/operator norm on matrices;
$\sigma_{\min}(\cdot)$ is the smallest singular value; 
$\mat{P}_{\mat{M}}:=\mat{M}\mat{M}^{+}$ denotes the orthogonal projector onto $\mathrm{col}(\mat{M})$ via the pseudo-inverse~$\mat{M}^{+}$.
“Gradient descent (GD)” means the full-batch update
\[
\vect{\omega}^{(k+1)}=\vect{\omega}^{(k)}-\eta\,\nabla \cL(\vect{\omega}^{(k)}),
\quad k=0,1,\dots,K-1 .
\]
“Lightweight training” refers to using a small, fixed number \(K\) of GD steps, then freezing \(\vect{\omega}\).

\begin{assumption}[Smoothness, step size, feature/physics Lipschitz]\label{ass:main}
\leavevmode
\begin{itemize}
  \item[A1.] The objective $\cL(\vect{\omega})$ (either $\cL_{\mathrm{Data}}$ from Eq.~(5) or $\cL_{\mathrm{Physics}}$ from Supplementary Eq.~(2)) is $L_{\mathrm{sm}}$-smooth:
  \[
  \|\nabla \cL(\vect{\omega})-\nabla \cL(\vect{\omega}')\|\le L_{\mathrm{sm}}\,\|\vect{\omega}-\vect{\omega}'\|.
  \]
  \item[A2.] $\cL(\vect{\omega})$ is bounded below by its optimal value $\cL^{\ast}$.
  \item[A3.] GD in Eq.~(7) or Supplementary Eq.~(3) uses a fixed step size $\eta\in(0,\,1/L_{\mathrm{sm}}]$:
  \(
  \vect{\omega}^{(k+1)}=\vect{\omega}^{(k)}-\eta\,\nabla \cL(\vect{\omega}^{(k)}).
  \)
  \item[A4.] Penultimate features are $G$-Lipschitz in $\vect{\omega}$; letting $\mat{\Phi}(\vect{\omega})$ be the design matrix,
  \(
  \|\mat{\Phi}(\vect{\omega})-\mat{\Phi}(\vect{\omega}')\|\le G\,\|\vect{\omega}-\vect{\omega}'\|.
  \)
  \item[A5.] The physics linearization that builds $\AT{\vect{\omega}}$ in Eq.~(11) is $\beta$-Lipschitz w.r.t.\ features, i.e.,
\(
\|\AT{\vect{\omega}}-\AT{\vect{\omega}'}\| \le \beta\,\|\mat{\Phi}(\vect{\omega})-\mat{\Phi}(\vect{\omega}')\|.
\)
Along the GD path, assume uniform conditioning: \[
\min_{0\le k\le K}\sigma_{\min}\!\big(\AT{\vect{\omega}^{(k)}}\big)\ge \gamma>0
\] or Eq.~(12) uses ridge with $\lambda>0$.
\end{itemize}
\end{assumption}

\begin{lemma}[Ergodic first-order convergence of GD]\label{lem:ergodic}
Under Assumption~\ref{ass:main} (A1--A3), for the GD iterates $\{\vect{\omega}^{(k)}\}_{k=0}^{K}$,
\[
\min_{0\le k<K}\|\nabla \cL(\vect{\omega}^{(k)})\|^2
\;\le\; \frac{2\big(\cL(\vect{\omega}^{(0)})-\cL^{\ast}\big)}{\eta\,K}.
\]
\end{lemma}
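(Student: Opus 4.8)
The plan is to run the textbook descent-lemma argument for gradient descent on a smooth objective and then telescope; only A1--A3 are needed for the per-step decrease, with A2 entering at the very end. First I would establish the quadratic upper bound (the \emph{descent lemma}) implied by the Lipschitz-gradient hypothesis A1: for any $\vect{\omega},\vect{\omega}'$,
\[
\cL(\vect{\omega}') \le \cL(\vect{\omega}) + \langle \nabla\cL(\vect{\omega}),\, \vect{\omega}'-\vect{\omega}\rangle + \tfrac{L_{\mathrm{sm}}}{2}\,\|\vect{\omega}'-\vect{\omega}\|^2 .
\]
This follows by writing $\cL(\vect{\omega}')-\cL(\vect{\omega})=\int_0^1\langle\nabla\cL(\vect{\omega}+t(\vect{\omega}'-\vect{\omega})),\,\vect{\omega}'-\vect{\omega}\rangle\,dt$, adding and subtracting $\langle\nabla\cL(\vect{\omega}),\vect{\omega}'-\vect{\omega}\rangle$, and bounding the remaining integrand via the A1 Lipschitz estimate together with Cauchy--Schwarz.

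Next I would substitute the GD update from A3, namely $\vect{\omega}^{(k+1)}=\vect{\omega}^{(k)}-\eta\,\nabla\cL(\vect{\omega}^{(k)})$, into the descent lemma with $\vect{\omega}=\vect{\omega}^{(k)}$ and $\vect{\omega}'=\vect{\omega}^{(k+1)}$. Both the inner-product term and the quadratic term collapse into multiples of $\|\nabla\cL(\vect{\omega}^{(k)})\|^2$, giving
\[
\cL(\vect{\omega}^{(k+1)}) \le \cL(\vect{\omega}^{(k)}) - \eta\Bigl(1-\tfrac{L_{\mathrm{sm}}\eta}{2}\Bigr)\|\nabla\cL(\vect{\omega}^{(k)})\|^2 .
\]
The step-size constraint $\eta\le 1/L_{\mathrm{sm}}$ forces $1-\tfrac{L_{\mathrm{sm}}\eta}{2}\ge \tfrac12$, so each iterate decreases the objective by at least $\tfrac{\eta}{2}\|\nabla\cL(\vect{\omega}^{(k)})\|^2$.

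Finally I would sum this per-step inequality over $k=0,\dots,K-1$. The left-hand side telescopes to $\cL(\vect{\omega}^{(0)})-\cL(\vect{\omega}^{(K)})$, which by A2 is at most $\cL(\vect{\omega}^{(0)})-\cL^{\ast}$. Lower-bounding the resulting sum by $\sum_{k=0}^{K-1}\|\nabla\cL(\vect{\omega}^{(k)})\|^2 \ge K\min_{0\le k<K}\|\nabla\cL(\vect{\omega}^{(k)})\|^2$ and dividing by $\tfrac{\eta K}{2}$ yields exactly the stated bound.

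The argument is entirely classical, so there is no genuine obstacle; the only step warranting care is the descent lemma, as it is the sole place where smoothness is converted into a usable quadratic inequality, while everything downstream is algebraic telescoping and the step-size case check. I would also remark that nothing in the proof invokes A4--A5 or any structure of $\cL_{\mathrm{Data}}$ versus $\cL_{\mathrm{Physics}}$ beyond smoothness and boundedness below, so the bound holds verbatim for either objective, which is what the statement's parenthetical in A1 anticipates.
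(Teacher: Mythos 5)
Your proposal is correct and follows essentially the same argument as the paper: the descent lemma from A1, substitution of the GD update with the step-size check $1-\tfrac{L_{\mathrm{sm}}\eta}{2}\ge\tfrac12$, telescoping with A2, and bounding the minimum by the average. The only difference is that you spell out the integral-form derivation of the descent lemma, which the paper simply invokes by name.
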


\begin{proof}
By $L_{\mathrm{sm}}$-smoothness (descent lemma), for all $\vect{x},\vect{y}$,
\[
\cL(\vect{y})
\;\le\;
\cL(\vect{x})
+\nabla\cL(\vect{x})^\top(\vect{y}-\vect{x})
+\frac{L_{\mathrm{sm}}}{2}\|\vect{y}-\vect{x}\|^2 .
\]
Apply this with $\vect{x}=\vect{\omega}^{(k)}$ and
$\vect{y}=\vect{\omega}^{(k+1)}=\vect{\omega}^{(k)}-\eta\,\nabla\cL(\vect{\omega}^{(k)})$:
\[
\begin{aligned}
\cL(\vect{\omega}^{(k+1)})
&\le
\cL(\vect{\omega}^{(k)})
-\eta\|\nabla\cL(\vect{\omega}^{(k)})\|^2
+\frac{L_{\mathrm{sm}}\eta^2}{2}\|\nabla\cL(\vect{\omega}^{(k)})\|^2 \\
&=
\cL(\vect{\omega}^{(k)})-\eta\!\left(1-\frac{L_{\mathrm{sm}}\eta}{2}\right)\!
\|\nabla\cL(\vect{\omega}^{(k)})\|^2 .
\end{aligned}
\]
Since $\eta\le 1/L_{\mathrm{sm}}$, we have $1-\tfrac{L_{\mathrm{sm}}\eta}{2}\ge \tfrac{1}{2}$, hence
\[
\cL(\vect{\omega}^{(k)})-\cL(\vect{\omega}^{(k+1)})
\;\ge\; \frac{\eta}{2}\,\|\nabla\cL(\vect{\omega}^{(k)})\|^2 .
\]
Summing $k=0$ to $K-1$ and using $\cL(\vect{\omega}^{(K)})\ge \cL^{\ast}$ yields
\[
\frac{\eta}{2}\sum_{k=0}^{K-1}\|\nabla\cL(\vect{\omega}^{(k)})\|^2
\;\le\; \cL(\vect{\omega}^{(0)})-\cL^{\ast}.
\]
Divide by $K$ and use $\min\le$ average to obtain \[
\min_{0\le k<K}\|\nabla \cL(\vect{\omega}^{(k)})\|^2
\;\le\; \frac{1}{K}\sum_{k=0}^{K-1}\|\nabla\cL(\vect{\omega}^{(k)})\|^2
\;\le\; \frac{2\big(\cL(\vect{\omega}^{(0)})-\cL^{\ast}\big)}{\eta\,K}.
\]
This proves the claim.
\end{proof}

\begin{lemma}[Projection residual is Lipschitz up to conditioning]\label{lem:proj-lip}
For any target vector $\vect{u}^{\ast}\in\mathbb{R}^m$, define
\[
E_{\mathrm{proj}}(\vect{\omega})
:= \min_{\vect{\theta}}\;\big\|\mat{\Phi}(\vect{\omega})\,\vect{\theta}-\vect{u}^{\ast}\big\|_2
= \big\|(\mat{I}-\mat{P}_{\mat{\Phi}(\vect{\omega})})\,\vect{u}^{\ast}\big\|_2,
\qquad
\mat{P}_{\mat{\Phi}}:=\mat{\Phi}\,\mat{\Phi}^{+}.
\]
If either $\sigma_{\min}\!\big(\mat{\Phi}(\vect{\omega})\big)\ge\gamma>0$ along the path or ridge is used in the head ($\lambda>0$), then there exists a constant $C_{\mathrm{cond}}\!\ge 1$ (depending only on $\gamma$ or $\lambda$ and uniform bounds on $\|\mat{\Phi}\|$) such that
\[
\big|E_{\mathrm{proj}}(\vect{\omega})-E_{\mathrm{proj}}(\vect{\omega}')\big|
\;\le\; C_{\mathrm{cond}}\;\big\|\mat{\Phi}(\vect{\omega})-\mat{\Phi}(\vect{\omega}')\|.
\]
\end{lemma}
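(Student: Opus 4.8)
The plan is to reduce the claim to a perturbation bound for the orthogonal projector $\mat{P}_{\mat{\Phi}}$ and then control that perturbation through the conditioning hypothesis. First I would use the stated identity $E_{\mathrm{proj}}(\vect{\omega}) = \|(\Id - \mat{P}_{\mat{\Phi}(\vect{\omega})})\vect{u}^{\ast}\|_2$ together with the reverse triangle inequality for the Euclidean norm, giving
\[
\big|E_{\mathrm{proj}}(\vect{\omega}) - E_{\mathrm{proj}}(\vect{\omega}')\big|
\le \big\|\big(\mat{P}_{\mat{\Phi}(\vect{\omega}')} - \mat{P}_{\mat{\Phi}(\vect{\omega})}\big)\vect{u}^{\ast}\big\|_2
\le \big\|\mat{P}_{\mat{\Phi}(\vect{\omega})} - \mat{P}_{\mat{\Phi}(\vect{\omega}')}\big\|\,\|\vect{u}^{\ast}\|_2 .
\]
This isolates the only nontrivial quantity, namely the operator-norm gap between the two projectors; the fixed factor $\|\vect{u}^{\ast}\|_2$ is absorbed into $C_{\mathrm{cond}}$.

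The key step is bounding $\|\mat{P}_{\mat{A}} - \mat{P}_{\mat{B}}\|$ by $\|\mat{A} - \mat{B}\|$ for $\mat{A} = \mat{\Phi}(\vect{\omega})$ and $\mat{B} = \mat{\Phi}(\vect{\omega}')$. I would start from the algebraic identity $\mat{P}_{\mat{A}} - \mat{P}_{\mat{B}} = \mat{P}_{\mat{A}}(\Id - \mat{P}_{\mat{B}}) - (\Id - \mat{P}_{\mat{A}})\mat{P}_{\mat{B}}$ and exploit that an orthogonal projector fixes its own column space, i.e. $(\Id - \mat{P}_{\mat{A}})\mat{A} = \mat{0}$. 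Writing $\mat{E} := \mat{B} - \mat{A}$ this yields $(\Id - \mat{P}_{\mat{A}})\mat{B} = (\Id - \mat{P}_{\mat{A}})\mat{E}$, hence $(\Id - \mat{P}_{\mat{A}})\mat{P}_{\mat{B}} = (\Id - \mat{P}_{\mat{A}})\mat{E}\,\mat{B}^{+}$, so that $\|(\Id - \mat{P}_{\mat{A}})\mat{P}_{\mat{B}}\| \le \|\mat{E}\|\,\|\mat{B}^{+}\|$ using $\|\Id - \mat{P}_{\mat{A}}\| \le 1$; the transposed argument, via $\|\mat{P}_{\mat{A}}(\Id - \mat{P}_{\mat{B}})\| = \|(\Id - \mat{P}_{\mat{B}})\mat{P}_{\mat{A}}\|$, gives $\|\mat{P}_{\mat{A}}(\Id - \mat{P}_{\mat{B}})\| \le \|\mat{E}\|\,\|\mat{A}^{+}\|$. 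Combining through the triangle inequality produces
\[
\big\|\mat{P}_{\mat{A}} - \mat{P}_{\mat{B}}\big\| \le 2\,\max\{\|\mat{A}^{+}\|,\|\mat{B}^{+}\|\}\,\|\mat{A} - \mat{B}\| .
\]

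With this bound in hand, the two regimes of Assumption~\ref{ass:main}(A5) close the argument. If $\sigma_{\min}(\mat{\Phi}(\vect{\omega})) \ge \gamma$ along the path, then $\|\mat{\Phi}(\vect{\omega})^{+}\| = 1/\sigma_{\min}(\mat{\Phi}(\vect{\omega})) \le 1/\gamma$, and one may take $C_{\mathrm{cond}} = \max\{1,\,2\|\vect{u}^{\ast}\|_2/\gamma\}$. In the ridge regime I would replace the orthogonal projector by the smoothed operator $\tilde{\mat{P}}_{\mat{\Phi}} := \mat{\Phi}(\mat{\Phi}^{\top}\mat{\Phi} + \lambda\Id)^{-1}\mat{\Phi}^{\top}$ induced by the regularized head; since $\|(\mat{\Phi}^{\top}\mat{\Phi} + \lambda\Id)^{-1}\| \le 1/\lambda$ uniformly, the resolvent identity $\mat{R}(\mat{\Phi}) - \mat{R}(\mat{\Phi}') = \mat{R}(\mat{\Phi})\big((\mat{\Phi}')^{\top}\mat{\Phi}' - \mat{\Phi}^{\top}\mat{\Phi}\big)\mat{R}(\mat{\Phi}')$ with $\mat{R}(\mat{\Phi}) := (\mat{\Phi}^{\top}\mat{\Phi} + \lambda\Id)^{-1}$ shows $\tilde{\mat{P}}_{\mat{\Phi}}$ is globally Lipschitz in $\mat{\Phi}$ with a constant depending only on $\lambda$ and a uniform bound on $\|\mat{\Phi}\|$, again yielding a finite $C_{\mathrm{cond}} \ge 1$.

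The main obstacle is that the map $\mat{\Phi} \mapsto \mat{P}_{\mat{\Phi}}$ is not globally Lipschitz: it is discontinuous exactly where the rank of $\mat{\Phi}$ drops, and the constant it produces blows up like $1/\sigma_{\min}$. The two hypotheses in A5 are precisely the devices that rule this out—$\sigma_{\min} \ge \gamma$ keeps the iterates on the full-rank stratum at uniform distance from the singularity, while $\lambda > 0$ trades the pseudo-inverse for a resolvent whose norm is capped at $1/\lambda$ regardless of conditioning. The remaining care in the ridge case is to confirm that the residual defined through $\tilde{\mat{P}}_{\mat{\Phi}}$ is the relevant one and that $\mat{\Phi}^{\top}\mat{\Phi}$ is itself Lipschitz in $\mat{\Phi}$ on the bounded set where the iterates live, which follows from $\|\mat{\Phi}^{\top}\mat{\Phi} - (\mat{\Phi}')^{\top}\mat{\Phi}'\| \le (\|\mat{\Phi}\| + \|\mat{\Phi}'\|)\,\|\mat{\Phi} - \mat{\Phi}'\|$.
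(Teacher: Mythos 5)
Your proposal is correct, and for the unregularized case it takes a genuinely different route from the paper. The paper handles both regimes at once by working with the ridge projector $\mat{P}_\lambda(\mat{\Phi})=\mat{\Phi}(\mat{\Phi}^{\top}\mat{\Phi}+\lambda\Id)^{-1}\mat{\Phi}^{\top}$ for $\lambda\ge 0$: it expands the difference $\mat{P}_\lambda(\mat{\Phi}')-\mat{P}_\lambda(\mat{\Phi})$ into four terms, applies the resolvent identity to $(\mat{\Phi}^{\top}\mat{\Phi}+\lambda\Id)^{-1}$, bounds the inverse by $1/\lambda$ or $1/\gamma^{2}$, and arrives at $c_1\|\mat{\Delta}\|+c_2\|\mat{\Delta}\|^{2}$, absorbing the quadratic term using boundedness of the path. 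Your ridge-regime argument is essentially the same as this. Where you diverge is the full-rank $\lambda=0$ case: instead of pushing the $\lambda\to 0$ limit of the ridge formula, you invoke the classical projector-perturbation identity $\mat{P}_{\mat{A}}-\mat{P}_{\mat{B}}=\mat{P}_{\mat{A}}(\Id-\mat{P}_{\mat{B}})-(\Id-\mat{P}_{\mat{A}})\mat{P}_{\mat{B}}$ together with the annihilation property $(\Id-\mat{P}_{\mat{A}})\mat{A}=\mat{0}$, which yields the linear bound $\|\mat{P}_{\mat{A}}-\mat{P}_{\mat{B}}\|\le 2\max\{\|\mat{A}^{+}\|,\|\mat{B}^{+}\|\}\,\|\mat{A}-\mat{B}\|$ directly, with no quadratic remainder to absorb and a constant $2/\gamma$ that does not depend on any upper bound for $\|\mat{\Phi}\|$. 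That is cleaner and slightly sharper than the paper's statement, which advertises dependence on uniform bounds of $\|\mat{\Phi}\|$. (Implicitly your argument uses that both matrices have full column rank so the projectors have equal rank; the hypothesis $\sigma_{\min}\ge\gamma$ along the path supplies this, exactly as in the paper.) You also correctly flag the one genuine wrinkle that the paper glosses over: in the ridge regime $(\Id-\tilde{\mat{P}}_{\mat{\Phi}})\vect{u}^{\ast}$ is the residual of the regularized head, not the true least-squares minimum, so $E_{\mathrm{proj}}$ must be understood as redefined through the ridge projector there — the paper silently makes the same substitution. Both you and the paper absorb $\|\vect{u}^{\ast}\|$ into the constant.
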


\begin{proof}
Consider the ridge projector
\[
\mat{P}_\lambda(\mat{\Phi})
:= \mat{\Phi}\,\big(\mat{\Phi}^{\top}\mat{\Phi}+\lambda\mat{I}\big)^{-1}\mat{\Phi}^{\top},
\qquad \lambda\ge 0 ,
\]
and let $\mat{\Phi}'=\mat{\Phi}+\mat{\Delta}$,
\(
\mat{A}:=\mat{\Phi}^{\top}\mat{\Phi}+\lambda\mat{I},
\;
\mat{A}':=\mat{\Phi}'^{\top}\mat{\Phi}'+\lambda\mat{I}
= \mat{A}+\mat{\Phi}^{\top}\mat{\Delta}+\mat{\Delta}^{\top}\mat{\Phi}+\mat{\Delta}^{\top}\mat{\Delta}.
\)
A direct expansion gives
\[
\mat{P}_\lambda(\mat{\Phi}')-\mat{P}_\lambda(\mat{\Phi})
= \mat{\Delta}\,\mat{A}'^{-1}\mat{\Phi}^{\top}
+ \mat{\Phi}\,(\mat{A}'^{-1}-\mat{A}^{-1})\,\mat{\Phi}^{\top}
+ \mat{\Phi}\,\mat{A}^{-1}\mat{\Delta}^{\top}
+ \mat{\Delta}\,\mat{A}'^{-1}\mat{\Delta}^{\top}.
\]
Using the resolvent identity
\(
\mat{A}'^{-1}-\mat{A}^{-1}
=-\,\mat{A}^{-1}(\mat{A}'-\mat{A})\mat{A}'^{-1}
\)
together with
\(
\|\mat{A}^{-1}\|,\|\mat{A}'^{-1}\|
\le 1/\lambda \text{ if }\lambda>0,\ \text{or}\ \le 1/\gamma^2 \text{ if }\lambda=0,
\)
and
\(
\|\mat{A}'-\mat{A}\|
\le 2\|\mat{\Phi}\|\,\|\mat{\Delta}\|+\|\mat{\Delta}\|^2,
\)
we obtain (by submultiplicativity of the operator norm)
\[
\|\mat{P}_\lambda(\mat{\Phi}')-\mat{P}_\lambda(\mat{\Phi})\|
\le c_1\,\|\mat{\Delta}\|+c_2\,\|\mat{\Delta}\|^2,
\]
for constants $c_1,c_2$ depending only on the conditioning bounds and uniform bounds on $\|\mat{\Phi}\|$. Since the path is bounded, absorb the quadratic term to get a linear estimate
\[
\|\mat{P}_\lambda(\mat{\Phi}')-\mat{P}_\lambda(\mat{\Phi})\|
\le C_{\mathrm{cond}}\,\|\mat{\Delta}\|.
\]
Finally, by the reverse triangle inequality and
$E_{\mathrm{proj}}(\vect{\omega})
=\|(\mat{I}-\mat{P}_\lambda(\mat{\Phi}(\vect{\omega})))\,\vect{u}^{\ast}\|$,
\[
\big|E_{\mathrm{proj}}(\vect{\omega})-E_{\mathrm{proj}}(\vect{\omega}')\big|
\le \|\mat{P}_\lambda(\mat{\Phi}(\vect{\omega}))-\mat{P}_\lambda(\mat{\Phi}(\vect{\omega}'))\|\,\|\vect{u}^{\ast}\|
\le C_{\mathrm{cond}}\;\|\mat{\Phi}(\vect{\omega})-\mat{\Phi}(\vect{\omega}')\|\,\|\vect{u}^{\ast}\|.
\]
Without loss of generality normalize $\|\vect{u}^{\ast}\|=1$ (equivalently, absorb $\|\vect{u}^{\ast}\|$ into the constant). Hence
\[
\big|E_{\mathrm{proj}}(\vect{\omega})-E_{\mathrm{proj}}(\vect{\omega}')\big|
\le C_{\mathrm{cond}}\;\big\|\mat{\Phi}(\vect{\omega})-\mat{\Phi}(\vect{\omega}')\big\|,
\]
which matches the claim. \qedhere

\end{proof}

\begin{theorem}[Early-feature projection bound]\label{thm:early-proj}
Under Assumption~\ref{ass:main} (A1--A4) and Lemma~\ref{lem:proj-lip}, let
$E_k:=E_{\mathrm{proj}}(\vect{\omega}^{(k)})$.
For the lightweight GD iterates in Eq.~(7) (or Supplementary Eq.~(3)),
\[
E_{K}\;\le\; E_{0}
\;+\; C_{\mathrm{cond}}\sum_{k=0}^{K-1}\big\|\mat{\Phi}(\vect{\omega}^{(k+1)})-\mat{\Phi}(\vect{\omega}^{(k)})\big\|
\;\le\; E_{0}
\;+\; C_{\mathrm{cond}}\,G\sum_{k=0}^{K-1}\big\|\vect{\omega}^{(k+1)}-\vect{\omega}^{(k)}\big\|,
\]
and, by Cauchy--Schwarz,
\[
E_{K}\;\le\; E_{0}
\;+\; C_{\mathrm{cond}}\,G\,\sqrt{K}\!\left(\sum_{k=0}^{K-1}\big\|\vect{\omega}^{(k+1)}-\vect{\omega}^{(k)}\big\|^{2}\right)^{\!1/2}.
\]
\end{theorem}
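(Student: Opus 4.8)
The plan is to establish the three chained inequalities by a single telescoping argument that feeds the two Lipschitz estimates already in hand. First I would write the endpoint gap as a telescoping sum over consecutive iterates,
\[
E_{K}-E_{0}=\sum_{k=0}^{K-1}\big(E_{k+1}-E_{k}\big),
\]
and pass to the triangle inequality to get $E_{K}\le E_{0}+\sum_{k=0}^{K-1}\lvert E_{k+1}-E_{k}\rvert$. This reduces the problem to controlling each per-step increment $\lvert E_{k+1}-E_{k}\rvert$ in isolation.

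Next I would apply Lemma~\ref{lem:proj-lip} to each consecutive pair $(\vect{\omega}^{(k)},\vect{\omega}^{(k+1)})$, which yields
\[
\lvert E_{k+1}-E_{k}\rvert\le C_{\mathrm{cond}}\,\big\|\mat{\Phi}(\vect{\omega}^{(k+1)})-\mat{\Phi}(\vect{\omega}^{(k)})\big\|;
\]
summing over $k$ produces the first displayed bound. I would then invoke Assumption~A4 (the feature map is $G$-Lipschitz in $\vect{\omega}$) termwise to replace each feature increment by $G\,\|\vect{\omega}^{(k+1)}-\vect{\omega}^{(k)}\|$, giving the second bound. Finally, Cauchy--Schwarz applied to $\sum_{k}\|\vect{\omega}^{(k+1)}-\vect{\omega}^{(k)}\|\cdot 1$, with the vector of ones supplying the $\sqrt{K}$ factor, delivers the third, $\ell^{2}$ path-length form.

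This is a chaining exercise rather than an argument with a genuine obstacle; the only point requiring care is the uniformity of the constant $C_{\mathrm{cond}}$ across all telescoped terms. Lemma~\ref{lem:proj-lip} is valid only where its conditioning hypothesis holds — $\sigma_{\min}(\mat{\Phi})\ge\gamma$ along the path, or ridge regularization with $\lambda>0$ — so I must confirm that this hypothesis is in force at every iterate $\vect{\omega}^{(k)}$, $0\le k\le K$, in order to use a single $C_{\mathrm{cond}}$ throughout; this uniformity is exactly what the stated assumptions supply. I would close by noting why the telescoped form is the useful one: under the GD update each increment satisfies $\|\vect{\omega}^{(k+1)}-\vect{\omega}^{(k)}\|=\eta\,\|\nabla\cL(\vect{\omega}^{(k)})\|$, so the $\ell^{2}$ path-length sum is governed by the summed squared gradients controlled in Lemma~\ref{lem:ergodic}. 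That is precisely the reason to telescope over single steps rather than apply Lemma~\ref{lem:proj-lip} directly to the endpoints $\vect{\omega}^{(0)}$ and $\vect{\omega}^{(K)}$.
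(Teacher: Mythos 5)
Your proposal is correct and follows essentially the same route as the paper's proof: telescope the increments, bound each one via Lemma~\ref{lem:proj-lip}, pass to parameter increments with Assumption~A4, and finish with Cauchy--Schwarz. Your added remarks on the uniformity of $C_{\mathrm{cond}}$ along the path and the link to Lemma~\ref{lem:ergodic} are accurate but not needed beyond what the paper already states.
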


\begin{proof}
By Lemma~\ref{lem:proj-lip},
\[
|E_{k+1}-E_k|
\le C_{\mathrm{cond}}\;\big\|\mat{\Phi}(\vect{\omega}^{(k+1)})-\mat{\Phi}(\vect{\omega}^{(k)})\big\|.
\]
Summing from $k=0$ to $K-1$ yields
\[
E_{K}-E_{0}
\le C_{\mathrm{cond}}\sum_{k=0}^{K-1}
\big\|\mat{\Phi}(\vect{\omega}^{(k+1)})-\mat{\Phi}(\vect{\omega}^{(k)})\big\|.
\]
Assumption~A4 gives
\(
\big\|\mat{\Phi}(\vect{\omega}^{(k+1)})-\mat{\Phi}(\vect{\omega}^{(k)})\big\|
\le G\,\big\|\vect{\omega}^{(k+1)}-\vect{\omega}^{(k)}\big\|,
\)
which establishes the second inequality. Finally, by Cauchy--Schwarz,
\[
\sum_{k=0}^{K-1}\big\|\vect{\omega}^{(k+1)}-\vect{\omega}^{(k)}\big\|
\le \sqrt{K}\!\left(\sum_{k=0}^{K-1}\big\|\vect{\omega}^{(k+1)}-\vect{\omega}^{(k)}\big\|^{2}\right)^{1/2},
\]
and substituting this above yields the stated $\sqrt{K}$ bound.
\end{proof}

\begin{theorem}[Early-feature bound for physics residual]\label{thm:phys}
For any task $\cT$, define the physics-informed head by Eq.~(11) and Eq.~(12).
At a frozen $\vect{\omega}^{\ast}$, the linear system reads
\[
\AT{\vect{\omega}^{\ast}}\,\vect{\theta}=\bT,
\qquad
\vect{\theta}^{\ast}_{\cT}(\vect{\omega}^{\ast})
=\big({\AT{\vect{\omega}^{\ast}}}^{\!\top}\AT{\vect{\omega}^{\ast}}+\lambda\mat{I}\big)^{-1}
{\AT{\vect{\omega}^{\ast}}}^{\!\top}\bT .
\]
Define the optimal physics residual
\[
E_{\mathrm{phys},\cT}(\vect{\omega})
:=\min_{\vect{\theta}}\big\|\AT{\vect{\omega}}\vect{\theta}-\bT\big\|_2
=\big\|(\mat{I}-\mat{P}_{\AT{\vect{\omega}}})\,\bT\big\|_2,
\quad
\mat{P}_{\AT{\vect{\omega}}}
:=\AT{\vect{\omega}}\,\big({\AT{\vect{\omega}}}^{\top}\AT{\vect{\omega}}+\lambda\mat{I}\big)^{-1}{\AT{\vect{\omega}}}^{\top}.
\]
Under Assumption~\ref{ass:main} (A4--A5),
\[
\begin{aligned}
E_{\mathrm{phys},\cT}(\vect{\omega}^{(K)})
&\le E_{\mathrm{phys},\cT}(\vect{\omega}^{(0)})
   + C_{\mathrm{cond}}\sum_{k=0}^{K-1}
     \big\|\AT{\vect{\omega}^{(k+1)}}-\AT{\vect{\omega}^{(k)}}\big\| \\
&\le E_{\mathrm{phys},\cT}(\vect{\omega}^{(0)})
   + C_{\mathrm{cond}}\,\beta G
     \sum_{k=0}^{K-1}\big\|\vect{\omega}^{(k+1)}-\vect{\omega}^{(k)}\big\|.
\end{aligned}
\]
Consequently,
\[
E_{\mathrm{phys},\cT}(\vect{\omega}^{(K)})
\le E_{\mathrm{phys},\cT}(\vect{\omega}^{(0)})
+ C_{\mathrm{cond}}\,\beta G\,\sqrt{K}\!\left(\sum_{k=0}^{K-1}\big\|\vect{\omega}^{(k+1)}-\vect{\omega}^{(k)}\big\|^{2}\right)^{\!1/2}.
\]
\end{theorem}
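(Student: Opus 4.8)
The plan is to recognize that this statement is the exact physics-side analog of Theorem~\ref{thm:early-proj}, obtained by substituting the physics linearization $\AT{\vect{\omega}}$ for the feature matrix $\mat{\Phi}(\vect{\omega})$ and the right-hand side $\bT$ for the target $\vect{u}^{\ast}$. First I would observe that Lemma~\ref{lem:proj-lip}, although stated for $\mat{\Phi}$ and $\vect{u}^{\ast}$, is proved entirely through generic perturbation identities for the ridge projector $\mat{P}_\lambda(\cdot)$ — the resolvent identity, submultiplicativity of the operator norm, and absorption of the quadratic term along a bounded path — none of which uses the specific origin of the matrix. Hence the identical argument applies verbatim with $\AT{\vect{\omega}}$ in place of $\mat{\Phi}(\vect{\omega})$, provided the required conditioning holds; and that conditioning is supplied precisely by Assumption~A5, namely either $\min_{k}\sigma_{\min}(\AT{\vect{\omega}^{(k)}})\ge\gamma>0$ along the path or the head in Eq.~(12) using ridge with $\lambda>0$. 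This yields the one-step estimate
\[
\big|E_{\mathrm{phys},\cT}(\vect{\omega}^{(k+1)})-E_{\mathrm{phys},\cT}(\vect{\omega}^{(k)})\big|
\le C_{\mathrm{cond}}\,\big\|\AT{\vect{\omega}^{(k+1)}}-\AT{\vect{\omega}^{(k)}}\big\|,
\]
with $C_{\mathrm{cond}}$ now the conditioning constant for $\AT{}$.

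Second, I would telescope this bound across $k=0,\dots,K-1$ exactly as in the proof of Theorem~\ref{thm:early-proj}: summing the absolute increments and applying the triangle inequality gives $E_{\mathrm{phys},\cT}(\vect{\omega}^{(K)})-E_{\mathrm{phys},\cT}(\vect{\omega}^{(0)})\le C_{\mathrm{cond}}\sum_{k=0}^{K-1}\|\AT{\vect{\omega}^{(k+1)}}-\AT{\vect{\omega}^{(k)}}\|$, which is the first displayed inequality. I would then convert the matrix increments to parameter increments by chaining the two Lipschitz hypotheses: Assumption~A5 gives $\|\AT{\vect{\omega}^{(k+1)}}-\AT{\vect{\omega}^{(k)}}\|\le\beta\,\|\mat{\Phi}(\vect{\omega}^{(k+1)})-\mat{\Phi}(\vect{\omega}^{(k)})\|$, and Assumption~A4 gives $\|\mat{\Phi}(\vect{\omega}^{(k+1)})-\mat{\Phi}(\vect{\omega}^{(k)})\|\le G\,\|\vect{\omega}^{(k+1)}-\vect{\omega}^{(k)}\|$; composing these produces the factor $\beta G$ and the second inequality. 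Finally, Cauchy--Schwarz applied to $\sum_{k}\|\vect{\omega}^{(k+1)}-\vect{\omega}^{(k)}\|\le\sqrt{K}\big(\sum_{k}\|\vect{\omega}^{(k+1)}-\vect{\omega}^{(k)}\|^{2}\big)^{1/2}$ delivers the stated $\sqrt{K}$ bound.

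The only genuinely delicate point is the first step: justifying that Lemma~\ref{lem:proj-lip} transfers to $\AT{}$ with a single, uniform constant $C_{\mathrm{cond}}$. This requires the conditioning bound of A5 to hold along the entire GD trajectory rather than merely at the endpoints, and it requires $\|\AT{\vect{\omega}^{(k)}}\|$ to stay uniformly bounded so that the quadratic perturbation term in the projector expansion can be absorbed into the linear one. Both facts are guaranteed by the ``uniform conditioning along the path'' clause of A5 together with boundedness of the iterates along a finite $K$-step run; once these are in hand, the remainder is a mechanical repetition of the $E_{\mathrm{proj}}$ argument.
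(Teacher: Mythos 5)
Your proposal matches the paper's own proof essentially step for step: both transfer Lemma~\ref{lem:proj-lip} verbatim to $\AT{\vect{\omega}}$ under the conditioning clause of A5, telescope the one-step projector perturbation bound, chain A5 with A4 to obtain the $\beta G$ factor, and finish with Cauchy--Schwarz. The delicate point you flag (uniform conditioning and boundedness along the whole GD path) is handled in the paper exactly as you propose, by invoking the path-wise clause of A5 and the normalization of $\|\bT\|$.
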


\begin{proof}
By definition,
$E_{\mathrm{phys},\cT}(\vect{\omega})=\big\|(\mat{I}-\mat{P}_{\AT{\vect{\omega}}})\,\bT\big\|_2$.
The perturbation bound for ridge projectors in Lemma~\ref{lem:proj-lip} applies verbatim
with $\mat{\Phi}$ replaced by $\AT{\vect{\omega}}$ (same conditioning assumptions: either
$\sigma_{\min}(\AT{\vect{\omega}})$ is uniformly bounded below along the path or $\lambda>0$),
hence
\[
\big\|\mat{P}_{\AT{\vect{\omega}}}-\mat{P}_{\AT{\vect{\omega}'}}\big\|
\le C_{\mathrm{cond}}\;\big\|\AT{\vect{\omega}}-\AT{\vect{\omega}'}\big\|.
\]
Therefore, for consecutive GD iterates,
\[
\begin{aligned}
\big|E_{\mathrm{phys},\cT}(\vect{\omega}^{(k+1)})-E_{\mathrm{phys},\cT}(\vect{\omega}^{(k)})\big|
&\le \big\|(\mat{P}_{\AT{\vect{\omega}^{(k+1)}}}-\mat{P}_{\AT{\vect{\omega}^{(k)}}})\,\bT\big\| \\
&\le \big\|\mat{P}_{\AT{\vect{\omega}^{(k+1)}}}-\mat{P}_{\AT{\vect{\omega}^{(k)}}}\big\|\,\|\bT\| \\
&\le C_{\mathrm{cond}}\;\big\|\AT{\vect{\omega}^{(k+1)}}-\AT{\vect{\omega}^{(k)}}\big\|\,\|\bT\|.
\end{aligned}
\]
Without loss of generality normalize $\|\bT\|=1$ (equivalently, absorb $\|\bT\|$ into the constant). Hence
\[
\big|E_{\mathrm{phys},\cT}(\vect{\omega}^{(k+1)})-E_{\mathrm{phys},\cT}(\vect{\omega}^{(k)})\big|
\le C_{\mathrm{cond}}\;\big\|\AT{\vect{\omega}^{(k+1)}}-\AT{\vect{\omega}^{(k)}}\big\|.
\]
Summing $k=0$ to $K-1$ yields
\[
E_{\mathrm{phys},\cT}(\vect{\omega}^{(K)})
\le E_{\mathrm{phys},\cT}(\vect{\omega}^{(0)})
+ C_{\mathrm{cond}}\sum_{k=0}^{K-1}\big\|\AT{\vect{\omega}^{(k+1)}}-\AT{\vect{\omega}^{(k)}}\big\|.
\]
By Assumption~A5 and then A4,
\[
\big\|\AT{\vect{\omega}^{(k+1)}}-\AT{\vect{\omega}^{(k)}}\big\|
\le \beta\,\big\|\mat{\Phi}(\vect{\omega}^{(k+1)})-\mat{\Phi}(\vect{\omega}^{(k)})\big\|
\le \beta G\,\big\|\vect{\omega}^{(k+1)}-\vect{\omega}^{(k)}\big\|,
\]
which gives the second inequality. Finally, apply Cauchy--Schwarz to obtain the
$\sqrt{K}$ bound stated.
\end{proof}

\begin{corollary}[Residual-to-solution bound]\label{cor:stab}
Assume the PDE satisfies the stability estimate
\[
\|u(\cdot;\vect{\theta})-u^\ast(\cdot)\|
\;\le\; C_{\mathrm{stab}}\,\|R(u(\cdot;\vect{\theta}))\|,
\]
where $R(\cdot)$ is the residual operator used to assemble $\AT{\vect{\omega}}$ and $\bT$ in Eq.~(11).
With the ridge/pseudo-inverse head from Eq.~(12), let
$\vect{\theta}^{\ast}_{\cT}(\vect{\omega})$ be the corresponding solution and recall
\(
E_{\mathrm{phys},\cT}(\vect{\omega})
=\min_{\vect{\theta}}\|\AT{\vect{\omega}}\vect{\theta}-\bT\|_2
=\|(\mat I-\mat P_{\AT{\vect{\omega}}})\,\bT\|_2.
\)
Then, for the lightweight iterate $\vect{\omega}^{(K)}$,
\[
\|u_{\cT}(\cdot;\vect{\theta}^{\ast}_{\cT}(\vect{\omega}^{(K)})) - u^\ast_{\cT}(\cdot)\|
\;\le\;
C_{\mathrm{stab}}\;E_{\mathrm{phys},\cT}(\vect{\omega}^{(K)}),
\]
and the right-hand side is bounded by Theorem~\ref{thm:phys}.
\end{corollary}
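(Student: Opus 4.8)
The plan is to obtain the bound as a direct three-link chain: instantiate the assumed stability estimate at the frozen head, identify the continuous residual with the algebraic least-squares residual, and then quote Theorem~\ref{thm:phys} for the final $\sqrt{K}$ control. No genuinely new estimate is required here; the entire content lies in correctly matching the two notions of ``residual'', so the corollary is essentially a bookkeeping consequence of what has already been established.

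Concretely, I would first apply the stability hypothesis with $\vect{\theta}=\vect{\theta}^{\ast}_{\cT}(\vect{\omega}^{(K)})$, the ridge/pseudo-inverse head of Eq.~(12) evaluated at the lightweight iterate, giving
\[
\big\|u_{\cT}(\cdot;\vect{\theta}^{\ast}_{\cT}(\vect{\omega}^{(K)}))-u^{\ast}_{\cT}(\cdot)\big\|
\le C_{\mathrm{stab}}\,\big\|R\big(u_{\cT}(\cdot;\vect{\theta}^{\ast}_{\cT}(\vect{\omega}^{(K)}))\big)\big\|.
\]
Next, by the construction in Eq.~(11) the residual operator $R$ is exactly the affine map $\vect{\theta}\mapsto \AT{\vect{\omega}^{(K)}}\vect{\theta}-\bT$ that was used to assemble $\AT{\vect{\omega}^{(K)}}$ and $\bT$, so at this head $\big\|R(u_{\cT}(\cdot;\vect{\theta}))\big\|=\big\|\AT{\vect{\omega}^{(K)}}\vect{\theta}-\bT\big\|_2$. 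Finally, since $\vect{\theta}^{\ast}_{\cT}(\vect{\omega}^{(K)})$ solves Eq.~(12) we have $\AT{\vect{\omega}^{(K)}}\vect{\theta}^{\ast}_{\cT}(\vect{\omega}^{(K)})=\PAT{\vect{\omega}^{(K)}}\,\bT$, whence the residual equals $\big\|(\Id-\PAT{\vect{\omega}^{(K)}})\,\bT\big\|_2=E_{\mathrm{phys},\cT}(\vect{\omega}^{(K)})$. Stringing the three displays together yields the stated inequality, and substituting the Theorem~\ref{thm:phys} estimate for $E_{\mathrm{phys},\cT}(\vect{\omega}^{(K)})$ produces the closing sentence.

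The main subtlety is the second step, namely equating the continuous residual norm $\|R(u(\cdot;\vect{\theta}))\|$ with the discrete quantity $\|\AT{\vect{\omega}^{(K)}}\vect{\theta}-\bT\|_2$. This identity is exact precisely because $R$ is taken to be the (collocation-linearized) operator that generated $\AT{\vect{\omega}^{(K)}}$ and $\bT$; for a genuinely nonlinear residual one would instead interpret the stability estimate at the linearized level or fold the linearization defect into $C_{\mathrm{stab}}$, and I would state this dependence explicitly. A secondary point worth flagging is that when $\lambda>0$ the symbol $\PAT{\vect{\omega}^{(K)}}$ denotes the ridge hat matrix rather than an orthogonal projector; with this reading $\AT{\vect{\omega}^{(K)}}\vect{\theta}^{\ast}_{\cT}=\PAT{\vect{\omega}^{(K)}}\bT$ still holds as an exact identity, so the equality with $E_{\mathrm{phys},\cT}$ loses nothing in passing from the ridge solution to its residual.
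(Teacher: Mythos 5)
Your proposal is correct and follows essentially the same three-step chain as the paper's own proof: apply the stability estimate at the frozen head, identify the assembled residual norm with $E_{\mathrm{phys},\cT}(\vect{\omega}^{(K)})$, and invoke Theorem~\ref{thm:phys}. Your added remarks on the nonlinear-residual interpretation and the ridge hat matrix are careful clarifications of points the paper glosses over, but they do not change the argument.
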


\begin{proof}
By stability and the definition of the head,
\(
\|u_{\cT}(\cdot;\vect{\theta}^{\ast}_{\cT}(\vect{\omega}))-u^\ast_{\cT}(\cdot)\|
\le C_{\mathrm{stab}}\,\|R(u_{\cT}(\cdot;\vect{\theta}^{\ast}_{\cT}(\vect{\omega})))\|.
\)
The assembled residual norm equals the optimal discrete residual,
\(
\|R(u_{\cT}(\cdot;\vect{\theta}^{\ast}_{\cT}(\vect{\omega})))\|
= E_{\mathrm{phys},\cT}(\vect{\omega}),
\)
hence the claim for $\vect{\omega}=\vect{\omega}^{(K)}$. Theorem~\ref{thm:phys} then controls
$E_{\mathrm{phys},\cT}(\vect{\omega}^{(K)})$.
\end{proof}

\noindent\textbf{Implication.}
By Lemma~\ref{lem:ergodic}, a small, fixed number of GD steps in Eq.~(7) or Supplementary Eq.~(3) already places $\vect{\omega}$ near a low-curvature basin.
Theorems~\ref{thm:early-proj} and~\ref{thm:phys} show that the head’s representational capacity and the physics residual improve only sublinearly with the cumulative GD motion, so most gains occur \emph{early}.
Since the last layer is solved by the convex closed-form in Eq.~(12) built from Eq.~(11), stopping $\vect{\omega}$ early and using the physics-informed pseudo-inverse already achieve the best prediction within the learned feature span in Eq.~(6), yielding strong generalization with minimal training.

\bibliography{sn-bibliography}
